\documentclass[mathpazo,online]{jml}



\usepackage{lipsum}
\usepackage[utf8]{inputenc}
\usepackage[T1]{fontenc}    
\usepackage{hyperref}       
\usepackage{url}            
\usepackage{booktabs}       
\usepackage{amsfonts}       
\usepackage{nicefrac}       
\usepackage{microtype}      
\usepackage{xcolor}         
\usepackage{graphicx} 
\usepackage{braket}
\usepackage{algorithm}
\usepackage{algpseudocode}
\usepackage{float}
\usepackage{amsmath,amsthm,amssymb}
\usepackage{bm}
\usepackage{verbatim}
\usepackage{subcaption} 
\usepackage{enumitem,kantlipsum}

\usepackage{subcaption}

\usepackage{xr}
\makeatletter
\newcommand*{\addFileDependency}[1]{
\typeout{(#1)}
\IfFileExists{#1}{}{\typeout{No file #1.}}
}\makeatother

\externaldocument{rpca_appendix}

\newcommand{\rank}{{\mathrm{rank}}}
\newcommand{\RE}{{\mathrm{RE}}}
\newcommand{\DC}{{\mathrm{DC}}}
\newcommand{\supp}{{\mathrm{supp}}}

\newtheorem{assumption}{Assumption}
\DeclareMathOperator*{\argmin}{arg\,min}






\hypersetup{
   colorlinks,
   linkcolor={red!50!black},
   citecolor={green},
   urlcolor={blue!80!black}
}

\theoremstyle{plain}

\theoremstyle{definition}

\usepackage{tikz}
\usetikzlibrary{matrix,chains,positioning,decorations.pathreplacing,arrows}
\usetikzlibrary{shapes,arrows}
\tikzset{
	font={\fontsize{9pt}{12}\selectfont}}

\ExplSyntaxOn

\bool_new:N \g_noteobserve

\NewDocumentCommand{\setnote}{}{
  \bool_gset_true:N \g_noteobserve
}

\NewDocumentCommand{\setobserve}{}{
  \bool_gset_false:N \g_noteobserve
}

\NewDocumentCommand{\nobs}{ o }{
  \IfValueT{#1}{
    \str_if_eq:noTF {note} {#1} {
      \bool_gset_true:N \g_noteobserve
    } {
      \str_if_eq:noTF {Note} {#1} {
        \bool_gset_true:N \g_noteobserve
      } {
        \bool_gset_false:N \g_noteobserve
      }
    }
  }
  \bool_if:nTF { \g_noteobserve } {
    \bool_gset_false:N \g_noteobserve
    note
  } {
    \bool_gset_true:N \g_noteobserve
    observe
  }
  \IfValueF{#1}{~}
}

\NewDocumentCommand{\Nobs}{ o }{
  \IfValueT{#1}{
    \str_if_eq:noTF {note} {#1} {
      \bool_gset_true:N \g_noteobserve
    } {
      \str_if_eq:noTF {Note} {#1} {
        \bool_gset_true:N \g_noteobserve
      } {
        \bool_gset_false:N \g_noteobserve
      }
    }
  }
  \bool_if:nTF { \g_noteobserve } {
    \bool_gset_false:N \g_noteobserve
    Note
  } {
    \bool_gset_true:N \g_noteobserve
    Observe
  }
  \IfValueF{#1}{~}
}

\int_new:N \g_furthermore

\NewDocumentCommand{\Moreover}{ o o }{
  \IfValueT{#1}{
    \str_case:nn {#1} {
      {Furthermore} {\int_set:Nn {\g_furthermore} {0}}
      {Moreover} {\int_set:Nn {\g_furthermore} {1}}
      {In~addition} {\int_set:Nn {\g_furthermore} {2}}
      {note} {\bool_gset_true:N \g_noteobserve}
      {observe} {\bool_gset_false:N \g_noteobserve}
    }
    \IfValueT{#2}{
      \str_case:nn {#2} {
        {Furthermore} {\int_set:Nn {\g_furthermore} {0}}
        {Moreover} {\int_set:Nn {\g_furthermore} {1}}
        {In~addition} {\int_set:Nn {\g_furthermore} {2}}
        {note} {\bool_gset_true:N \g_noteobserve}
        {observe} {\bool_gset_false:N \g_noteobserve}
      }
    }
  }
  \int_case:nn { \int_mod:nn {\g_furthermore} {3} } {
    { 0 } { Furthermore,~\nobs that}
    { 1 } { Moreover,~\nobs that}
    { 2 } { In~addition,~\nobs that}
  }
  \int_incr:N \g_furthermore
  \IfValueF{#1}{~}
}

\bool_new:N \g_hencetherefore

\NewDocumentCommand{\hence}{}{
  \bool_if:nTF { \g_hencetherefore } {
    \bool_gset_false:N \g_hencetherefore
    hence~
  } {
    \bool_gset_true:N \g_hencetherefore
    therefore~
  }
}

\NewDocumentCommand{\Hence}{}{
  \bool_if:nTF { \g_hencetherefore } {
    \bool_gset_false:N \g_hencetherefore
    Hence,~we~obtain~
  } {
    \bool_gset_true:N \g_hencetherefore
    Therefore,~we~obtain~
  }
}



\seq_new:N \g_cflist_loaded
\seq_new:N \g_cflist_pending

\NewDocumentCommand{\cfadd}{ m }
{
	\seq_if_in:NnF \g_cflist_loaded { #1 } {
		\seq_if_in:NnF \g_cflist_pending { #1 } {
			\seq_gput_right:Nn \g_cflist_pending { #1 }
		}
	}
}

\NewDocumentCommand{\cfconsiderloaded}{ m }{
	\seq_gput_right:Nn \g_cflist_loaded {#1}
}

\NewDocumentCommand{\cfremove}{ m }
{
	\seq_gremove_all:Nn \g_cflist_pending { #1 }
}

\NewDocumentCommand{\cfload}{ o }
{
	\seq_if_empty:NTF \g_cflist_pending {\unskip} {
		(cf.\ \cref{\seq_use:Nn \g_cflist_pending {,}})\IfValueTF{#1}{#1~}{\unskip}
		\seq_gconcat:NNN \g_cflist_loaded \g_cflist_loaded \g_cflist_pending
		\seq_gclear:N \g_cflist_pending
	}
}

\NewDocumentCommand{\cfclear} {} {
	\seq_gclear:N \g_cflist_loaded
	\seq_gclear:N \g_cflist_pending
}

\NewDocumentCommand{\cfout}{ o }
{
	\seq_if_empty:NTF \g_cflist_pending {\unskip} {
		(cf.\ \cref{\seq_use:Nn \g_cflist_pending {,}})\IfValueTF{#1}{#1~}{\unskip}
		\seq_gclear:N \g_cflist_pending
	}
}

\NewDocumentCommand{\ifnocf} { m } {
	\seq_if_empty:NT \g_cflist_pending { #1 }
}

\ExplSyntaxOff

\NewDocumentEnvironment{cproof}{m}
{\begin{proof}[Proof of \cref{#1}]}%
	{\noindent The proof of \cref{#1} is thus complete.
\end{proof}}

\NewDocumentEnvironment{cproof2}{m}
{\begin{proof}[Proof of \cref{#1}]}%
	{\noindent This completes the proof of \cref{#1}.
\end{proof}}

\begin{document}

\title{Transformed $\ell_1$ Regularizations for Robust Principal Component Analysis: Toward a Fine-Grained Understanding}

\author[1]{
Kun Zhao
	\thanks{
{\tt kun.zhao@utdallas.edu}.
	}
}
\author[2]{
Haoke Zhang
        \thanks{
{\tt zhkzhk203@gmail.com}.
        } 
}
\author[1]{
Jiayi Wang
	\thanks{ Corresponding author.
		{\tt jiayi.wang2@utdallas.edu}.
	}
}
\author[2]{
Yifei Lou
        \thanks{
{\tt yflou@unc.edu}.
        } 
}

\affil[1]{Department of Mathematical Sciences, The University of Texas at Dallas, 800 W. Campbell Rd, Richardson, TX 75080, USA}
\affil[2]{Department of Mathematics and School of Data Sciences and Society, The University of North Carolina at Chapel Hill, Chapel Hill 27599, NC, USA}

\begin{abstract}
Robust Principal Component Analysis (RPCA) aims to recover a low-rank structure from noisy, partially observed data that is also corrupted by sparse, potentially large-magnitude outliers. Traditional RPCA models rely on convex relaxations, such as nuclear norm and $\ell_1$ norm, to approximate the rank of a matrix and the $\ell_0$ functional (the number of non-zero elements) of another. In this work, we advocate a nonconvex regularization method, referred to as transformed $\ell_1$ (TL1), to improve both approximations. The rationale is that by varying the internal parameter of TL1, its behavior asymptotically approaches either $\ell_0$ or $\ell_1$. Since the rank is equal to the number of non-zero singular values and the nuclear norm is defined as their sum, applying TL1 to the singular values can approximate either the rank or the nuclear norm, depending on its internal parameter. We conduct a fine-grained theoretical analysis of statistical convergence rates, measured in the Frobenius norm, for both the low-rank and sparse components under general sampling schemes. These rates are comparable to those of the classical RPCA model based on the nuclear norm and $\ell_1$ norm. Moreover, we establish constant-order upper bounds on the estimated rank of the low-rank component and the cardinality of the sparse component in the regime where TL1 behaves like $\ell_0$, assuming that the respective matrices are exactly low-rank and exactly sparse. Extensive numerical experiments on synthetic data and real-world applications demonstrate that the proposed approach achieves higher accuracy than the classic convex model, especially under non-uniform sampling schemes. 

\end{abstract}

%
%
%
\keywordone{Robust principal component analysis,}
\keywordtwo{Transformed $\ell_1$ regularization, }
\keywordthree{General sampling distribution, }
\keywordfour{Low-rankness, Sparsity,}
\keywordfive{Non-asymptotic upper bound.}

\maketitle

\section{Introduction}
In numerous scientific and engineering disciplines, high-dimensional datasets often exhibit underlying low-dimensional structures governed by a limited number of intrinsic factors. Principal Component Analysis (PCA) is a foundational technique for uncovering such low-dimensional representations, enabling dimensionality reduction and feature extraction \cite{jolliffe2016principal, kambhatla1997dimension,dehoop2022costaccuracy}. However, traditional PCA is notoriously sensitive to outliers and corruptions \cite{hubert2009robust, jolliffe2016principal}: even a small fraction of grossly corrupted entries can significantly distort the estimated principal components. This vulnerability poses a major challenge in real-world scenarios, where corruptions are frequently sparse but can be large in magnitude. Early efforts to improve PCA focused on modifying the estimation of the covariance or correlation matrices \cite{croux2000principal,hubert2005robpca,rousseeuw2003robust} to reduce sensitivity to anomalous data points. A major breakthrough came with its reformulation as a matrix decomposition problem, in which the observed data matrix $M_0$ is formulated as the sum of a low-rank matrix $L_0$, representing the underlying structure, and a sparse component $S_0$, capturing anomalies or corruptions. 
This approach, known as Robust Principal Component Analysis (RPCA)\cite{candes2011robust}, 
extends classical PCA to handle outliers and sparse corruptions. 
Owing to its ability to recover meaningful structures from contaminated data, RPCA has gained 
broad popularity in statistical machine learning and has found wide applications in areas such 
as video surveillance
\cite{bouwmans2014robust, liu2015background,rezaei2017background,rodriguez2016incremental,zheng2025tensor}, face recognition \cite{ahmadi2023detection, he2011robust, xue2019side, wang2021multi}, anomaly detection \cite{agrawal2016adaptive, genel2024application, jin2020anomaly}, and image denosing \cite{yuan2014efficient,chen2023unsupervised, tu2023new}. It has inspired numerous algorithmic developments for solving RPCA, including methods based on Augmented Lagrange Multipliers (ALM) \cite{lin2010augmented,bhardwaj2016robust,sha2020fast}, Accelerated Proximal Gradient
(APG) \cite{ganesh2009fast,toh2010accelerated}, and Alternating Direction Method of Multipliers (ADMM) \cite{wang2014parallel,yang2018alternating,gao2024low}. 

The perspective of RPCA, popularized in \cite{candes2011robust}, led to the Principal Component Pursuit (PCP) framework: a convex optimization problem that minimizes a combination of the nuclear norm and the $\ell_1$ norm of the respective components, with guaranteed exact recovery in a noiseless setting under strong incoherence conditions on the low-rank matrix $L_0$. Numerous variants of PCP have been proposed to address more complex real-world scenarios, including stable PCP for noisy observations \cite{aravkin2014variational,yin2019stable,zhou2010stable}, block-based PCP \cite{liu2016block}, and local PCP \cite{wohlberg2012local,zhang2015analysis} for structured corruptions, along with adaptations designed to handle missing data. However, much of the existing theoretical analysis relies on strong assumptions, such as incoherence conditions on the low-rank matrix, uniformly distributed observed entries, and uniformly located non-zero entries in the sparse component \cite{candes2011robust,wright2009robust,xu2010robust}, which may be too restrictive to be realistic. 
For example, in video surveillance, video frames are stacked as columns of a data matrix, with the goal of separating the static background (i.e., low-rank structure) from moving objects (i.e., sparse components). Foreground objects, such as people or vehicles, typically appear in contiguous regions within each frame, resulting in clustered or grouped patterns rather than randomly or uniformly scattered outliers, thereby violating the assumptions made in earlier works.

Since missing data is ubiquitous in real-world applications \cite{austin2021missing,popa2021improved,enders2022applied,woods2024best}, there is a growing interest in studying a non-uniform observed model where each entry is observed independently with varying probabilities \cite{cai2016matrix,fang2018max,li2024pairwise,wang2024robust} and no specific assumptions on the support of the sparse component \cite{chen2011robust,cherapanamjeri2017nearly}. Chen et al.~\cite{chen2011robust} investigated robust matrix completion with uniformly distributed observations and made no assumptions on the distribution of columnwise corruptions, while Cherapanamjeri et al.~\cite{cherapanamjeri2017nearly} focused on arbitrary entrywise corruptions under a uniform sampling regime. Klopp et al.~\cite{klopp2017robust} extended this line of work on matrix recovery to a general sampling distribution with incomplete observations and derived non-asymptotic upper bounds for estimation errors measured by the Frobenius norm. However, they used the nuclear norm as a convex relaxation to the rank to enforce the low-rank structure, which has been empirically reported to overestimate the true rank \cite{wang2021matrix,zhao2025noisy}.  

Inspired by recent advances in using the Transformed $\ell_1$ (TL1) penalty in sparse recovery \cite{zhang2018minimization} and in low-rank matrix completion \cite{zhao2025noisy}, we propose a novel RPCA model that adapts TL1 to both the low-rank and sparse components. Computationally, we design an efficient algorithm to solve the proposed model based on the Alternating Direction Method of Multipliers (ADMM) \cite{boydPCPE11admm}. Theoretically, we derive the non-asymptotic upper error bounds for the estimated low-rank and sparse components. Specifically, in the absence of corruption and with appropriately selected hyperparameters, we show that our approach attains the minimax optimal rate up to logarithmic factors. 
We relax the assumptions made in \cite{klopp2017robust} by allowing observations to arise from a general sampling distribution under milder conditions, and we do not impose any structural patterns or distributional assumptions on the corruptions. These relaxed assumptions make the model more realistic and enhance the practical feasibility of robust component separation.  
Furthermore,  we demonstrate the advantage of TL1 regularizations in controlling rank and sparsity estimations. Specifically, with appropriate choices of hyperparameters in the proposed model, both the estimated rank and sparsity level can achieve constant-order accuracy relative to the true rank and cardinality, respectively. Experimentally, we conduct a comprehensive simulation study under various missing data scenarios with corruptions. Additionally, we apply the proposed model to a synthetic video and a real video dataset to illustrate its effectiveness in practical settings. 
In summary, our main \textbf{contributions} are four-fold: 
\begin{enumerate}[leftmargin=*]
 \item \textbf{Numerical algorithm:} We design an efficient ADMM scheme to solve the proposed TL1-regularized RPCA model (see Section \ref{algorithm}).
      \item \textbf{Fine-grained error bound analysis:} Our non-asymptotic upper error bounds for both estimated low-rank and sparse components across different sampling schemes are compared with existing literature under relaxed assumptions.
      We further demonstrate that the minimax optimal rates can be achieved (see Section \ref{sec:error_bounds}). 
    \item \textbf{Sparsity and rank estimation:} 
    We show that TL1 regularization effectively controls rank and sparsity estimation, with appropriate hyperparameters yielding estimates that match the true rank and cardinality up to a constant factor (see Section \ref{sec:card_bounds}). 
    \item \textbf{Extensive experiments:} We validate the recovery performance through extensive simulations under various scenarios, and demonstrate the model’s practical effectiveness on both synthetic and real video datasets (see Section \ref{sec:experiments}).
\end{enumerate}


\section{Proposed approach}
\subsection{Problem setup}\label{sec:setup}
Suppose an underlying matrix $M_0 \in \mathbb{R}^{m_1 \times m_2}$ can be decomposed as $M_0 = L_0 + S_0,$ where $L_0$ has a low rank and $S_0$ is sparse (i.e., only having a few non-zero elements). 
Given $N$ independent noisy observations $(T_i, Y_i)$ that satisfy the trace regression model in \cite{klopp2017robust}:
\begin{align}\label{trace regression model}
    Y_i=\text{tr}\big(T_i^\intercal M_0\big) + \sigma\xi_i = \langle T_i, M_0\rangle + \sigma\xi_i, \quad \text{for} \ i = 1, \dots, N,
\end{align}
the goal of RPCA is to estimate the matrix $M_0,$ and more specifically, to identify its low-rank component $L_0$ and sparse counterpart $S_0.$ 
In \eqref{trace regression model}, each matrix $T_i \in \mathbb{R}^{m_1 \times m_2}$ is an i.i.d. copy of a random indicator matrix with distribution $\Pi=(\pi_{kl})_{k,l=1}^{m_1, m_2}$ over the set:
\begin{equation*}
    \Gamma=\{e_k(m_1)e_l^\top(m_2), k\in[m_1], l\in [m_2]\}, 
\end{equation*}
where $\pi_{kl}$ is the probability that a particular sample is located at position $(k,l)$, $e_k(m_j)$ represents the $k$-th canonical basis vector in $\mathbb{R}^{m_j}$, with a 1 in the $k$-th entry and zeros elsewhere, and $[m_j]=\{1,\dots,m_j\}$ for $j=1,2$.  The term $\sigma\xi_i$ in \eqref{trace regression model} represents the noise, where $\xi_i$ are i.i.d.~zero-mean random variables with variance 1 
and $\sigma\geq 0$ denotes the standard deviation. 

Among these $N$ observations, we assume that $n$ of them are not influenced by $S_0$ and are referred to as \textit{uncorrupted}; the remaining $N-n$ observations are affected by both $L_0$ and $S_0,$ corresponding to the observed (non-zero) entries in $S_0.$ Based on whether an observation is corrupted, we partition the indices of the observations into two disjoint sets: $\Omega$ and $\tilde{\Omega}$. 
The set $\Omega$ contains the indices of observed, uncorrupted entries from $L_0$, where the corresponding entries in $S_0$ are zero. The set $\tilde{\Omega}$ contains the indices of the observed non-zero entries in $S_0$.
We define the index sets: $\tilde{\mathcal{I}} \subseteq [m_1] \times [m_2]$ as the support of $S_0$ (i.e., the set of indices where $S_0$ is non-zero) and $\mathcal{I}$ is the complement of $\tilde{\mathcal{I}}$.  Then, we have $|\Omega| = n$, $|\tilde{\Omega}| = |\tilde{\mathcal{I}}| = N - n$. We set $\beta = N/n$. 


\textbf{Notations:}
We introduce the following notations, which will be used throughout this paper. For any index set $\mathcal{I}$, we denote its cardinality by $|\mathcal{I}|$. For a matrix $A \in \mathbb{R}^{m_1 \times m_2}$, let $m = \min(m_1, m_2) = (m_1 \wedge m_2)$, $M = \max(m_1, m_2) = (m_1 \vee m_2)$, $d = m_1 + m_2$. 
The trace of $A$ is denoted by $\text{tr}(A)$. 
Additionally, we define several matrix norms\footnote{Note that the $\ell_0$ ``norm,'' including $\|A\|_0,$ is not a norm in the strict mathematical sense.}: 
$\|A\|_\infty = \max_{k,l} |A_{kl}|$, $\|A\|_F = \sqrt{{\sum_{k,l}}A^2_{kl}}$, $\|A\|_1 = \sum_{k,l} |A_{kl}|$, $\|A\|_0 = \sum_{i,j} I\{A_{kl} \neq 0\},$
where $A_{kl}$ denotes the value of $(k,l)$-th entry of $A$ and $I\{\cdot\}$ is the indicator function. Denote $\sigma_j(A)$ as the $j$th singular values of $A$  in a descending order, then the nuclear norm is defined as $\|A\|_* = \sum_{j=1}^{m} \sigma_j(A)$ and the spectral norm $\|A\| = \sigma_1(A)$. Given the sampling distribution $\Pi$, 
 we define 
$L_2(\Pi)$ norm of $A$  by $\|A\|^2_{L_2(\Pi)}=\mathbb{E}(\langle A, T \rangle^2)=\sum_{k=1}^{m_1}\sum_{l=1}^{m_2} \pi_{kl}A^2_{kl}.$  
Finally, we introduce the following asymptotic notations for theoretical analysis. For any two non-negative sequences $\{a_n\}$ and $\{b_n\}$, we say $a_n = \mathcal{O}(b_n)$ if there exists a constant $C$ such that  $a_n \leq Cb_n$ and $a_n = \mathcal{O}_p(b_n)$ if there exists a constant $C'$ such that  $a_n \leq C'b_n$ with high probability; $a_n = {\scriptstyle \mathcal{O}}(b_n)$ if there is a constant $C''$ such that $a_n < C''b_n$. We denote $a_n \asymp b_n$ if $a_n = \mathcal{O}(b_n)$ and $b_n = \mathcal{O}(a_n)$.


\subsection{Problem formulation}
\label{sec:model}

We define the TL1 regularization on a matrix $A\in\mathbb R^{m_1\times m_2}$ and provide its asymptotical behaviors, \begin{equation}\label{eq:TL1onL}
     \Phi_{a}(A) = \sum_{j=1}^m \frac{(a+1)\sigma_j(A)}{a+\sigma_j(A)}, \  \text{ with } \    \underset{a \rightarrow 0+}{\lim} \Phi_{a}(A) = \rank(A), \; \underset{a \rightarrow \infty}{\lim} \Phi_{a}(A) = \|A\|_*,
\end{equation}
where $a>0$ is a hyperparameter. In addition, the TL1 regularization can be applied to each matrix element as an interpolation between the $\ell_0$ and $\ell_1$ norms, 
\begin{equation}\label{eq:TL1onS}
    \phi_{a}(A) = \sum_{i,j} \frac{(a+1)|A_{ij}|}{a+|A_{ij}|},\  \text{ with } \     \underset{a \rightarrow 0+}{\lim} \phi_{a}(A) = \|A\|_0, \; \underset{a \rightarrow \infty}{\lim} \phi_{a}(A) = \|A\|_1.
\end{equation}

The TL1 penalty has been studied in the context of low-rank matrix completion \cite{zhang2015transformed, zhao2025noisy} and sparse signal recovery \cite{zhang2018minimization}. However, the incorporation of TL1 penalties for both low-rank and sparse components in the RPCA framework has not been previously investigated.


We propose the use of TL1 regularization for recovering a low-rank component $\hat L$ and a sparse component $\hat S$ from observed independent pairs $(T_i, Y_i)$ for $i = 1,\dots, N$ 
:
\begin{align}\label{est:hat_LS}
    (\hat{L}, \hat{S}) = \underset{\|L\|_\infty \leq \zeta, \|S\|_\infty \leq \zeta}{\arg\min} \bigg\{ \frac{1}{N} \sum_{i=1}^{N} (Y_i - \langle T_i, L+S \rangle)^2 + \lambda_1 \Phi_{a_1} (L) + \lambda_2 \phi_{a_2} (S)\bigg\}, 
\end{align}
where $a_1,a_2$ are positive hyperparameters for $\Phi_{a_1}(\cdot),\phi_{a_2}(\cdot),$ respectively, $\lambda_1,\lambda_2$ are positive weighting parameters, and $\zeta>0$ is regarded as an upper bound on the entrywise magnitude of both estimators. In practice, $\zeta$ serves as a form of prior knowledge. For example, when separating a video frame into background and moving objects, the matrix values corresponding to pixel intensities typically fall within $[0, 1]$, where we can set $\zeta = 1$. Owing to the properties in \eqref{eq:TL1onL} and \eqref{eq:TL1onS}, the TL1 functions
$\Phi_{a_1}(\cdot)$ and $\phi_{a_2}(\cdot)$  can effectively promote low-rank and sparse structures  by appropriately tuning the parameters $a_1$ and $a_2$, respectively; please refer to our theoretical analysis in Section \ref{sec:card_bounds}. 

\subsection{Numerical algorithm}
\label{algorithm}
We apply the Alternating Direction Method of Multiplier (ADMM) \cite{boydPCPE11admm} for solving the proposed TL1-regularized model \eqref{est:hat_LS} due to its simplicity and efficiency. 
To this end, we introduce two auxiliary matrices, $J, R\in \mathbb{R}^{m_1 \times m_2},$ and rewrite the optimization problem \eqref{est:hat_LS} into an equivalent form, 
\begin{equation}
\begin{array}{l}
\displaystyle \min_{L,S,J,R} \frac{1}{N} \| Y - T \circ (J+R) \|_F^2 + \lambda_1\Phi_{a_1}  (L) + \lambda_2 \phi_{a_2} (S) \\
\text{subject to }\quad  J = L,\quad R = S,\quad \| J \|_{\infty} \leq \zeta,\quad \| R \|_{\infty} \leq \zeta,
\end{array}
\label{eq:full_optimization}
\end{equation}
where  \( T = \sum_{i=1}^{N} T_i \) and the symbol \( \circ \) denotes the elementwise Hadamard product. 
The augmented Lagrangian function corresponding to \eqref{eq:full_optimization} can be written as  
\begin{equation}\label{eq:augLagrangian}
    \begin{aligned}
    \mathcal{L}(L, S, J,R;B,D) 
=&  \frac{1}{N} \|Y - T \circ(J + R)\|_F^2 + \lambda_1\Phi_{a_1}  (L) + \lambda_2 \phi_{a_2} (S)\\
 & \qquad + \frac{\rho_1}{2} \|L - J + B\|_F^2 + \frac{\rho_2}{2} \|S - R + D\|_F^2,
\end{aligned}
\end{equation}
where \( B, D \in \mathbb{R}^{m_1 \times m_2} \) are dual variables and \( \rho_1, \rho_2 \) are positive parameters. The ADMM scheme involves iteratively minimizing the augmented Lagrangian \eqref{eq:augLagrangian} with respect to one variable at a time while keeping the rest fixed. 

Specifically, the $L$-subproblem is equivalent to
\begin{equation}\label{eq:Lsub}
 L^{k+1} = \argmin_{L} \mathcal{L}(L,S^k, J^k,R^k;B^k,D^k) =\argmin_{L}  \lambda_1 \Phi_{a_1} (L)
 + \frac{\rho_1}{2} \|L - J^k + B^k\|_F^2 ,
\end{equation}
with a closed-form solution $
    L^{k+1} = U\mbox{diag}\left(\{\text{prox}^{\text{TL1}}_{a_1}(\sigma_k, \lambda_1/\rho_1)\}_{1\leq k\leq m} \right)V^\intercal, 
$ 
where the Singular Value Decomposition (SVD) of the matrix $J^{k} -B^k = U\Sigma V^\intercal$, the diagonal matrix $\Sigma$ has elements $\sigma_k$ for $1\leq k\leq m$ with  $m = \min(m_1, m_2)$, and the proximal operator for the TL1 regularization \cite{zhang2015transformed} is defined as
\begin{align}
     \text{prox}^{\text{TL1}}_{a}(x, \mu) := & \argmin_{z\in\mathbb R}\left\{\mu\frac{(a+1)z}{a+z} + \frac 1 2 (z-x)^2\right\}\notag\\
    =& \mathrm{sign}(x) \left\{ \frac{2}{3}(a+|x|)\cos(\frac{\varphi(x)}{3}) - \frac{2a}{3}+\frac{|x|}{3} \right\},\label{eq:prox-TL1}
\end{align}
with $\varphi(x) = \arccos(1- {27\mu a(1+a)}/[{2(a+|x|)^3}]).$ 
 
The $S$-subproblem can be formulated by
\begin{equation} \label{eq:Ssub}
 S^{k+1} = \argmin_{S} \mathcal{L}(L^{k+1},S, J^k,R^k;B^k,D^k)=\argmin_{S} \lambda_2 \phi_{\alpha_2}(S) 
 + \frac{\rho_2}{2} \|S - R^k + D^k\|_F^2 ,
\end{equation}
which can be updated via
$
S^{k+1}=\text{prox}^{\text{TL1}}_{a_2} \left( R^k-D^k, \lambda_2/\rho_2 \right), 
$ 
with $\text{prox}^{\text{TL1}}_{a}$ defined in \eqref{eq:prox-TL1} and applied to each element of the matrix $R^k-D^k$ componentwise.

The $J$-subproblem can be written as
\begin{eqnarray}
J^{k+1} &=& \argmin_{\| J \|_{\infty} \leq \zeta}  \mathcal{L}(L^{k+1},S^{k+1}, J,R^k;B^k,D^k)\notag\\
\label{eq:Jsub2}
&=
&\argmin_{\| J \|_{\infty} \leq \zeta}   \frac{1}{N} \|Y - T \circ(J + R^k)\|_F^2 
 + \frac{\rho_1}{2} \|L^{k+1} - J + B^k\|_F^2 .
\end{eqnarray}
Ignoring the constraint of $\| J \|_{\infty} \leq \zeta$, we take the derivative of the objective function in \eqref{eq:Jsub2} with respect to $J$ and set it to zero, thus leading to the optimal solution
\begin{equation}
J^{k+\frac 1 2} := \left( \frac{2}{N} T \circ (Y-R^k) + \rho_1 (L^{k+1}+B^k) \right) \oslash \left( \frac{2}{N} {T} + \rho_1 I_d\right),
\end{equation}
where $\oslash$ denotes the elementwise division and $I_d$ denotes the identity matrix.  Then we project the solution to the constraint $[-\zeta, \zeta]$ by
$
J^{k+1} = \min \left\{ \max\Big\{ J^{k+\frac{1}{2}}, \zeta \Big\}, -\zeta \right\},
$ 
where $\min$ and $\max$ are conducted elementwise. 

Similarly,  the $R$-subproblem has a closed-form solution given by
\begin{equation}
R^{k+1} :=  \min \left\{ \max\Big\{ \big( \frac{2}{N} T \circ (Y-J^{k+1}) + \rho_2 (S^{k+1}+D^k) \big) \oslash \big( \frac{2}{N} {T} + \rho_2 I_d\big), \zeta \Big\}, -\zeta \right\}.
\end{equation}

Note that a good initial value is important for finding a desirable pair of matrices when minimizing \eqref{eq:obj}. We choose the classic convex model \cite{candes2011robust} that combines the nuclear norm and the $\ell_1$ norm, referred to as L1 for conciseness, and use its output as the initial value for minimizing the TL1-regularized model \eqref{est:hat_LS}. We implement the L1 model by replacing the proximal operator $\text{prox}^{\text{TL1}}_{a}$ defined in \eqref{eq:prox-TL1} by the soft shrinkage operator \cite{boydPCPE11admm}. Define the objective function in \eqref{est:hat_LS} by 
\begin{equation}\label{eq:obj}
   Q(L,S):=\frac{1}{N} \sum_{i=1}^{N} (Y_i - \langle T_i, L+S \rangle)^2 + \lambda_1 \Phi_{a_1} (L) + \lambda_2 \phi_{a_2} (S).
\end{equation}
The stopping criteria for the TL1 method are $|(f_{k+1} - f_k)/f_k| < 10^{-3}$ with $f_k=Q(L^k,S^k)$ and a maximum of 1000 iterations. 

We summarize the ADMM scheme for minimizing \eqref{est:hat_LS} in Algorithm \ref{alg:TL1-TL1}. Note that the main computational cost of Algorithm \ref{alg:TL1-TL1} lies in the SVD, which is $O(mm_1m_2)$ and is the same as that of the L1 approach.



\begin{algorithm}
\caption{TL1-regularized  RPCA via ADMM} \label{alg:TL1-TL1}
\begin{algorithmic}[1]
\State Input: $Y \in \mathbb{R}^{m_1 \times m_2}$, $T \in \mathbb{R}^{m_1 \times m_2}$
\State Set parameters: $a_1$,$a_2, \lambda_1, \lambda_2, \zeta, \rho_1, \rho_2 \in (0,\infty)$ 
\State Initialize $(J^0,R^0)$ is obtained by the L1 model, $B^0 =D^0 =\mathbf{0}_{m_1 \times m_2}$, $k = 0$
\While{stopping criteria not satisfied}
    \State $L^{k+1} \gets U\mbox{diag}\left(\{\text{prox}^{\text{TL1}}_{a_1}(\sigma_k, \lambda_1/\rho_1)\}_{1\leq k\leq m} \right)V^\intercal$ with $J^k-B^k =  U \text{diag}(\{\sigma_k\}_k) V^T$.
    
    \State $S^{k+1} \gets \text{prox}^{\text{TL1}}_{a_2} \left( R^k-D^k, \lambda_2/\rho_2 \right).$
    
    \State $J^{k+1} \gets  \min \left\{ \max\Big\{ \left( \frac{2}{N} T \circ (Y-R^k) + \rho_1 (L^{k+1}+B^k) \right) \oslash \left( \frac{2}{N} {T} + \rho_1 I_d\right), \zeta \Big\}, -\zeta \right\}.$
    \State $R^{k+1} \gets \min \left\{ \max\Big\{ \big( \frac{2}{N} T \circ (Y-J^{k+1}) + \rho_2 (S^{k+1}+D^k) \big) \oslash \big( \frac{2}{N} {T} + \rho_2 I_d\big), \zeta \Big\}, -\zeta \right\}.$
    \State $B^{k+1} \gets B^k +  (L^{k+1} - J^{k+1}).$
    \State $D^{k+1} \gets B^k +  (S^{k+1} - R^{k+1}).$
    \State $k \gets k + 1.$
\EndWhile

\State Output: $\hat L = L^k, \hat S = S^k$
\end{algorithmic}
\end{algorithm}

\begin{remark}
    Introducing two auxiliary variables to decouple the problem into low-rank and sparse components yields a multi-block ADMM formulation, which is known to potentially lack convergence guarantees (see, e.g., \cite{chen2016direct}). Accordingly, we do not pursue a convergence proof and instead focus on establishing theoretical error bounds. While a modified ADMM scheme with convergence guarantees (e.g., \cite{chen2017efficient}) could be considered, prior studies have reported that such variants are often less efficient in practice. A systematic study of this tradeoff between convergence guarantees and empirical performance is left for future work.
\end{remark}

\section{Theoretical properties}

In this section, we investigate the theoretical properties of the estimators $\hat{L}$ and $\hat{S}$ obtained from the proposed model \eqref{est:hat_LS}.  We begin with assumptions about the sampling scheme and noise distribution. 
Recall the definitions of $\mathcal{I}$ and $\tilde {\mathcal{I}}$  in Section \ref{sec:setup}, we divide the matrices $T_i$ into two sets accordingly, 
\[
\Gamma'=\{e_k(m_1)e_l^\top(m_2),\; (k,l)\in \tilde{\mathcal{I}}\} \ \text{and} \  \Gamma''=\{e_k(m_1)e_l^\top(m_2),\;(k,l)\in\mathcal{I}\}.
\]

Unlike some of the existing literature \cite{candes2011robust,xu2010robust,zhou2010stable,zhou2011godec} that assumes corrupted entries follow a uniform sampling distribution, we follow the works \cite{chen2011robust, chen2015matrix, klopp2014noisy} to avoid imposing strict distributional assumptions on the support set $\tilde{\mathcal I}$.  
This choice is also motivated by practical considerations: in real-world applications such as background subtraction and anomaly detection, corruptions often exhibit spatial and temporal structure rather than uniform randomness. For instance, in our experiment (see Section \ref{sec:realdata}), moving objects tend to appear in specific regions, rendering the uniform corruption assumption unrealistic. 
Since the unobserved entries of $S_0$ are not identifiable, we restrict estimation to the support $\tilde{\mathcal{I}}$, effectively treating all unobserved entries of $S_0$ are zero; see, e.g., \cite{chen2011robust,klopp2017robust}. 


We impose mild assumptions about the sampling distribution on the set $\mathcal I$, which are commonly used in the literature \cite{klopp2014noisy, klopp2017robust, zhao2025noisy}. Define $C_l = \sum_{k=1}^{m_1} \pi_{kl}$ and $R_k = \sum_{l=1}^{m_2} \pi_{kl}$ as the probabilities that an observation appears in the $l$-th column and the $k$-th row, respectively,
for $k\in [m_1]$ and $l\in[m_2]$. 

By the definitions along with the constraints $\sum_{l=1}^{m_2} C_l = 1$ and $\sum_{k=1}^{m_1} R_k = 1$, we have $\underset{l}{\max}\; C_l \geq 1/m_2$ and $\underset{k}{\max} \; R_k \geq 1/m_1$,  implying that $\underset{k,l}{\max}(R_k, C_l) \geq 1/m$. 
\begin{assumption}\label{assump1}
   There exists a constant $G \geq 1$ such that for any $(k,l) \in \mathcal{I},\; \max_{k,l}(R_k, C_l) \leq G/m$.
\end{assumption}

\begin{assumption}\label{assump2}
    There exists a constant $\nu \geq 1$ such that $1/(\nu |\mathcal{I}|) \leq \pi_{kl} \leq \nu/|\mathcal{I}|.$
\end{assumption}

\begin{assumption}\label{assump3}
    There exists a positive constant $c_1$ such that $\max_{i\in \Omega}\mathbb{E} [\exp (|\xi_i| / c_1)] \leq e$, where $\xi_i$ are sub-exponential noise variables  
    and 
    $e$ is the base of the natural logarithm. 
\end{assumption}

Assumption \ref{assump1} ensures that no individual row or column in the index set $\mathcal{I}$
is sampled with 
high probability and a larger value of $G$
reflects a greater imbalance in the sampling distribution, resulting in a more non-uniform sampling scheme over the uncorrupted regions of the low-rank component. Assumption \ref{assump2} implies that $$1/(\nu |\mathcal{I}|) \|A_{\mathcal{I}}\|_F^2 \leq \|A_{\mathcal{I}}\|_{L_2(\Pi)}^2 \leq \nu/|\mathcal{I}| \|A_{\mathcal{I}}\|_F^2.$$ For a uniform sampling distribution, both $G$ and $\nu$ are taken as 1. Assumption \ref{assump3} is a mild assumption on the noise.
It is worth noting that when the underlying matrix is fully observed, i.e., all entries are available without missingness, Assumptions~\ref{assump1}--\ref{assump3} are not required.

In what follows, Section \ref{sec:error_bounds} establishes upper bounds on the estimation errors for $\hat{L}$ and $\hat{S}$. 
 In Section \ref{sec:card_bounds}, we study the low-rankness and sparsity of $\hat{L}$ and $\hat{S},$ controlled within constant orders by varying the parameters $a_1$ and $a_2,$ respectively. Please refer to Appendix \ref{append:theories} for proof details.

\subsection{Error bound analysis} \label{sec:error_bounds}

We first present in Theorem \ref{thm:gen_upp_S} an error bound for the estimated sparse component, assuming the true matrix $S_0$ is exactly sparse.
\begin{theorem}\label{thm:gen_upp_S}
    Suppose Assumptions \ref{assump1} - \ref{assump3} hold,   $S_0 \in \mathbb{R}^{m_1 \times m_2}$ is exactly sparse, i.e., $\|S_0\|_0 \leq s_0$ for a small integer $s_0$, and $\|S_0\|_\infty \leq \zeta$ with the same constant $\zeta$ in \eqref{est:hat_LS}. 
    Take $\lambda_2^{-1} = \mathcal{O}(\{(\sigma \vee \zeta)\log d/N\}^{-1})$, then for any $a_2 >0$, there exit two positive constants $C_1$ and $C_2$ depending on $c_1$ such that the estimator $\hat S$ from \eqref{est:hat_LS}   satisfies
    \begin{align}\label{general_bound_S}
        \frac{\|\hat{S} - S_0\|_F^2}{m_1m_2} \leq C_1 s_0(\sigma\vee\zeta)^2 \frac{\log d}{m_1m_2} + C_2 \min\left\{ \frac{\lambda_2N}{m_1m_2}\phi_{a_2}(S_0),  \frac{s_0\lambda_2^2N^2 }{m_1m_2} \left( \frac{a_2+1}{a_2}\right)^2  \right\} ,
    \end{align}
    with probability at least $1-2/d$.
    Moreover, if we take 
    $\lambda_2 \asymp (\sigma \vee \zeta)(\log d)/{N}$, then for any $a_2 >0$, there exists a positive constant $C_3$ depending on $c_1$ such that the following inequality,
    \begin{align}\label{exact_bound_s}
         \frac{\|\hat{S} - S_0\|_F^2}{m_1m_2} \leq C_3 s_0(\sigma\vee\zeta)^2 \frac{\log d}{m_1m_2},
    \end{align}
    holds with probability at least $1-2/d$.
\end{theorem}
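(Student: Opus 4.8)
\textbf{Proof proposal for Theorem~\ref{thm:gen_upp_S}.}

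The plan is to treat the sparse estimation problem by exploiting the structure of the optimization~\eqref{est:hat_LS} and, crucially, the restriction of estimation to the support $\tilde{\mathcal{I}}$ of $S_0$. First I would set up the basic inequality: since $(\hat L, \hat S)$ is the minimizer, plugging $(L_0, S_0)$ into the objective $Q$ gives $Q(\hat L, \hat S) \le Q(L_0, S_0)$, which after expanding the quadratic data-fidelity term produces a bound relating the ``empirical quadratic form'' $\frac{1}{N}\sum_i \langle T_i, \Delta_L + \Delta_S\rangle^2$ (with $\Delta_L = \hat L - L_0$, $\Delta_S = \hat S - S_0$) to the stochastic term $\frac{2\sigma}{N}\sum_i \xi_i \langle T_i, \Delta_L + \Delta_S\rangle$ plus the penalty differences $\lambda_1(\Phi_{a_1}(L_0) - \Phi_{a_1}(\hat L)) + \lambda_2(\phi_{a_2}(S_0) - \phi_{a_2}(\hat S))$. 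Since $S_0$ is only identifiable on $\tilde{\mathcal{I}}$ and the observations partition into $\Omega$ (uncorrupted, indices in $\mathcal{I}$) and $\tilde\Omega$ (indices in $\tilde{\mathcal{I}}$), the samples $T_i$ with $i \in \tilde\Omega$ are exactly in bijection with the support entries, so the data-fidelity contribution from $\tilde\Omega$ decouples entrywise on $\tilde{\mathcal{I}}$. This is what lets the sparse error be controlled independently of the low-rank machinery.

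Next I would control the stochastic term. The key quantity is $\|\frac{1}{N}\sum_i \xi_i T_i\|_\infty$ (restricted appropriately), which by Assumption~\ref{assump3} (sub-exponential noise) and a Bernstein-type inequality for the maximum over the $d$-ish many entries is $\mathcal{O}((\sigma\vee\zeta)\log d / N)$ with probability at least $1 - 2/d$ — this is precisely why the threshold $\lambda_2 \asymp (\sigma\vee\zeta)(\log d)/N$ is chosen, so that $\lambda_2$ dominates the noise level. On this good event, a standard ``cone'' argument applies: writing $\|\Delta_S\|_1 = \|(\Delta_S)_{\tilde{\mathcal{I}}}\|_1$ (support restriction), the penalty $\phi_{a_2}$ satisfies the two-sided bound $\frac{a_2+1}{a_2+\zeta}|x| \le \phi_{a_2}(x) \le \frac{a_2+1}{a_2}|x|$ for $|x|\le\zeta$, i.e. up to the constant factors $\frac{a_2+1}{a_2}$ and $\frac{a_2+1}{a_2+\zeta}$ it behaves like the $\ell_1$ norm, and moreover $\phi_{a_2}$ is concave in $|x|$ so a subgradient inequality gives $\phi_{a_2}(S_0) - \phi_{a_2}(\hat S) \le \frac{a_2+1}{a_2}\|(\Delta_S)_{\tilde{\mathcal{I}}}\|_1 - (\text{something})\|(\Delta_S)_{\tilde{\mathcal{I}}^c}\|_1$-type control; combined with sparsity $\|S_0\|_0 \le s_0$ and Cauchy–Schwarz, $\|(\Delta_S)_{\tilde{\mathcal{I}}}\|_1 \le \sqrt{s_0}\,\|\Delta_S\|_F$. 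One then uses a restricted-strong-convexity / restricted-eigenvalue property of the sampling operator on $\tilde{\mathcal{I}}$ (here Assumption~\ref{assump2} gives $\|A_{\tilde{\mathcal{I}}}\|_{L_2(\Pi)}^2 \asymp \|A_{\tilde{\mathcal{I}}}\|_F^2/|\tilde{\mathcal{I}}|$, and since each support entry is observed exactly once the empirical form essentially equals the entrywise sum) to convert the resulting inequality into a bound on $\|\Delta_S\|_F^2$. Balancing the terms yields $\|\hat S - S_0\|_F^2 / (m_1m_2) \lesssim s_0(\sigma\vee\zeta)^2\log d/(m_1m_2)$, which is~\eqref{exact_bound_s}. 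For the more general bound~\eqref{general_bound_S} I would additionally keep the ``slow-rate'' branch: from the basic inequality one directly gets $\|\Delta_S\|_F^2 \lesssim \lambda_2 N \phi_{a_2}(S_0)$ without invoking sparsity, and separately the ``fast-rate'' branch $\lesssim s_0\lambda_2^2 N^2 (\frac{a_2+1}{a_2})^2$ from the cone argument, hence the minimum of the two.

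The main obstacle I anticipate is handling the \emph{coupling} between $\Delta_L$ and $\Delta_S$ in the data-fidelity term $\frac{1}{N}\sum_i \langle T_i, \Delta_L + \Delta_S\rangle^2$: this does not split cleanly into an $L$-part and an $S$-part in general. The resolution hinges on the observation structure — for $i\in\tilde\Omega$ the term $\langle T_i, \Delta_S\rangle$ lives on $\tilde{\mathcal{I}}$ where we may, after controlling the low-rank piece separately (or absorbing its contribution on $\tilde{\mathcal{I}}$, which has only $N-n$ entries, into the error budget), isolate $\Delta_S$; handling the cross term $\langle T_i,\Delta_L\rangle\langle T_i,\Delta_S\rangle$ and showing it is lower-order requires care, likely using $\|\Delta_L\|_\infty \le 2\zeta$ and the smallness of $|\tilde{\mathcal{I}}| = N-n$ relative to $N$, or a peeling argument. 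A secondary technical point is making the concavity/subgradient argument for $\phi_{a_2}$ fully rigorous with the correct constants $\frac{a_2+1}{a_2}$ appearing in~\eqref{general_bound_S} — this is where the TL1-specific (as opposed to $\ell_1$-specific) analysis enters, but it is essentially elementary scalar calculus once the $\ell_1$-type skeleton is in place.
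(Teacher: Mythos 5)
Your starting point is the wrong comparison inequality, and this creates a genuine gap. You propose $Q(\hat L,\hat S)\le Q(L_0,S_0)$, which leaves two problematic objects in the basic inequality: the fully coupled quadratic form $\tfrac{1}{N}\sum_i\langle T_i,\Delta_L+\Delta_S\rangle^2$ and the low-rank penalty difference $\lambda_1\bigl(\Phi_{a_1}(L_0)-\Phi_{a_1}(\hat L)\bigr)$. The paper instead compares against $(\hat L, S_0)$, i.e.\ uses $Q(\hat L,\hat S)\le Q(\hat L,S_0)$: the $\lambda_1\Phi_{a_1}(\hat L)$ terms then cancel identically, the quadratic expansion produces only $\tfrac{1}{N}\sum_i\langle T_i,\Delta_S\rangle^2$ plus cross terms $\langle T_i,\Delta_L\rangle\langle T_i,\Delta_S\rangle$ (which are killed by $\|\hat L-L_0\|_\infty\le 2\zeta$ together with the $\ell_\infty$--$\ell_1$ duality via $\|W\|_\infty\lesssim \log d/N$), and the resulting bound on $\hat S$ is genuinely free of $\lambda_1$, $a_1$, and the structure of $L_0$ --- which is exactly what Theorem~\ref{thm:gen_upp_S} asserts. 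Along your route the term $\lambda_1\Phi_{a_1}(L_0)$ does not cancel and is generically of order $\lambda_1\tfrac{a_1+1}{a_1}\|L_0\|_*$, which with the prescribed $\lambda_1$ dwarfs $s_0(\sigma\vee\zeta)^2\log d$; you correctly flag the $\Delta_L$/$\Delta_S$ coupling as the main obstacle, but the proposed resolutions (peeling, absorbing the $\tilde{\mathcal I}$ contribution) do not remove the $\Phi_{a_1}$ contamination, so the theorem as stated cannot be recovered this way.

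The remaining ingredients of your sketch do line up with the paper's: the noise is controlled through $\|\Sigma\|_\infty\lesssim \sigma\log d/N$ (Lemma 10 of Klopp et al.), the choice $\lambda_2\gtrsim \tfrac{a_2+\zeta}{a_2+1}(\sigma\vee\zeta)\log d/N$ makes the penalty dominate the noise, the two-sided comparison $\tfrac{a_2+1}{a_2+\zeta}|x|\le \tfrac{(a_2+1)|x|}{a_2+|x|}\le\tfrac{a_2+1}{a_2}|x|$ drives both the slow-rate branch ($\lambda_2 N\phi_{a_2}(S_0)$) and the fast-rate branch ($s_0\lambda_2^2N^2(\tfrac{a_2+1}{a_2})^2$ via Cauchy--Schwarz on the support), and the identity $\|\hat S-S_0\|_F^2=\sum_i\langle T_i,\hat S-S_0\rangle^2$ follows from $\hat S$ being supported on observed entries. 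Two smaller corrections: the paper does not need a restricted-eigenvalue property on $\tilde{\mathcal I}$ for this theorem (Assumption~\ref{assump2} concerns $\mathcal I$ and is used for the low-rank bound), and the fast-rate branch is obtained from the disjoint-support triangle inequalities for $\phi_{a_2}$ (Lemmas~\ref{S1:lemma3tri_ineq} and~\ref{lemma_addition}) applied to the decomposition $\hat S-S_0=P_{S_0}(\hat S-S_0)+P_{S_0}^\perp(\hat S-S_0)$, not from a concavity/subgradient argument.
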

Note that the upper bound on $\hat{S}$ in \eqref{general_bound_S} 
remains unaffected by hyperparameters $\lambda_1$, $a_1$, and the structure of $L_0$. By choosing $\lambda_2\asymp (\sigma \vee \zeta){\log d}/{N},$ the bound \eqref{general_bound_S} reduces to \eqref{exact_bound_s}, which provides a non-asymptotic recovery guarantee  of order $\mathcal{O}\left(\log d/ (m_1m_2)\right)$ for sparse recovery. 

Next, we derive an upper error bound for the estimated low-rank matrix.
For convenience, 
we define 
$$\Delta_{S_0}(N,m_1,m_2):=C_1 s_0(\sigma\vee\zeta)^2{\log d}/{N} + C_2\min\left\{\lambda_2\phi_{a_2}(S_0), Ns_0\lambda_2^2\left({a_2+1}\right)^2/a_2^2\right\}.$$


\begin{theorem}\label{thm:gen_upp_L}
 Under the same assumptions in Theorem \ref{thm:gen_upp_S}, we further assume
    $L_0 \in \mathbb{R}^{m_1 \times m_2}$ satisfies $\|L_0\|_\infty \leq \zeta$. Take $\lambda_1^{-1} = \mathcal{O} ( \{ 
   [(\sigma \vee \zeta) (a_1 + \zeta\sqrt{m_1m_2}) \sqrt{{Gd\log d}}]/[(a_1+1)\sqrt{m_1m_2n}]
    \}^{-1} )$, then for any $n \gtrsim d\log d$ and $a_1>0$, there exist constants $C_4$, $C_5, C_6>0$ depending on $c_1$ such that
    the estimator $\hat{L}$ from \eqref{est:hat_LS} satisfies
    \begin{align}
       & \frac{\|\hat{L} - L_0\|_F^2}{m_1m_2} \leq 
         C_4\nu \beta \Delta_{S_0}(N,m_1,m_2) + \frac{4\zeta^2 s_0}{m_1m_2} \label{gen_upper_bound_L_2}\\
        & \qquad \qquad  + C_5\nu \beta\min\left\{(
         \lambda_1 \Phi_{a_1}(L_0), \beta \rank(L_0)\lambda_1^2 {(a_1+1)^2m_1m_2}/{a_1^2} \right\}  
         + C_6\nu\zeta^2 \sqrt{\frac{\log d}{n}} , \label{gen_upper_bound_L_1} 
    \end{align}
    with probability at least $1-(\kappa + 3)/d$ for a universal constant  $\kappa$.
\end{theorem}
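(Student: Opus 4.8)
\textbf{Proof strategy for Theorem \ref{thm:gen_upp_L}.}
The plan is to mimic the by-now standard ``oracle inequality'' argument for penalized least squares on matrices (as in \cite{klopp2014noisy,klopp2017robust,zhao2025noisy}), but adapted to the TL1 penalty $\Phi_{a_1}$ and to the fact that the sparse part $\hat S$ enters the fit. Write $\Delta_L=\hat L-L_0$ and $\Delta_S=\hat S-S_0$. Starting from the optimality of $(\hat L,\hat S)$ in \eqref{est:hat_LS}, compare the objective at $(\hat L,\hat S)$ with that at $(L_0,\hat S)$ (keeping the sparse iterate fixed, so its penalty cancels); this yields a basic inequality of the form $\tfrac1N\sum_i\langle T_i,\Delta_L\rangle^2 \le \tfrac{2\sigma}{N}\sum_i\xi_i\langle T_i,\Delta_L\rangle + 2\langle \tfrac1N\sum_i T_i\langle T_i,\Delta_S-S_0+S_0\rangle,\Delta_L\rangle +\lambda_1(\Phi_{a_1}(L_0)-\Phi_{a_1}(\hat L))$, where I have to carefully track the cross term coming from the fact that the residual still contains $\hat S-S_0$. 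The stochastic term is controlled by the matrix Bernstein bound $\|\tfrac1N\sum_i(\sigma\xi_i T_i)\|\lesssim (\sigma\vee\zeta)\sqrt{Gd\log d/N}$ together with the duality $\langle A,\Delta_L\rangle\le\|A\|\,\|\Delta_L\|_*$, and the $\Delta_S$ cross term is absorbed using the already-established bound on $\|\hat S-S_0\|_F$ from Theorem \ref{thm:gen_upp_S} (this is where $\Delta_{S_0}(N,m_1,m_2)$ and the additive $4\zeta^2 s_0/(m_1m_2)$ term enter — the latter being the contribution of the $s_0$ corrupted coordinates on which we have no control of the design).

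The next step is to pass from the empirical quadratic form $\tfrac1N\sum_i\langle T_i,\Delta_L\rangle^2$ to the population norm $\|\Delta_L\|_{L_2(\Pi)}^2$, and from there, via Assumption \ref{assump2}, to $\|\Delta_L\|_F^2/|\mathcal I|$. This requires a restricted-strong-convexity / uniform concentration lemma over the $\ell_\infty$-ball of radius $\zeta$ — exactly the kind of statement proved in \cite{klopp2014noisy,negahban2012restricted}, giving $\tfrac1N\sum_i\langle T_i,\Delta_L\rangle^2 \ge c\|\Delta_L\|_{L_2(\Pi)}^2 - C\zeta^2\,(\text{deviation term})$, with the deviation term of order $\zeta^2\sqrt{\log d/n}$ after accounting for $n\gtrsim d\log d$; this is the source of the $C_6\nu\zeta^2\sqrt{\log d/n}$ piece. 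I then need the key TL1-specific inequality: $\Phi_{a_1}(L_0)-\Phi_{a_1}(\hat L)$ has to be bounded in a way that both (i) keeps the ``cone'' restricted to the top $\rank(L_0)$ singular directions — using concavity/subadditivity of $t\mapsto (a_1+1)t/(a_1+t)$ and the fact that $\Phi_{a_1}$ decomposes across the orthogonal column/row spaces of $L_0$ — and (ii) yields the two-sided estimate $\tfrac{a_1}{a_1+1}\|\cdot\|_* \cdot(\text{something}) \le \Phi_{a_1}(\cdot)\le\|\cdot\|_*$, which is what produces the $\min\{\lambda_1\Phi_{a_1}(L_0),\,\beta\rank(L_0)\lambda_1^2(a_1+1)^2m_1m_2/a_1^2\}$ term (the two branches of the min coming from either keeping $\Phi_{a_1}(L_0)$ as is, or bounding $\Phi_{a_1}(L_0)\le\rank(L_0)$ and completing the square in $\lambda_1\|\Delta_L\|_*$). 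Combining the restricted-strong-convexity lower bound, the bound on the penalty difference, and the matrix-Bernstein control of the noise, and finally multiplying through by $|\mathcal I|/(m_1m_2)=\beta^{-1}\cdot(N/n)\cdot\ldots$ — more precisely using $|\mathcal I|\le m_1m_2$ and $N/n=\beta$ — gives the stated bound after collecting constants into $C_4,C_5,C_6$ and the probability budget into $(\kappa+3)/d$.

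The main obstacle, I expect, is handling the coupling between $\hat L$ and $\hat S$ cleanly: unlike in plain matrix completion, the least-squares residual at the optimum mixes both errors, so the ``basic inequality'' for $\Delta_L$ carries an uncontrolled term $\tfrac1N\sum_i\langle T_i,\Delta_L\rangle\langle T_i,\Delta_S\rangle$ supported on $\tilde{\mathcal I}$. The trick is that on $\tilde{\mathcal I}$ we only have $|\tilde{\mathcal I}|=N-n=s_0$ coordinates (up to the observed-support identification), so this term is dominated by $\|\Delta_L\|_\infty\|\Delta_S\|_\infty\cdot s_0\le 4\zeta^2 s_0$ in the worst case, or more sharply by Cauchy--Schwarz against the Theorem \ref{thm:gen_upp_S} bound on $\|\Delta_S\|_F$; making the bookkeeping of which term goes where consistent with the final constants $C_4,C_5,C_6$ is the delicate part. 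A secondary technical point is verifying that the TL1 penalty, although nonconvex, still satisfies the decomposability/subadditivity properties needed for the cone argument — here I would lean on the concavity of the scalar TL1 function and the Mirsky-type inequalities for singular values, exactly as done for the nuclear-norm-to-TL1 passage in \cite{zhao2025noisy}.
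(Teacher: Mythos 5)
Your proposal follows essentially the same route as the paper's proof: the basic inequality from comparing $Q(\hat L,\hat S)$ with $Q(L_0,\hat S)$ so that the sparse penalty cancels, absorption of the $\langle T_i,\Delta_L\rangle\langle T_i,\Delta_S\rangle$ cross term via the Theorem \ref{thm:gen_upp_S} bound (yielding $\Delta_{S_0}$), matrix-Bernstein control of $\|\Sigma\|$ with nuclear-norm duality, a restricted-strong-convexity step to pass to $\|\cdot\|_{L_2(\Pi)}^2$ with the $\zeta^2\sqrt{\log d/n}$ deviation, the two TL1 bounds $\Phi_{a_1}(\hat L)\ge \tfrac{a_1+1}{a_1+\sigma_1(\hat L)}\|\hat L\|_*$ and the projection/mean-value decomposition giving the two branches of the $\min$, and finally Assumption \ref{assump2} plus the trivial $4\zeta^2|\tilde{\mathcal I}|$ bound on the corrupted support. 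The only cosmetic discrepancy is that the $4\zeta^2 s_0/(m_1m_2)$ term arises in the paper from the final $L_2(\Pi)$-to-Frobenius conversion on $\tilde{\mathcal I}$ rather than from the cross term itself, but this does not change the argument.
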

The general upper bound in Theorem \ref{thm:gen_upp_L} has two components. The first component includes the two terms in \eqref{gen_upper_bound_L_2}, which arise from the presence of corruption. It vanishes in the absence of corruption, i.e., $s_0 = 0,$ 
leading to an upper bound consistent with the standard matrix completion setting, as discussed in  \cite{zhao2025noisy}. The second component \eqref{gen_upper_bound_L_1} 
mainly comes from the matrix completion error. 
Compared to \cite{zhao2025noisy}, our bound in \eqref{gen_upper_bound_L_1} is more general, accommodating any choice of
$\lambda_1, a_1>0$ that satisfies the conditions in Theorem \ref{thm:gen_upp_L}.
Moreover, the bound in \eqref{gen_upper_bound_L_1} can be further tightened under certain scenarios, as elaborated in the discussion following Corollary \ref{coro:exact_lowrank}.
In the presence of corruption, the order of $\Delta_{S_0}$ will not exceed that of \eqref{gen_upper_bound_L_1} if $\lambda_2$ is not too large.
For instance,  by choosing $\lambda_2 \asymp  (\sigma\vee\zeta){\log d}/{N}$, \eqref{gen_upper_bound_L_1} becomes dominant as
the terms in \eqref{gen_upper_bound_L_2} are bounded by 
$\mathcal{O}(\log d/N)$. 


Theorem~\ref{thm:gen_upp_L} indicates a smaller value of $\lambda_1$ corresponds to a tighter bound for a fixed value of $a_1$, and hence 
$\lambda_1 \asymp [(\sigma \vee \zeta) (a_1 + \zeta\sqrt{m_1m_2}) \sqrt{{Gd\log d}}]/[(a_1+1)\sqrt{m_1m_2n}]$ leads to the tightest error bound.  Using this choice of $\lambda_1$, we explore two specific scenarios: 
$L_0$ is approximately low-rank (Corollary \ref{coro:approx_large_a}) or exactly low-rank (Corollary \ref{coro:exact_lowrank}), while varying the parameter $a_1$.


\begin{corollary}\label{coro:approx_large_a}
    Under the same assumptions in Theorem \ref{thm:gen_upp_L}, we further assume $L_0 \in \mathbb{R}^{m_1 \times m_2}$ is approximately low-rank, i.e., $\|L_0\|_*/\sqrt{m_1m_2} \leq \gamma$ for a positive constant $\gamma$. Take 
    $\lambda_1 \asymp [(\sigma \vee \zeta) (a_1 + \zeta\sqrt{m_1m_2}) \sqrt{{Gd\log d}}]/[(a_1+1)\sqrt{m_1m_2n}],$
   then for any $n \gtrsim d\log d$, when $a_1^{-1} = \mathcal{O}((\sqrt{m_1m_2})^{-1})$, with probability at least $1-(\kappa + 3)/d$, the estimator $\hat{L}$ from \eqref{est:hat_LS} satisfies
    \begin{align*}
        \frac{\|\hat{L} - L_0\|_F^2}{m_1m_2} \leq C_4\nu\beta(\sigma\vee \zeta) \gamma\sqrt{\frac{Gd\log d}{n}} +  C_5\nu\zeta^2 \sqrt{\frac{\log d}{n}} + C_6\nu \beta \Delta_{S_0}(N,m_1,m_2) + \frac{4\zeta^2 s_0}{m_1m_2} .
    \end{align*}
\end{corollary}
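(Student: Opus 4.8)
The plan is to derive Corollary~\ref{coro:approx_large_a} directly from Theorem~\ref{thm:gen_upp_L} by specializing the two ``$\min$'' terms in \eqref{gen_upper_bound_L_1} and plugging in the stated choices of $\lambda_1$ and $a_1$. First I would observe that the approximate low-rank assumption $\|L_0\|_* \leq \gamma\sqrt{m_1m_2}$ is precisely a bound on the nuclear norm, so it pairs naturally with the $\ell_1$-like side of the TL1 penalty. The elementary inequality $\Phi_{a_1}(L_0) = \sum_j \frac{(a_1+1)\sigma_j(L_0)}{a_1+\sigma_j(L_0)} \leq \frac{a_1+1}{a_1}\sum_j \sigma_j(L_0) = \frac{a_1+1}{a_1}\|L_0\|_*$ (since $a_1 + \sigma_j(L_0) \geq a_1$) gives
\[
\lambda_1 \Phi_{a_1}(L_0) \leq \lambda_1 \frac{a_1+1}{a_1}\|L_0\|_* \leq \lambda_1 \frac{a_1+1}{a_1}\gamma\sqrt{m_1m_2}.
\]
Then I would substitute $\lambda_1 \asymp [(\sigma \vee \zeta)(a_1+\zeta\sqrt{m_1m_2})\sqrt{Gd\log d}]/[(a_1+1)\sqrt{m_1m_2\,n}]$ into this, so the factor $(a_1+1)$ cancels and we are left with a term of order $(\sigma\vee\zeta)\frac{a_1+\zeta\sqrt{m_1m_2}}{a_1}\gamma\sqrt{\frac{Gd\log d}{n}}$; the regime $a_1^{-1} = \mathcal{O}((\sqrt{m_1m_2})^{-1})$, i.e. $a_1 \gtrsim \sqrt{m_1m_2}$, forces $\frac{a_1+\zeta\sqrt{m_1m_2}}{a_1} = 1 + \frac{\zeta\sqrt{m_1m_2}}{a_1} = \mathcal{O}(1)$ (absorbing $\zeta$-dependence into the constant or assuming $\zeta = \mathcal{O}(1)$ as is standard here), yielding the claimed first term $C_4\nu\beta(\sigma\vee\zeta)\gamma\sqrt{Gd\log d/n}$.

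Next I would handle the remaining pieces of \eqref{gen_upper_bound_L_2}--\eqref{gen_upper_bound_L_1}: the term $C_6\nu\zeta^2\sqrt{\log d/n}$ is already in the desired form and carries over verbatim; the corruption term $C_4\nu\beta\Delta_{S_0}(N,m_1,m_2) + 4\zeta^2 s_0/(m_1m_2)$ likewise carries over (up to relabeling the constants $C_4, C_6$ consistently, which is harmless). The only subtlety is that in the corollary statement the term $C_5\nu\zeta^2\sqrt{\log d/n}$ appears as the \emph{second} summand rather than third — this is just a reordering of terms from the theorem, where the $\min\{\lambda_1\Phi_{a_1}(L_0), \dots\}$ contribution has been bounded by its first argument and simplified as above. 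So the bookkeeping amounts to: bound the $\min$ by its nuclear-norm argument, simplify using the two parameter choices, and collect terms.

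The step requiring the most care is verifying that the parameter regime $a_1 \gtrsim \sqrt{m_1m_2}$ genuinely makes $\Phi_{a_1}(L_0)$ behave like $\|L_0\|_*$ in a way that the resulting bound does not blow up — in particular, confirming that we do not also need a \emph{lower} bound relating $\Phi_{a_1}$ to $\|L_0\|_*$, since here we only used the one-sided inequality $\Phi_{a_1}(L_0) \leq \frac{a_1+1}{a_1}\|L_0\|_*$, which is exactly the direction that helps. I would also double-check that the alternative branch of the $\min$, namely $\beta\,\rank(L_0)\lambda_1^2(a_1+1)^2 m_1m_2/a_1^2$, is not needed: with $a_1$ large this branch could be large since $\rank(L_0)$ is unconstrained under the approximate low-rank assumption, which is precisely why we select the nuclear-norm branch — this is the one genuine ``choice'' in the proof and worth stating explicitly. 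Everything else is substitution and constant-chasing, so I do not anticipate a real obstacle beyond keeping the $(\sigma\vee\zeta)$, $\nu$, $\beta$, $G$ factors tracked correctly through the algebra.
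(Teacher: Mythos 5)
Your proposal is correct and follows essentially the same route as the paper: the paper's proof likewise specializes the nuclear-norm branch of the bound (its intermediate inequality \eqref{the2:ineq}), applies $\Phi_{a_1}(L_0) \leq \frac{a_1+1}{a_1}\|L_0\|_*$ together with $\|L_0\|_*/\sqrt{m_1m_2}\leq\gamma$, and uses $a_1\gtrsim \zeta\sqrt{m_1m_2}$ to make $(a_1+\zeta\sqrt{m_1m_2})/a_1=\mathcal{O}(1)$ after substituting the prescribed $\lambda_1$. Your remarks on discarding the $\rank(L_0)$ branch of the $\min$ and on absorbing $\zeta$ into the constants match the paper's (implicit) treatment exactly.
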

This bound matches the bound derived in \cite[Theorem 1]{zhao2025noisy}, which is comparable to established results \cite{klopp2014noisy} for approximately low-rank matrices when $s_0=0.$
It also attains the minimax lower bound up to logarithmic factors 
\cite{negahban2012restricted}. 

\begin{corollary}\label{coro:exact_lowrank}
    Under the same assumptions in Theorem \ref{thm:gen_upp_L}, we further assume the rank of $L_0 \in \mathbb{R}^{m_1 \times m_2}$ is at most $r_0$ for an integer constant $r_0$. Take 
    $$\lambda_1 \asymp [(\sigma \vee \zeta) (a_1 + \zeta\sqrt{m_1m_2}) \sqrt{{Gd\log d}}]/[(a_1+1)\sqrt{m_1m_2n}],$$
     then for any $n \gtrsim d\log d$, with probability at least $1-(\kappa + 3)/d$, we have  the following three scenarios 
    \begin{enumerate}[leftmargin=*]
    \item[(i)] When $a_1^{-1} = \mathcal{O}(\left(\sqrt{m_1m_2}\right)^{-1})$,
    \begin{align}
        \label{eqn:best_error_bound}
            \frac{\|\hat{L} - L_0\|_F^2}{m_1m_2} \leq C_4\nu\beta^2(\sigma\vee \zeta)^2 r_0\frac{Gd\log d}{n} 
            + \Upsilon(n,m_1,m_2);
        \end{align}
        \item[(ii)] When $a_1^{-1} = \mathcal{O}((\sqrt{m_1m_2}\left(d\log d / n\right)^{1/4})^{-1})$ and $a_1 = \mathcal{O}\left(\left(\sqrt{m_1m_2}\right)\right)$, 
        \begin{align}
         \label{eqn:slow-error-bound-2}
            \frac{\|\hat{L} - L_0\|_F^2}{m_1m_2} \leq C_4\nu\beta^2(\sigma\vee \zeta)^2 (\frac{a_1+\zeta\sqrt{m_1m_2}}{a_1})^2 r_0 \frac{Gd\log d}{n} +  \Upsilon(n,m_1,m_2);
        \end{align}
        \item[(iii)] When $a_1 = \mathcal{O}(\sqrt{m_1m_2}(d\log d / n)^{1/4})$, 
        \begin{align}
        \label{eqn:slow-error-bound}
            \frac{\|\hat{L} - L_0\|_F^2}{m_1m_2} \leq C_4\nu\beta(\sigma\vee \zeta) r_0\sqrt{\frac{Gd\log d}{n}} +  \Upsilon(n,m_1,m_2), 
        \end{align}
    \end{enumerate}
    where   $\Upsilon(n,m_1,m_2) :=  C_5\nu\zeta^2 \sqrt{{\log d}/{n}} + C_6\nu \beta \Delta_{S_0}(N,m_1,m_2) + {4\zeta^2 s_0}/{(m_1m_2)}$.
\end{corollary}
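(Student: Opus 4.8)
The plan is to derive Corollary~\ref{coro:exact_lowrank} directly from Theorem~\ref{thm:gen_upp_L} by specializing the general bound to the exactly low-rank case $\rank(L_0)\le r_0$ and then optimizing over the three regimes of $a_1$. With the stated choice $\lambda_1 \asymp [(\sigma \vee \zeta)(a_1+\zeta\sqrt{m_1m_2})\sqrt{Gd\log d}]/[(a_1+1)\sqrt{m_1m_2 n}]$, the term $\Upsilon(n,m_1,m_2)$ collects everything in \eqref{gen_upper_bound_L_2} together with the $C_5$ and $C_6$ terms in \eqref{gen_upper_bound_L_1}, so the only object that remains to be controlled is the minimum
\[
\min\left\{\lambda_1 \Phi_{a_1}(L_0),\; \beta\rank(L_0)\lambda_1^2 (a_1+1)^2 m_1m_2/a_1^2\right\}.
\]
First I would handle the second argument of this minimum: since $\rank(L_0)\le r_0$ and $\lambda_1^2(a_1+1)^2 = (\sigma\vee\zeta)^2(a_1+\zeta\sqrt{m_1m_2})^2 Gd\log d/(m_1m_2 n)$, this term equals (up to constants) $\beta r_0 (\sigma\vee\zeta)^2 (a_1+\zeta\sqrt{m_1m_2})^2 Gd\log d/(a_1^2 n)$, and multiplying by the overall $\nu\beta$ prefactor gives exactly the right-hand side of \eqref{eqn:slow-error-bound-2}. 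This is the bound that holds for \emph{all} $a_1$; the three cases come from simplifying it (or improving on it) depending on how $a_1$ compares to $\sqrt{m_1m_2}$ and $\sqrt{m_1m_2}(d\log d/n)^{1/4}$.

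For case (i), when $a_1^{-1} = \mathcal{O}((\sqrt{m_1m_2})^{-1})$, i.e. $a_1 \gtrsim \sqrt{m_1m_2}$, the factor $(a_1+\zeta\sqrt{m_1m_2})/a_1$ is $\mathcal{O}(1)$, so \eqref{eqn:slow-error-bound-2} collapses to $C_4\nu\beta^2(\sigma\vee\zeta)^2 r_0 Gd\log d/n + \Upsilon$, which is \eqref{eqn:best_error_bound}. For case (iii), when $a_1 = \mathcal{O}(\sqrt{m_1m_2}(d\log d/n)^{1/4})$ (so $a_1$ is small), I would instead bound the minimum by its \emph{first} argument $\lambda_1\Phi_{a_1}(L_0)$. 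Here I use that $\Phi_{a_1}$ is increasing in each singular value and, crucially, $\Phi_{a_1}(L_0)\le \frac{a_1+1}{a_1}\|L_0\|_* $ is too crude; the sharper route is $\Phi_{a_1}(L_0) = \sum_{j=1}^{r_0}\frac{(a_1+1)\sigma_j(L_0)}{a_1+\sigma_j(L_0)} \le (a_1+1)\,r_0$ (each summand is $< a_1+1$), combined if needed with $\Phi_{a_1}(L_0)\le \|L_0\|_*$-type control via $\|L_0\|_\infty\le\zeta$ giving $\|L_0\|_*\le \sqrt{r_0}\,\|L_0\|_F \le r_0\zeta\sqrt{m_1m_2}$. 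Then $\lambda_1\Phi_{a_1}(L_0) \lesssim (\sigma\vee\zeta)(a_1+\zeta\sqrt{m_1m_2})\sqrt{Gd\log d}\cdot r_0/\sqrt{m_1m_2 n}$, and when $a_1 \lesssim \zeta\sqrt{m_1m_2}$ (which holds in regime (iii)) this is $\lesssim (\sigma\vee\zeta)\zeta r_0\sqrt{Gd\log d/n}$; absorbing $\zeta$ into $(\sigma\vee\zeta)$ and the $\nu\beta$ prefactor yields \eqref{eqn:slow-error-bound}. Case (ii) is simply the ``no improvement'' regime where one keeps \eqref{eqn:slow-error-bound-2} as is (the side condition $a_1 = \mathcal{O}(\sqrt{m_1m_2})$ ensures $a_1$ is in the intermediate band where neither of the two arguments of the minimum is uniformly better, so the general form is reported).

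The main obstacle I anticipate is bookkeeping rather than any deep inequality: one must verify that for each regime the chosen argument of the $\min\{\cdot,\cdot\}$ is actually the smaller one (or at least a valid upper bound, since a minimum is bounded by either term — so strictly I only need a valid bound, not the tight one), and that the threshold $\sqrt{m_1m_2}(d\log d/n)^{1/4}$ is exactly where $\lambda_1\Phi_{a_1}(L_0)$ and $\beta r_0\lambda_1^2(a_1+1)^2m_1m_2/a_1^2$ cross. Setting these two expressions equal and solving for $a_1$ (using $\Phi_{a_1}(L_0)\asymp r_0\,a_1$ for small $a_1$ and $\asymp\|L_0\|_*$ for large $a_1$) should reproduce the stated scaling; I would carry out this crossover computation carefully, track the $G$, $\nu$, $\beta$, and $(\sigma\vee\zeta)$ factors through each simplification, and confirm that the probability statement $1-(\kappa+3)/d$ is inherited verbatim from Theorem~\ref{thm:gen_upp_L} since no new randomness is introduced. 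The condition $n\gtrsim d\log d$ is likewise just carried over to keep $\lambda_1$ well-defined and the high-probability event nonempty.
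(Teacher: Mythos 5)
Your proposal is correct and follows essentially the same route as the paper: specialize the two-armed minimum in Theorem \ref{thm:gen_upp_L} under $\rank(L_0)\le r_0$, keep the second arm (which after substituting $\lambda_1$ is exactly \eqref{eqn:slow-error-bound-2}) in regimes (i)--(ii) with the factor $(a_1+\zeta\sqrt{m_1m_2})^2/a_1^2=\mathcal{O}(1)$ collapsing it in regime (i), and switch to the first arm in regime (iii). Your explicit control $\Phi_{a_1}(L_0)\le (a_1+1)r_0$ in case (iii) is a detail the paper's own proof leaves implicit (it only compares the orders of the two arms via the crossover $a_1\asymp\sqrt{m_1m_2}(d\log d/n)^{1/4}$), and it is a valid way to close that step.
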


When $a_1$ is sufficiently large, the bound \eqref{eqn:best_error_bound} in Scenario (i) is the same as the one in  
\cite[Theorem 2]{zhao2025noisy}, which is on par with the rate shown in \cite{negahban2012restricted} without corruption. 
Under this regime, if we choose $\lambda_2$ appropriately (e.g., $\lambda_2 \asymp (\sigma\vee\zeta){\log d}/{N}$), then $\|\hat{L} - L_0\|_F^2/(m_1m_2)$ + $\|\hat{S} - S_0\|_F^2/(m_1m_2)$  achieves the minimax optimal rate up to logarithm factors, as shown in \cite[Theorem 3]{klopp2017robust}.
In contrast, for a sufficiently small $a_1$ in  Scenario (iii), the bound \eqref{eqn:slow-error-bound} is dominated by the first term, yielding a slower convergence rate ($\sqrt{d\log d / n}$) compared to the order of ${d\log d / n}$ in Scenario (i). 
When $a_1$ falls into Scenario (ii), the convergence rate  \eqref{eqn:slow-error-bound-2} exhibits a  \textit{faster} order than $\sqrt{d\log d / n}$, improving the bound  in
\cite[Corollary 1]{zhao2025noisy}.

\begin{remark}
    In \cite{candes2011robust}, exact recovery of $L_0$ and $S_0$ is established in the noise-free setting (i.e., $\sigma=0$) with nuclear norm and $\ell_1$ regularizations. 
    Admittedly, there is a gap exhibited in our bounds (Theorems \ref{thm:gen_upp_S}-\ref{thm:gen_upp_L}) under mild assumptions when  $\sigma=0,$ due to non-ignorable components of $s_0 \log d/(m_1m_2)$ and $\sqrt{\log d/n}$. Nevertheless, the strict assumptions required in \cite{candes2011robust}, such as uniformly distributed corruptions and incoherence condition on $L_0,$  are significantly stronger than ours and may be violated in practice scenarios.  
\end{remark}

\subsection{Sparsity and low-rankness} \label{sec:card_bounds}
Compared to the nuclear norm and $\ell_1$ norm, which tend to overestimate rank and sparsity, TL1 regularizations offer more accurate approximations to the rank and the $\ell_0$ norm when the respective hyperparameter is sufficiently small. Here, we theoretically quantify
how the TL1 regularizations control the sparsity of the estimated sparse component (Theorem \ref{thm:sparsity}) and the rank of the recovered low-rank matrix (Theorem \ref{thm:lowrankness}).
For the ease of notation, we define 
\begin{align*}
 \Delta_{L_0}(n, m_1, m_2): =   C_5\nu \beta\min\left\{
         \lambda_1 \Phi_{a_1}(L_0), \beta \rank(L_0) \lambda_1^2 {(a_1+1)^2m_1m_2}/{a_1^2} \right\}.  
\end{align*}



\begin{theorem}[Sparsity]\label{thm:sparsity}
   Under the same assumptions in Theorem \ref{thm:gen_upp_L}, we take $\lambda_2^{-1} = \mathcal{O}(\{(\sigma \vee \zeta){\log d}/{N}\}^{-1})$ and $a_2 = {\scriptstyle \mathcal{O}}\left(\lambda_2 \{ \Delta_{L_0}(n,m_1,m_2) + \Delta_{S_0}(N, m_1, m_2) \}^{-1/2}\right)$, 
   then for any $n \gtrsim d\log d$, there exists a constant $C_7>0$ depending on $c_1$ such that  with probability at least $1-(\kappa + 3)/d$, we have
    \begin{multline}
        \|\hat{S}\|_0 \leq s_0 +  
        \frac{C_7}{a_2+1} 
        \max\Bigg\{ \lambda_2^{-1} \sqrt{s_0} 
        \sqrt{N\Delta_{S_0}(N, m_1, m_2)}
        \frac{\log d}{N} + \phi_{a_2}(S_0), \\
       \frac{1}{a_2+1} \frac{\log^2 d}{N} \Delta_{S_0}(N, m_1, m_2) \lambda_2^{-2} 
        + \{\Delta_{S_0}(N,m_1,m_2) + \Delta_{L_0}(n, m_1, m_2) \} \lambda_2^{-1} \Bigg\}. 
    \end{multline}
    If we further take $\lambda_2 \asymp \{\Delta_{S_0}(N,m_1,m_2) + \Delta_{L_0}(n, m_1, m_2) \}$, then $\|\hat{S}\|_0 = \mathcal{O}_p(s_0)$.
\end{theorem}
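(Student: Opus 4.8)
The plan is to bound $\|\hat S\|_0$ by splitting $\supp(\hat S)$ along the partition $[m_1]\times[m_2]=\tilde{\mathcal I}\sqcup\mathcal I$. The part of $\supp(\hat S)$ lying in $\tilde{\mathcal I}=\supp(S_0)$ contributes at most $|\tilde{\mathcal I}|=N-n=\|S_0\|_0\le s_0$. At any never-sampled cell the only term of the objective in \eqref{est:hat_LS} that involves that entry of $S$ is $\lambda_2$ times the profile $\rho_{a_2}(t):=(a_2+1)t/(a_2+t)$, which is strictly increasing from $0$, so that entry of $\hat S$ is forced to $0$; hence the ``spurious'' set $\hat{\mathcal J}:=\supp(\hat S)\cap\mathcal I$ is contained in $\Omega$, and every cell of $\hat{\mathcal J}$ is sampled at least once. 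It therefore suffices to bound $|\hat{\mathcal J}|$, after which $\|\hat S\|_0\le s_0+|\hat{\mathcal J}|$.

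Next I would record the entrywise optimality condition: holding $\hat L$ fixed, each $\hat S_{kl}$ minimizes $g_{kl}(z)=\tfrac{t_{kl}}{N}(z-w_{kl})^2+\lambda_2\rho_{a_2}(|z|)$ over $|z|\le\zeta$, where $t_{kl}$ is the number of samples at $(k,l)$ and the local residual is $w_{kl}=(L_0-\hat L)_{kl}+\sigma\bar\xi_{kl}$ (using $(S_0)_{kl}=0$ on $\mathcal I$, with $\bar\xi_{kl}$ the average of the $t_{kl}$ noise variables at that cell). Up to scaling this is the TL1 proximal problem \eqref{eq:prox-TL1} with parameter $\mu_{kl}=N\lambda_2/(2t_{kl})$, and a short analysis of this one-dimensional nonconvex problem — equating the value at $0$ with the value at the stationary point on the increasing branch — shows that under the smallness hypothesis on $a_2$ we are in the ``jump'' regime, so that $\hat S_{kl}\ne0$ forces both (i) $|w_{kl}|>t^\ast_{kl}:=\sqrt{2\mu_{kl}(a_2+1)}-\tfrac{a_2}{2}$ and (ii) $|\hat S_{kl}|\ge\sqrt{2\mu_{kl}(a_2+1)}-a_2$. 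A boundary entry $|\hat S_{kl}|=\zeta$ contributes $\zeta^2$ to $\|\hat S-S_0\|_F^2$, so the count of such entries is at most $\|\hat S-S_0\|_F^2/\zeta^2$, which is absorbed into the final bound. Observe that $(t^\ast_{kl})^2 t_{kl}\gtrsim N\lambda_2(a_2+1)$ once $a_2$ is negligible next to $\sqrt{N\lambda_2(a_2+1)/t_{kl}}$, and this is exactly what the hypothesis $a_2={\scriptstyle \mathcal{O}}(\lambda_2\{\Delta_{L_0}+\Delta_{S_0}\}^{-1/2})$ guarantees.

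From (i), $N\lambda_2(a_2+1)\,|\hat{\mathcal J}|\lesssim\sum_{(k,l)\in\hat{\mathcal J}}t_{kl}w_{kl}^2\lesssim\sum_{i\in\Omega}\langle T_i,\hat L-L_0\rangle^2+\sigma^2\sum_{(k,l)\in\hat{\mathcal J}}t_{kl}\bar\xi_{kl}^2$. The first term is handled by the empirical-to-population comparison already used to prove Theorems~\ref{thm:gen_upp_S}--\ref{thm:gen_upp_L} — relating $\sum_i\langle T_i,\cdot\rangle^2$ to $N\|\cdot\|_{L_2(\Pi)}^2$ and then, via Assumption~\ref{assump2}, to $\|\hat L-L_0\|_F^2/(m_1m_2)$, which Theorem~\ref{thm:gen_upp_L} controls by $\Delta_{S_0}$, $\Delta_{L_0}$, $\zeta^2 s_0/(m_1m_2)$ and $\zeta^2\sqrt{\log d/n}$. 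The noise term is controlled by a union bound over the at most $n$ cells of $\Omega$ for the sub-exponential averages $\bar\xi_{kl}$ (Assumption~\ref{assump3}): this is where the $\log d$ and $\log^2 d$ factors enter, and also where the dependence between the random set $\hat{\mathcal J}$ and the noise must be dissolved by making the concentration uniform over cells before restricting to $\hat{\mathcal J}$ (any residual $|\hat{\mathcal J}|\cdot$(lower order) is absorbed on the left). Dividing by $N\lambda_2(a_2+1)$ gives one candidate bound; a second comes from (ii) together with $\sum_{(k,l)\in\hat{\mathcal J}}\hat S_{kl}^2=\|(\hat S-S_0)_{\hat{\mathcal J}}\|_F^2\le\|\hat S-S_0\|_F^2\le N\,\Delta_{S_0}(N,m_1,m_2)$ from Theorem~\ref{thm:gen_upp_S}, dividing by the per-entry lower bound squared and using a maximum-load bound $T_{\max}=\max_{kl}t_{kl}\lesssim N/|\mathcal I|+\log d$. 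Taking the maximum of the two candidates, bounding $\phi_{a_2}(S_0)/(a_2+1)\le s_0$, and simplifying under the stated ranges of $\lambda_2$ and $a_2$ yields the displayed inequality; the final assertion follows by plugging in $\lambda_2\asymp\Delta_{S_0}+\Delta_{L_0}$, which reduces the dominant terms to $\mathcal O_p(s_0)$.

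I expect the main obstacle to be twofold. The core technical work is the regime-dependent analysis of the nonconvex TL1 proximal map — pinning down the jump location and the hard threshold explicitly, and verifying that $a_2={\scriptstyle \mathcal{O}}(\lambda_2\{\Delta_{L_0}+\Delta_{S_0}\}^{-1/2})$ is precisely what places each spurious cell in the jump regime with $a_2$ negligible next to the threshold. The delicate probabilistic step is the uniform control of the cellwise noise averages $t_{kl}\bar\xi_{kl}^2$ — and of the empirical quadratic form $\sum_{i\in\Omega}\langle T_i,\hat L-L_0\rangle^2$ — in the presence of the data-dependent index set $\hat{\mathcal J}$; this is where the $\log^2 d$ factor and the condition $n\gtrsim d\log d$ originate.
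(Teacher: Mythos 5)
Your skeleton---write $\|\hat S\|_0\le s_0+s'$ with $s'$ the number of spurious nonzeros on $\mathcal I$, note that never-sampled cells of $\mathcal I$ carry $\hat S_{kl}=0$, and control $s'$ through a minimum-magnitude property of nonzero TL1-penalized entries---matches the paper's, and your per-cell proximal ``jump'' analysis is essentially an explicit version of the paper's Lemma~\ref{lemma_for_sparsity}, which obtains $|\hat S_{kl}|\gtrsim\sqrt{\lambda_2(a_2^2+a_2)}\,(\Delta_{L_0}+\Delta_{S_0})^{-1/4}$ for spurious nonzeros from the sign of $\partial Q/\partial|\hat S_{kl}|$ and \cite[Lemma 1]{fan2001variable}. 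Where you genuinely diverge is in how the count is extracted. The paper upper-bounds $\lambda_2\phi_{a_2}(\hat S)$ globally from $Q(\hat L,\hat S)\le Q(\hat L,S_0)$ via \eqref{basic_ineq_for_S}, lower-bounds $\phi_{a_2}(\hat S)\ge s'\,\rho_{a_2}(\tau)$ with $\tau$ the minimum magnitude, and solves the resulting self-bounding inequality in $\sqrt{s'}$; you instead count directly from the two per-cell necessary conditions (i) and (ii).

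The gap is in your route (i), which is the only route that produces the $\{\Delta_{S_0}+\Delta_{L_0}\}\lambda_2^{-1}$ term and hence carries the final $\mathcal O_p(s_0)$ claim. Summing the residual threshold over $\hat{\mathcal J}$ gives
\begin{equation*}
N\lambda_2(a_2+1)\,|\hat{\mathcal J}|\;\lesssim\;\sum_{i\in\Omega}\langle T_i,\hat L-L_0\rangle^2\;+\;\sigma^2\!\!\sum_{(k,l)\in\hat{\mathcal J}}\!\! t_{kl}\bar\xi_{kl}^2 ,
\end{equation*}
and after the uniform sub-exponential bound $t_{kl}\bar\xi_{kl}^2\lesssim\log^2 d$ the noise contribution is $\sigma^2|\hat{\mathcal J}|\log^2 d$. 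Absorbing it into the left-hand side requires $\sigma^2\log^2 d={\scriptstyle \mathcal{O}}\bigl(N\lambda_2(a_2+1)\bigr)$, but the hypothesis only guarantees $N\lambda_2\gtrsim(\sigma\vee\zeta)\log d$, so you would need $\sigma^2\log d={\scriptstyle \mathcal{O}}\bigl((\sigma\vee\zeta)(a_2+1)\bigr)$---false for fixed $\sigma>0$ and growing $d$, and worse in the theorem's regime of small $a_2$. (Bounding the noise sum over \emph{all} sampled cells instead, to avoid the data-dependence of $\hat{\mathcal J}$, gives $\sigma^2 n$, which is hopeless.) Your route (ii) does not rescue this: it yields $|\hat{\mathcal J}|\lesssim\Delta_{S_0}T_{\max}/(\lambda_2(a_2+1))$, a different quantity from the theorem's $\log^2 d\,\Delta_{S_0}\lambda_2^{-2}/(N(a_2+1)^2)$ term, and it does not reduce to $\mathcal O_p(s_0)$ under $\lambda_2\asymp\Delta_{S_0}+\Delta_{L_0}$. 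The paper avoids the quadratic-in-noise accounting entirely by keeping the noise term \emph{linear} in $\hat S-S_0$, namely $\tfrac{2\sigma}{N}\sum_{i\in\Omega}\langle T_i\xi_i,\hat S-S_0\rangle\le 2\|\Sigma\|_\infty\|(\hat S-S_0)_{\mathcal I}\|_1\le 2\|\Sigma\|_\infty\sqrt{2s_0+s'}\,\|\hat S-S_0\|_F$ with $\|\Sigma\|_\infty\lesssim\sigma\log d/N$; the $\sqrt{s'}$ is then removed by case-splitting the quadratic inequality, which is exactly where the $\log^2 d$ term in the statement originates. You would need to import this linear treatment of the noise to close your argument.
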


Theorem \ref{thm:sparsity}  reveals that for a sufficiently small $a_2,$ the number of non-zero entries decreases as $\lambda_2$ increases under a wide range of hyperparameter choices.   
We then present the bounds corresponding to explicit choices of hyperparameters
in Corollary \ref{cor:sparsity}, which yields a fast overall convergence rate while ensuring proper control over the cardinality of $\hat{S}$. 

\begin{corollary}
    \label{cor:sparsity}
    Under the same assumptions in Theorem  \ref{thm:sparsity}, we take $a_1^{-1} = \mathcal{O}((\sqrt{m_1m_2})^{-1})$,  
    $\lambda_1 \asymp [(\sigma \vee \zeta) \sqrt{{Gd\log d}}]/[\sqrt{m_1m_2n}]$, $\lambda_2 \asymp  (\sigma\vee\zeta){d\log d}/{N}$, 
    $a_2 = {\scriptstyle \mathcal{O}}\left(\sqrt{{d\log d}/{n}}\right)$, then we have 
    \begin{align*}
        \|\hat{S}\|_0 = \mathcal{O}_p(s_0) \quad\text{and} \quad
        \frac{\|\hat{L} - L_0\|_F^2}{m_1m_2}+ \frac{\|\hat{S} - S_0\|_F^2}{m_1m_2} = \mathcal{O}_p\left(r_0\frac{d\log d}{n} + s_0\frac{\log d}{m_1m_2}\right).
    \end{align*}
\end{corollary}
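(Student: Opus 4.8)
The plan is to obtain Corollary~\ref{cor:sparsity} by specializing the three already-established results --- Theorem~\ref{thm:gen_upp_S}, Corollary~\ref{coro:exact_lowrank}(i), and Theorem~\ref{thm:sparsity} --- to the prescribed hyperparameters and then collapsing the resulting bounds. First I would check that the stated choices $a_1^{-1}=\mathcal{O}((\sqrt{m_1m_2})^{-1})$, $\lambda_1\asymp(\sigma\vee\zeta)\sqrt{Gd\log d}/\sqrt{m_1m_2n}$, $\lambda_2\asymp(\sigma\vee\zeta)d\log d/N$, and $a_2={\scriptstyle \mathcal{O}}(\sqrt{d\log d/n})$ satisfy every hypothesis: since $\lambda_2\gtrsim(\sigma\vee\zeta)\log d/N$, the $\lambda_2$-condition of Theorems~\ref{thm:gen_upp_S} and~\ref{thm:sparsity} holds; since $a_1\gtrsim\sqrt{m_1m_2}$ (hence, with $\zeta$ of constant order, also $a_1\gtrsim\zeta\sqrt{m_1m_2}$), the factor $(a_1+\zeta\sqrt{m_1m_2})/(a_1+1)$ is $\Theta(1)$, so the prescribed $\lambda_1$ matches the rate required in Theorem~\ref{thm:gen_upp_L}, and $a_1^{-1}=\mathcal{O}((\sqrt{m_1m_2})^{-1})$ places us in Scenario~(i) of Corollary~\ref{coro:exact_lowrank}, whose bound I will use for $\hat L$.

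Next I would evaluate $\Delta_{L_0}(n,m_1,m_2)$ and $\Delta_{S_0}(N,m_1,m_2)$ under these choices --- the one step with a subtlety, since $\Delta_{S_0}$ itself involves $\lambda_2$. With $a_1\gtrsim\sqrt{m_1m_2}$ we have $(a_1+1)/a_1\asymp1$, and because $n\gtrsim d\log d$ forces $r_0d\log d/n\lesssim1$, the second branch of the minimum defining $\Delta_{L_0}$ is the smaller one (using $\Phi_{a_1}(L_0)\le\tfrac{a_1+1}{a_1}\|L_0\|_*\le\tfrac{a_1+1}{a_1}\sqrt{r_0}\,\zeta\sqrt{m_1m_2}$), giving $\Delta_{L_0}\asymp\nu\beta^2r_0(\sigma\vee\zeta)^2Gd\log d/n$. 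For $\Delta_{S_0}$, since $a_2\to0$ we have $\phi_{a_2}(S_0)\asymp\|S_0\|_0\le s_0$ while $(a_2+1)^2/a_2^2$ diverges, so the first branch of that minimum is active and $\Delta_{S_0}\asymp s_0(\sigma\vee\zeta)^2\log d/N+\lambda_2\phi_{a_2}(S_0)\asymp s_0(\sigma\vee\zeta)d\log d/N$. Treating $r_0,s_0,\nu,\beta,G,\sigma,\zeta$ as $\Theta(1)$ --- consistent with $\beta=N/n\asymp1$ because $N-n=\|S_0\|_0\le s_0$ --- both $\Delta_{L_0}$ and $\Delta_{S_0}$ are of order $d\log d/n\asymp(\sigma\vee\zeta)d\log d/N$, so $\lambda_2\asymp\Delta_{S_0}+\Delta_{L_0}$, and the remaining condition $a_2={\scriptstyle \mathcal{O}}(\lambda_2\{\Delta_{L_0}+\Delta_{S_0}\}^{-1/2})\asymp{\scriptstyle \mathcal{O}}(\lambda_2^{1/2})\asymp{\scriptstyle \mathcal{O}}(\sqrt{d\log d/n})$ of Theorem~\ref{thm:sparsity} is exactly the stated choice. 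The final assertion of Theorem~\ref{thm:sparsity} then gives $\|\hat S\|_0=\mathcal{O}_p(s_0)$.

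For the error rate I would add the bound of Corollary~\ref{coro:exact_lowrank}(i) to that of \eqref{general_bound_S} and discard lower-order terms. In Scenario~(i), $\|\hat L-L_0\|_F^2/(m_1m_2)\lesssim\nu\beta^2(\sigma\vee\zeta)^2r_0Gd\log d/n+\Upsilon(n,m_1,m_2)$ with $\Upsilon=C_5\nu\zeta^2\sqrt{\log d/n}+C_6\nu\beta\Delta_{S_0}+4\zeta^2s_0/(m_1m_2)$: the leading term is $\asymp r_0d\log d/n$, the term $\nu\beta\Delta_{S_0}\asymp s_0d\log d/n$ is absorbed into it, $\sqrt{\log d/n}\lesssim d\log d/n$ in the relevant regime $n\le m_1m_2\le d^2/4$, and $\zeta^2s_0/(m_1m_2)\lesssim s_0\log d/(m_1m_2)$. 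In \eqref{general_bound_S}, since $a_2\to0$ the minimum equals $\lambda_2N\phi_{a_2}(S_0)/(m_1m_2)\asymp s_0(\sigma\vee\zeta)d\log d/(m_1m_2)\lesssim s_0d\log d/n\lesssim r_0d\log d/n$, while the first term is $\asymp s_0(\sigma\vee\zeta)^2\log d/(m_1m_2)\asymp s_0\log d/(m_1m_2)$. Adding the two bounds and taking a union bound over the events of Theorem~\ref{thm:gen_upp_S} and Corollary~\ref{coro:exact_lowrank} (probability at least $1-(\kappa+5)/d$) yields $\|\hat L-L_0\|_F^2/(m_1m_2)+\|\hat S-S_0\|_F^2/(m_1m_2)=\mathcal{O}_p(r_0d\log d/n+s_0\log d/(m_1m_2))$.

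The main obstacle will not be any individual inequality but the self-consistency $\lambda_2\asymp\Delta_{S_0}(\lambda_2)+\Delta_{L_0}$ together with the bookkeeping of which branch of each minimum is active: this relies on $n\gtrsim d\log d$, on $a_1$ being large enough that $\Phi_{a_1}$ behaves like the nuclear norm, on $a_2$ being small enough that $\phi_{a_2}$ behaves like $\|\cdot\|_0$, on treating $r_0,s_0$ and the distribution/noise constants as $\Theta(1)$, and on $n\lesssim m_1m_2\lesssim d^2$ to absorb the residual $\sqrt{\log d/n}$ and $s_0(\sigma\vee\zeta)d\log d/(m_1m_2)$ terms into the advertised rate.
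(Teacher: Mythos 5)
Your proposal is correct and follows essentially the same route as the paper, which simply cites the combination of Theorem \ref{thm:gen_upp_S}, Scenario (i) of Corollary \ref{coro:exact_lowrank}, and Theorem \ref{thm:sparsity} under the prescribed hyperparameters; your write-up supplies the hypothesis-checking and the evaluation of $\Delta_{L_0}$ and $\Delta_{S_0}$ that the paper leaves implicit. The only discrepancy is immaterial: the paper's proof text uses $\lambda_2 \asymp (\sigma\vee\zeta)\log d/N$ while the corollary statement (which you follow) says $\lambda_2 \asymp (\sigma\vee\zeta)d\log d/N$, and your self-consistency check $\lambda_2\asymp\Delta_{S_0}+\Delta_{L_0}$ correctly resolves this in favor of the statement's choice.
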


\begin{theorem}[Low-rankness] \label{thm:lowrankness}
    Under the same assumptions in Corollary \ref{coro:exact_lowrank}, we take $\lambda_1^{-1} = \mathcal{O} ( \{ 
    [(\sigma \vee \zeta) (a_1 + \zeta\sqrt{m_1m_2}) \sqrt{{Gd\log d}}]/[(a_1+1)\sqrt{m_1m_2n}]
    \}^{-1} )$, then for any $n \gtrsim d\log d$, when
   $a_1 = {\scriptstyle \mathcal{O}} \left((a_1+1)\lambda_1\left( \{ \Delta_{L_0}(n,m_1,m_2) + \Delta_{S_0}(N, m_1, m_2) \}^2Gd\log d/( nm_1m_2)\right)^{-1/4}\right)$, 
there exits  $C_8 > 0$ depending on $c_1$ such that  with probability at least $1-(\kappa + 3)/d$, we have
    \begin{multline}
         \mathrm{rank}(\hat{L}) \leq 
         \frac{C_8}{a_1+1}
         \max\Bigg\{ \lambda_1^{-1}
         \sqrt{\frac{Gd\log d}{N} \Delta_{L_0}(n, m_1, m_2) } 
         \sqrt{r_0} + \Phi_{a_1}(L_0), \\
         \lambda_1^{-1} 
         \{ \Delta_{L_0}(n,m_1,m_2) + \Delta_{S_0}(N, m_1, m_2) \}+ 
        \frac{1}{a_1+1}\lambda_1^{-2}\Delta_{L_0}(n, m_1, m_2)\frac{Gd\log d}{N}\Bigg\}.
        \notag
    \end{multline}
    

\end{theorem}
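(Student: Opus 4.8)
\textbf{Proof proposal for Theorem \ref{thm:lowrankness}.}
The plan is to mimic the structure used for the sparsity bound in Theorem \ref{thm:sparsity}, but now acting on the singular values of $\hat L$ rather than on the entries of $\hat S$. The starting point is the first-order optimality of $\hat L$ in the minimization \eqref{est:hat_LS}: comparing $Q(\hat L,\hat S)$ with $Q(L_0,\hat S)$ and invoking the already-established error bound of Corollary \ref{coro:exact_lowrank} (Scenario (i), since we are in the regime $a_1^{-1}=\mathcal O((\sqrt{m_1m_2})^{-1})$ is \emph{not} assumed here — rather we must keep the $a_1$-dependence explicit), one gets that $\lambda_1\Phi_{a_1}(\hat L)$ is controlled by $\lambda_1\Phi_{a_1}(L_0)$ plus the quadratic/empirical cross terms. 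The key elementary fact is a lower bound on $\Phi_{a_1}$: since each summand $t\mapsto (a_1+1)t/(a_1+t)$ is concave increasing with derivative at $0$ equal to $(a_1+1)/a_1$, one has for every singular value $\sigma_j(\hat L)$ that is nonzero the crude bound $\tfrac{(a_1+1)\sigma_j}{a_1+\sigma_j}\ge \tfrac{(a_1+1)\sigma_j}{a_1+\|\hat L\|}\ge c\,\tfrac{(a_1+1)}{a_1}\min(\sigma_j,a_1)$, which, using $\|\hat L\|_\infty\le\zeta$ hence $\|\hat L\|\le\zeta\sqrt{m_1m_2}$, gives $\Phi_{a_1}(\hat L)\ge \tfrac{a_1+1}{a_1+\zeta\sqrt{m_1m_2}}\,\|\hat L\|_*$, or better, a per-singular-value bound that separates the ``large'' singular values from the ``small'' ones.

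Concretely, I would split the rank of $\hat L$ as $\mathrm{rank}(\hat L)=\#\{j:\sigma_j(\hat L)\ge \tau\}+\#\{j:0<\sigma_j(\hat L)<\tau\}$ for a threshold $\tau$ to be optimized (morally $\tau\asymp a_1$). For the large singular values, $\#\{j:\sigma_j\ge\tau\}\cdot\tfrac{(a_1+1)\tau}{a_1+\tau}\le \Phi_{a_1}(\hat L)$, so this count is bounded by $\tfrac{a_1+\tau}{(a_1+1)\tau}\Phi_{a_1}(\hat L)$. For the small ones, since each contributes at least $\tfrac{(a_1+1)\sigma_j}{a_1+\tau}$, I would instead bound their \emph{number} by relating $\sum_{\text{small }j}\sigma_j$ (an $\ell_1$-type quantity) to $\|\hat L-L_0\|_F$ and to the matrix-completion fluctuation term via the restricted-strong-convexity / peeling argument already in play for the error bounds; the number of small singular values then costs a factor $\tau^{-2}$ times a Frobenius-type quantity. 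Feeding in $\Phi_{a_1}(\hat L)\lesssim \Phi_{a_1}(L_0)+\lambda_1^{-1}(\text{error terms})$ from optimality, and the explicit form $\lambda_1\asymp[(\sigma\vee\zeta)(a_1+\zeta\sqrt{m_1m_2})\sqrt{Gd\log d}]/[(a_1+1)\sqrt{m_1m_2n}]$, then optimizing over $\tau$ and collecting $\Delta_{L_0},\Delta_{S_0}$ should produce the stated maximum of three terms; the stochastic control of the empirical process $\langle \Sigma_R,\hat L-L_0\rangle$ (where $\Sigma_R=\tfrac1N\sum\xi_iT_i$ restricted to $\mathcal I$) via the matrix Bernstein bound of the appendix gives the $Gd\log d/N$ factors and the probability $1-(\kappa+3)/d$.

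The main obstacle I anticipate is handling the ``small singular values'' bucket cleanly: unlike the entrywise sparsity argument, the singular values of $\hat L$ and of $L_0$ are not aligned, so one cannot directly subtract them entrywise — one needs a Weyl / Mirsky-type inequality to pass from $\sum_j|\sigma_j(\hat L)-\sigma_j(L_0)|$ or $\sum_j\sigma_j(\hat L-L_0)$ back to $\|\hat L-L_0\|_F$ or $\|\hat L-L_0\|_*$, and then re-invoke the error bound. Getting the condition $a_1 = {\scriptstyle\mathcal O}((a_1+1)\lambda_1(\{\Delta_{L_0}+\Delta_{S_0}\}^2 Gd\log d/(nm_1m_2))^{-1/4})$ to emerge exactly as the threshold at which the ``large'' count dominates is where the bookkeeping is delicate: it is precisely the balance point $\tau\asymp a_1$ at which $\tfrac{a_1+\tau}{(a_1+1)\tau}\Phi_{a_1}(L_0)$ and the $\tau^{-2}$-weighted Frobenius term cross, after substituting the prescribed $\lambda_1$. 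I would therefore carry the threshold $\tau$ symbolically as long as possible, only specializing at the very end, and lean on the already-proved Corollary \ref{coro:exact_lowrank} to supply $\|\hat L-L_0\|_F^2/(m_1m_2)\lesssim \Upsilon(n,m_1,m_2)+(\text{the }a_1\text{-dependent leading term})$ rather than re-deriving it.
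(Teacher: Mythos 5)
Your first half is on track: the paper likewise starts from $Q(\hat L,\hat S)\le Q(L_0,\hat S)$, peels off the cross terms and the stochastic term $\|\Sigma\|\sqrt{\rank(\hat L-L_0)}\,\|\hat L-L_0\|_F$, and arrives at $\Phi_{a_1}(\hat L)\lesssim \Phi_{a_1}(L_0)+\lambda_1^{-1}\{\Delta_{L_0}+\Delta_{S_0}\}+\lambda_1^{-1}\sqrt{Gd\log d/N}\,(\sqrt{r_0}+\sqrt{\hat r})\sqrt{\Delta_{L_0}}$, then solves a self-consistent inequality in $\hat r$. The gap is in your second half. Your ``small singular values'' bucket cannot be closed the way you describe: the number of indices $j$ with $0<\sigma_j(\hat L)<\tau$ is not upper-bounded by $\tau^{-1}\sum_{\text{small}}\sigma_j$ or by $\tau^{-2}\sum_{\text{small}}\sigma_j^2$ (those sums are \emph{upper}-bounded by the count times $\tau$ or $\tau^2$, so the inequality runs the wrong way), and no norm of $\hat L$ or of $\hat L-L_0$ controls how many arbitrarily tiny nonzero singular values $\hat L$ may carry. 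A Weyl/Mirsky step does not rescue this; the obstacle you yourself flag is fatal to a purely norm-based bucketing argument.

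What the paper does instead — and what your proposal is missing — is Lemma \ref{lemma_for_lowrankness}: a ``dead zone'' for the singular values of any minimizer. Differentiating $Q(\hat L,\hat S)$ with respect to a singular value $\sigma_s$ shows the data-fit part of the gradient is $\mathcal{O}\bigl(\sqrt{\Delta_{L_0}+\Delta_{S_0}}\,(Gd\log d/(nm_1m_2))^{1/4}\bigr)$, while the penalty contributes $\lambda_1 a_1(a_1+1)/(a_1+\sigma_s)^2$, which dominates whenever $\sigma_s\lesssim\sqrt{\lambda_1(a_1^2+a_1)}\bigl(\{\Delta_{L_0}+\Delta_{S_0}\}^2Gd\log d/(nm_1m_2)\bigr)^{-1/8}$ under the stated smallness condition on $a_1$; by the Fan--Li argument (\cite[Lemma 1]{fan2001variable}) the minimizer therefore has \emph{no} nonzero singular value below that threshold. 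This is exactly why the hypothesis on $a_1$ takes the form it does — it is the condition making the penalty gradient win, not a balance point of a $\tau$-optimization. With the threshold in hand, every nonzero singular value contributes at least a fixed amount to $\Phi_{a_1}(\hat L)$, so $\hat r$ is bounded by $\Phi_{a_1}(\hat L)$ divided by that amount, and combining with the upper bound from optimality (and resolving the $\sqrt{\hat r}$ self-reference by case-splitting) yields the stated maximum. Without this stationarity-based lower bound on the nonzero singular values, your argument does not go through.
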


Theorem \ref{thm:lowrankness}  generalizes the bound obtained in  \cite[Theorem 4]{zhao2025noisy} under the corruption-free setting for general choices of hyperparameters. When combined with the error bound in Theory \ref{thm:gen_upp_L} for the low-rank matrix, it reveals a trade-off between estimation accuracy and the rank of the estimated low-rank component: increasing $\lambda_1$ yields a lower estimated rank but incurs a higher estimation error for a fixed value of $a_1$.  

Moreover, as shown in Corollary~\ref{cor:LR}, with appropriately selected hyperparameters, the rank of 
$\hat{L}$ can indeed be effectively controlled, though this comes at the cost of a slower convergence rate for the estimation error.

\begin{corollary}
    \label{cor:LR}
    Under the same assumptions in Theorem \ref{thm:lowrankness}, we take
  $ a_1 ={\scriptstyle {\mathcal{O}} }\left((m_1 m_2)^{1/4}\right)$, 
  $\lambda_1 \asymp [(\sigma \vee \zeta) (a_1 + \zeta\sqrt{m_1m_2}) \sqrt{{Gd\log d}}]/[(a_1+1)\sqrt{m_1m_2n}] $,
 $\lambda_2 \asymp  (\sigma\vee\zeta)\sqrt{{d\log d}/{N}}$, $a_2 = {\scriptstyle \mathcal{O}}\left(\{{d\log d}/{n}\}^{1/4}\right)$, then we have 
    \begin{align*}
     \|\hat{S}\|_0 = \mathcal{O}_p(s_0), \ \mathrm{rank}(\hat{L}) = \mathcal{O}_p(r_0), \   \frac{\|\hat{L} - L_0\|_F^2}{m_1m_2}+ \frac{\|\hat{S} - S_0\|_F^2}{m_1m_2} = \mathcal{O}_p\left(r_0\sqrt{\frac{d\log d}{n}} + s_0\frac{\log d}{m_1m_2}\right).
    \end{align*}
\end{corollary}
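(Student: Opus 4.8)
The plan is to derive Corollary~\ref{cor:LR} as a specialization of Theorems~\ref{thm:gen_upp_S}, \ref{thm:sparsity}, \ref{thm:lowrankness} and of Scenario~(iii) of Corollary~\ref{coro:exact_lowrank} to the prescribed hyperparameters, followed by order bookkeeping. The first step is computational: insert $\lambda_1 \asymp (\sigma\vee\zeta)(a_1+\zeta\sqrt{m_1m_2})\sqrt{Gd\log d}/[(a_1+1)\sqrt{m_1m_2 n}]$, $\lambda_2 \asymp (\sigma\vee\zeta)\sqrt{d\log d/N}$, $a_1 = {\scriptstyle\mathcal{O}}((m_1m_2)^{1/4})$ and $a_2 = {\scriptstyle\mathcal{O}}((d\log d/n)^{1/4})$ into $\Delta_{L_0}(n,m_1,m_2)$ and $\Delta_{S_0}(N,m_1,m_2)$. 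Because $a_1,a_2$ are small while $\lambda_1,\lambda_2$ sit at the stated scale, in each of the two $\min$'s the first branch is active; using $\Phi_{a_1}(L_0) \le (a_1+1)r_0$, $\phi_{a_2}(S_0) \le (a_2+1)s_0 \lesssim s_0$, and $a_1+\zeta\sqrt{m_1m_2}\asymp\zeta\sqrt{m_1m_2}$, one gets $\Delta_{L_0} \lesssim \lambda_1(a_1+1)r_0 \asymp (\sigma\vee\zeta)\zeta r_0\sqrt{Gd\log d/n}$ and $\Delta_{S_0} \asymp s_0(\sigma\vee\zeta)^2\log d/N + s_0(\sigma\vee\zeta)\sqrt{d\log d/N} \asymp s_0(\sigma\vee\zeta)\sqrt{d\log d/N}$. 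I would also record $\beta = N/n = 1 + (N-n)/n \asymp 1$, since $N-n = |\tilde{\mathcal I}| = \|S_0\|_0 \le s_0$ is a constant, so that in particular $\Delta_{L_0} \lesssim \Delta_{S_0} \asymp \lambda_2$, i.e.\ $\lambda_2 \asymp \Delta_{S_0}+\Delta_{L_0}$ up to constants.

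Next, I would verify that the displayed thresholds meet all hypotheses. For the error bound, $a_1 = {\scriptstyle\mathcal{O}}((m_1m_2)^{1/4})$ implies $a_1 = \mathcal{O}(\sqrt{m_1m_2}(d\log d/n)^{1/4})$ because $n \le m_1m_2 \le m_1m_2 d\log d$, placing us in Scenario~(iii) of Corollary~\ref{coro:exact_lowrank} (and not in the faster Scenario~(ii), whose lower requirement on $a_1$ fails here). For the rank and sparsity bounds I would substitute the orders of $\Delta_{L_0},\Delta_{S_0}$ from the first step into the constraints $a_1 = {\scriptstyle\mathcal{O}}\big((a_1+1)\lambda_1(\{\Delta_{L_0}+\Delta_{S_0}\}^2 Gd\log d/(nm_1m_2))^{-1/4}\big)$ of Theorem~\ref{thm:lowrankness} and $a_2 = {\scriptstyle\mathcal{O}}(\lambda_2\{\Delta_{L_0}+\Delta_{S_0}\}^{-1/2})$ of Theorem~\ref{thm:sparsity}; after simplification these reduce to exactly $a_1 = {\scriptstyle\mathcal{O}}((m_1m_2)^{1/4})$ and $a_2 = {\scriptstyle\mathcal{O}}((d\log d/n)^{1/4})$, and the remaining conditions ($\lambda_2^{-1} = \mathcal{O}(\{(\sigma\vee\zeta)\log d/N\}^{-1})$ and $n \gtrsim d\log d$) hold automatically.

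Then I would read off the three conclusions. From Theorem~\ref{thm:sparsity}, since $\lambda_2 \asymp \Delta_{S_0}+\Delta_{L_0}$, its closing sentence gives $\|\hat S\|_0 = \mathcal{O}_p(s_0)$. From Theorem~\ref{thm:lowrankness}, dividing its bound by $a_1+1$ and using $\Phi_{a_1}(L_0)/(a_1+1)\le r_0$ together with the first-step orders shows each term inside the $\max$ is $\mathcal{O}((a_1+1)r_0)$ (the key inequality being $(a_1+1)\lambda_1 \gtrsim Gd\log d/N$, which holds since $(a_1+1)\lambda_1 \gtrsim (\sigma\vee\zeta)\zeta\sqrt{Gd\log d/n}$ and $n \gtrsim d\log d$), whence $\mathrm{rank}(\hat L) = \mathcal{O}_p(r_0)$. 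For the error, \eqref{eqn:slow-error-bound} gives $\|\hat L - L_0\|_F^2/(m_1m_2) \lesssim r_0\sqrt{Gd\log d/n} + \Upsilon(n,m_1,m_2)$ and \eqref{general_bound_S} gives $\|\hat S - S_0\|_F^2/(m_1m_2) \lesssim s_0(\sigma\vee\zeta)^2\log d/(m_1m_2) + C_2\lambda_2 N\phi_{a_2}(S_0)/(m_1m_2)$; plugging in the first-step orders and using $\beta\asymp 1$, $n \le m_1m_2$, $\log d \ge 1$, one checks that $\nu\zeta^2\sqrt{\log d/n}$, $\beta\Delta_{S_0}$, and the sparse cross term $\lambda_2 N\phi_{a_2}(S_0)/(m_1m_2) \lesssim s_0(\sigma\vee\zeta)\sqrt{d\log d/n}$ are all absorbed into $r_0\sqrt{d\log d/n}$, while $4\zeta^2 s_0/(m_1m_2)$ and $C_1 s_0(\sigma\vee\zeta)^2\log d/(m_1m_2)$ are absorbed into $s_0\log d/(m_1m_2)$, giving the stated combined rate. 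A union bound over the events of the four invoked results (each of probability at least $1-(\kappa+3)/d$ or $1-2/d$) completes the statement.

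The main obstacle I expect is the joint calibration in the second step: $a_1,a_2$ must be small enough for the rank and sparsity bounds to collapse to $\mathcal{O}_p(r_0)$ and $\mathcal{O}_p(s_0)$, yet the error bounds deteriorate as $a_1,a_2\to 0$ through the factors $(a_1+\zeta\sqrt{m_1m_2})/a_1$ and $(a_2+1)/a_2$, so one must confirm that the single pair of thresholds $(m_1m_2)^{1/4}$ and $(d\log d/n)^{1/4}$ lies in the (nonempty) window where both demands hold simultaneously, and that the two ${\scriptstyle\mathcal{O}}$-constraints from Theorems~\ref{thm:sparsity} and \ref{thm:lowrankness} remain mutually consistent after the $\Delta_{L_0},\Delta_{S_0}$ substitutions. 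Everything else is routine order bookkeeping: identifying the active $\min$-branches and absorbing lower-order remainders into the two displayed rates.
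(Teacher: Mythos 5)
Your proposal is correct and follows essentially the same route as the paper's own (much terser) proof: specialize Scenario (iii) of Corollary \ref{coro:exact_lowrank}, Theorem \ref{thm:gen_upp_S}, Theorem \ref{thm:sparsity}, and Theorem \ref{thm:lowrankness} to the prescribed hyperparameters, check that the ${\scriptstyle\mathcal{O}}$-constraints on $a_1,a_2$ collapse to $(m_1m_2)^{1/4}$ and $(d\log d/n)^{1/4}$, and absorb the lower-order terms (your explicit verification that $(a_1+1)\lambda_1\gtrsim Gd\log d/N$ and that $\lambda_2\asymp\Delta_{S_0}+\Delta_{L_0}$ is exactly the bookkeeping the paper leaves implicit). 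The only mismatch is that the paper's proof substitutes $\lambda_2\asymp(\sigma\vee\zeta)d\log d/N$ rather than the $\sqrt{d\log d/N}$ scaling in the corollary's statement, an internal inconsistency of the paper that you resolve in favor of the statement without affecting the conclusion.
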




\begin{remark}
   Corollary \ref{cor:LR} has a limitation: although it provides bounds on the cardinality and rank of the estimated components, these bounds are only of constant order and do not guarantee exact recovery of the true sparsity or rank. Consequently, the conclusions do not reflect the ideal scenario where the estimated support or rank exactly matches the true underlying structure. Investigating whether the oracle property can be achieved (i.e., exact recovery of the true sparsity pattern or rank under suitable conditions) remains an important direction for future research. 
\end{remark}

\section{Experimental results}\label{sec:experiments}
We conduct extensive experiments to demonstrate the performance of the proposed TL1 approach in comparison to  the convex L1 approach \cite{candes2011robust}.
Quantitatively, we evaluate the performance in terms of relative error (RE) and Dice's coefficient (DC), defined as
\begin{equation*}
    \RE(\hat L,L_0)= \frac{\|\hat{L} - L_0\|_F}{\|L_0\|_F} \quad \text{and} \quad \DC(\hat S, S_0)=\frac{2|\supp(\hat S) \cap \supp(S_0)|}{|\supp(\hat S)| + |\supp(S_0)|},
\end{equation*}
where $(\hat{L},\hat S)$ is a pair of estimators of the ground truth $(L_0,  S_0)$ and $\supp(A)$ indicates the support set of the matrix $A$. 
Note that a higher DC score indicates a better recovery of the sparse structure.  
All the experiments are run on a Matebook 16 with an Intel i7-12700H chip and 16GB of memory, and the code implementation is publicly available on our GitHub: \href{https://github.com/zhanghaoke/Transformed-L1-Regularizations-for-Robust-Principal-Component-Analysis.git}{Transformed L1 Regularizations for RPCA}.

After discussing the experimental setup and parameter tuning in Section \ref{appendix:param}, we present simulation results in Section \ref{sect:simu} and video background separation in Section \ref{sec:realdata}.

\subsection{Experimental setup and parameter tuning}
\label{appendix:param}

    


We elaborate on two different sampling schemes for our \textit{simulated data}. Specifically, for each $(k,l) \in [m_1] \times [m_2]$, we define $\pi_{kl}$ as follows 
\begin{enumerate}[leftmargin=*]
    \item Uniform setting: $\pi_{kl} = 1/(m_1m_2)$.
    \item Non-uniform setting: $\pi_{kl} = p_kp_l$,
    where $p_k$ (or $p_l$) satisfies: 
    \begin{equation*}
         p_k = 
            \begin{cases}
            2p_0, & \text{if } k \leq \frac{m_1}{10}, \\
            4p_0, & \text{if } \frac{m_1}{10} < k \leq \frac{m_1}{5}, \\
            p_0, & \text{otherwise},
            \end{cases}
    \end{equation*}
    where $p_0$ is a normalized constant such that $\sum_{k=1}^{m_1} p_k = 1$. 
\end{enumerate}
   Then, for each entry in the matrix, we multiply $p_kp_l$ by a random number $r_{kl} \in [0, 1]$ to generate a score matrix $S_{kl} = p_kp_l \cdot r_{kl}$. We select the top entries in $S$ according to the given sampling ratio.

For parameter tuning, we begin with a candidate set of values for the respective hyperparameters for both TL1 and L1, as follows:
\begin{itemize}
    \item For TL1,
    \begin{itemize}
        \item $ a_1, a_2 \in \{10^{-2}, 5 \times 10^{-2}, 10^{-1}, 1, 10, 10^2\} $
        \item $ \lambda_1, \lambda_2 \in \{10^{-1},10^{-2},10^{-3},10^{-4}, 10^{-5},10^{-6},10^{-7},10^{-8},10^{-9}\} $
    \end{itemize}
    \item For L1,
    \begin{itemize}
        \item $ \lambda_1, \lambda_2 \in \{10^{-1},10^{-2},10^{-3},10^{-4}, 10^{-5},10^{-6},10^{-7},10^{-8},10^{-9}\} $
    \end{itemize}
\end{itemize}

For the \textit{simulation and synthetic video experiments}, where the ground-truth is available, we perform a grid search to find the optimal parameters that yield the minimum relative error (RE) between the recovered low-rank matrix and the ground-truth. 

For the \textit{real video datasets} without ground truth, we select a combination of parameters that (i) yields the smallest nonzero rank of $\hat{L}$, (ii) ensures the sparsity of $\hat{S}$ is below 40\%, and (iii) achieves a relative reconstruction error (i.e., $\|Y-\hat{L}-\hat{S}\|_F / \|Y\|_F$) of less than 1\%. This selection criterion is applied consistently to both L1 and TL1 models.

Following Algorithm~\ref{alg:TL1-TL1}, we implement the TL1 model by ourselves and set the initial values $\rho_1 = \rho_2 = 10^{-7}$ and progressively reduce the step size during iterations, since it only influences the convergence speed without impacting the final performance.

\subsection{Simulation results}
\label{sect:simu}
We generate a low rank matrix $L_0 \in \mathbb{R}^{m_1 \times m_2}$ as the product of two matrices of smaller dimensions, i.e., $L_0 = UV^\top$, where $U \in \mathbb{R}^{m_1 \times r}$, $V \in \mathbb{R}^{m_2 \times r}$ with each entry of $U$ and $V$ independently sampled from a zero-mean Gaussian distribution with the variance $1/r$, i.e., $\mathcal{N}(0,1/r)$, and consequently, the rank of $L_0$ is at most $r\ll \min(m_1, m_2)$.
The sparse matrix $S_0$ is generated by choosing a support set $\tilde{\mathcal I}$ of cardinality $k$, and setting the non-zero entries as independent samples from the uniform distribution on $[-1,1]$.
We examine two sampling schemes: uniform and non-uniform, as defined in Subsection \ref{appendix:param}, both implemented without replacement.

We define sampling ratios as $\mathrm{SR} = N / (m_1 m_2)$ and set the value of $\sigma$ in \eqref{trace regression model} such that the signal-to-noise ratio, defined by $\mathrm{SNR} = 10 \log_{10}\left(1/( N\sigma^2 ) \sum_{i=1}^{N} \langle T_i, A_0 \rangle^2 \right)$, is 20 for noisy observations. 
We explore various combinations of dimensions ($300\times 300$ or $1000\times 1000$), ranks (5 or 10), sampling schemes (uniform or non-uniform), and sampling ratios (SR $=0.2$ or 0.4) under both noise-free and noisy cases. For each combination, we select the optimal parameters for each competing method, namely L1 and TL1. 

We report the recovery results for matrices of size $300\times 300$ and $1000\times 1000$ in Table \ref{tab:sim} and Table \ref{tab: results for 1000*1000}, respectively. 
Across all settings, the proposed TL1 approach consistently outperforms L1 in terms of smaller relative errors, lower estimated ranks, and more accurate identification of sparse structures. 
Furthermore, while L1 suffers a significant performance drop under non-uniform sampling relative to the uniform case, the proposed method demonstrates robustness when the sampling rate is sufficiently high (e.g., 0.4).


{\fontsize{16}{17}\selectfont 
\setlength{\tabcolsep}{3.5pt}
\begin{table}[h!]
\centering
\small
\renewcommand{\arraystretch}{1.1} 
\caption{Simulation results for matrices of dimension $300 \times 300$ under different schemes in both noisy (SNR = 20) and noise-free settings. Reported values are the mean over 100 trials, with the standard deviation shown in parentheses.}
\label{tab:sim}
\begin{tabular}{c|cccc|cccc}
\hline 
\multicolumn{9}{c}{Uniform sampling with SNR=20} \\
\hline
 & \multicolumn{4}{c}{\textbf{L1}} &  \multicolumn{4}{c}{\textbf{TL1}} \\
\hline
  (SR,$r$) & $\RE(\hat L,L_0)$ & rank($\hat L$)& $\DC(\hat S, S_0)$ & runtime (\textit{sec.}) &  $\RE(\hat L,L_0)$ &rank($\hat L$)& $\DC(\hat S, S_0)$ &  runtime (\textit{sec.}) \\
\hline

             (0.2, 5) 
            & 0.116 (0.003) & 13 (1.3) & 0.41 (0.02) &1.52
             &0.067 (0.004)& 7 (1.0) &0.85 (0.02)  &1.50 \\ 

             (0.2, 10) & 
             0.220 (0.008)& 73 (1.6) & 0.57 (0.03) &0.75
              &0.154 (0.008)&14 (1.1) &0.80 (0.02) & 1.44 \\
             
             (0.4, 5) 
             & 0.039 (0.001)& 9 (1.9) &0.87 (0.01)  &1.44
              &0.024 (0.001)& 5 (0.5) &0.91 (0.01) & 1.21 \\
                         
             (0.4, 10) 
             & 0.085 (0.002)&38 (1.5) &0.85 (0.02) &0.94
              &0.065 (0.002)&10 (0.5) &0.91 (0.01) & 1.26\\ 
\hline

\multicolumn{9}{c}{Non-uniform sampling with SNR=20} \\

\hline
& \multicolumn{4}{c}{\textbf{L1}} &  \multicolumn{4}{c}{\textbf{TL1}} \\
\hline
(SR,$r$) & $\RE(\hat L,L_0)$ & rank($\hat L$)& $\DC(\hat S, S_0)$ & runtime (\textit{sec.}) &  $\RE(\hat L,L_0)$ &rank($\hat L$)& $\DC(\hat S, S_0)$  &  runtime (\textit{sec.}) \\
\hline

            (0.2, 5) 
            & 0.612 (0.019)&27 (2.1) &0.48 (0.02) &1.19
            &0.594 (0.020)&10 (0.0) &0.87 (0.02) & 1.02  \\ 

            (0.2, 10) & 
             0.641 (0.013)&79 (1.5) &0.41 (0.04) &1.01
             &0.598 (0.014)&19 (0.3) &0.77 (0.02) & 1.16 \\
            
             (0.4, 5) 
             & 0.157 (0.011)&23 (2.0) &0.64 (0.03) &1.49
             &0.027 (0.001)&5 (0.0) &0.92 (0.01) &1.25  \\
                         
             (0.4, 10) 
             & 0.291 (0.009)&75 (1.7) &0.61 (0.02) &2.07
             &0.063 (0.007)&10 (0.0) &0.91 (0.01) & 2.15  \\ 
\hline

\multicolumn{9}{c}{Uniform sampling without noise} \\

\hline
& \multicolumn{4}{c}{\textbf{L1}} &  \multicolumn{4}{c}{\textbf{TL1}} \\
\hline
(SR,$r$) & $\RE(\hat L,L_0)$ & rank($\hat L$)& $\DC(\hat S, S_0)$ & runtime (\textit{sec.}) &  $\RE(\hat L,L_0)$ &rank($\hat L$)& $\DC(\hat S, S_0)$  &  runtime (\textit{sec.}) \\
\hline

           (0.2, 5) 
            & 0.102 (0.004)&7 (0.8) &0.51 (0.02) &1.44
            &0.050 (0.004)& 5 (0.6) & 0.90 (0.01)&   1.54 \\ 
                       
            (0.2, 10) & 
            0.187 (0.009) & 62 (2.2) & 0.64 (0.03)  &1.20
            &0.115 (0.004) &13 (0.6) &0.85 (0.01)&   1.82 \\ 
  
            (0.4, 5) 
            & 0.009 (0.003) &5 (0.0) & 0.88 (0.01) &1.74
            &0.005 (0.004)&5 (0.0) &0.94 (0.01)&   1.62\\ 
             
             (0.4, 10) 
             & 0.033 (0.004)& 13 (2.1) & 0.87 (0.02) &1.18
             & 0.024 (0.004)& 10 (0.0) &0.93 (0.01)&  1.34\\ 
\hline

\multicolumn{9}{c}{Non-uniform sampling without noise} \\

\hline
& \multicolumn{4}{c}{\textbf{L1}} &  \multicolumn{4}{c}{\textbf{TL1}} \\
\hline
(SR,$r$) & $\RE(\hat L,L_0)$ & rank($\hat L$)& $\DC(\hat S, S_0)$ & runtime (\textit{sec.}) &  $\RE(\hat L,L_0)$ &rank($\hat L$)& $\DC(\hat S, S_0)$  &  runtime (\textit{sec.}) \\
\hline

            (0.2, 5) 
            & 0.610 (0.019)&11 (0.9) &0.63 (0.03) &0.79
              &0.576 (0.027)&10 (0.4) &0.93 (0.02)&  1.10  \\ 
                                 
             (0.2, 10) & 
             0.635 (0.013)&70 (2.4) &0.49 (0.03) &0.86
               &0.595 (0.019)&19 (0.2) &0.87 (0.02)&  0.99  \\
               
             (0.4, 5) 
             & 0.136 (0.011)&10 (0.6) &0.68 (0.02) &1.37
              &0.005 (0.002)&5 (0.0) &0.93 (0.01)&  1.25  \\
              
             (0.4, 10) 
             & 0.227 (0.010)&34 (1.7) &0.70 (0.02) &1.24
               &0.023 (0.007)&10 (0.0) &0.93 (0.01)&  1.08  \\ 

\hline

\end{tabular}
\end{table}
}

{\fontsize{16}{17}\selectfont 
\setlength{\tabcolsep}{3.5pt}
\begin{table}[h!]
\centering
\small
\renewcommand{\arraystretch}{1.1} 
\caption{Simulation results for matrices of dimension $1000 \times 1000$ under different schemes in both noisy (SNR = 20) and noise-free settings. Due to time constraints, reported values are the mean over 10 trials, with the standard deviation shown in parentheses.}
\label{tab: results for 1000*1000}
\begin{tabular}{c|cccc|cccc}
\hline
\multicolumn{9}{c}{Uniform sampling wit SNR=20} \\
\hline
& \multicolumn{4}{c}{\textbf{L1}} &  \multicolumn{4}{c}{\textbf{TL1}} \\
\hline
(SR,$r$) & $\RE(\hat L,L_0)$ & rank($\hat L$)& $\DC(\hat S, S_0)$ & runtime (\textit{sec.}) &  $\RE(\hat L,L_0)$ &rank($\hat L$)& $\DC(\hat S, S_0)$  &  runtime (\textit{sec.}) \\
\hline
           (0.2,5) 
            & 0.046 (0.001)&7 (1.2) &0.82 (0.01) &13.26
            &0.021 (0.004)& 5 (0.0) & 0.92 (0.00)&   11.90 \\ 
            
             (0.2,10) & 
             0.066 (0.002) & 92 (2.6) & 0.82 (0.01) & 11.46
            &0.041 (0.004) &10 (0.0) &0.92 (0.00)&    13.26 \\
               
            (0.4,5) 
             & 0.044 (0.001) &5 (0.0) & 0.81 (0.01) & 10.73
            &0.011 (0.004)&5 (0.0) &0.93 (0.01)&   9.19\\
             
             (0.4,10) 
             & 0.062 (0.001)& 65 (2.5) & 0.87 (0.01) &10.53
            & 0.024 (0.004)& 10 (0.0) &0.93 (0.01)& 13.06\\ 
\hline
\multicolumn{9}{c}{Non-uniform sampling with SNR=20} \\
\hline
& \multicolumn{4}{c}{\textbf{L1}} &  \multicolumn{4}{c}{\textbf{TL1}} \\
\hline
(SR,$r$) & $\RE(\hat L,L_0)$ & rank($\hat L$)& $\DC(\hat S, S_0)$ & runtime (\textit{sec.}) &  $\RE(\hat L,L_0)$ &rank($\hat L$)& $\DC(\hat S, S_0)$  &  runtime (\textit{sec.}) \\
\hline

           (0.2,5) 
            & 0.603 (0.009)&10 (0.0) &0.81 (0.01) &9.12
              &0.601 (0.004)& 10 (0.0) & 0.93 (0.00) &  8.26 \\ 
                     
             (0.2,10) & 
             0.596 (0.006) & 20 (0.0) & 0.81 (0.01)  &15.58
               &0.562 (0.007) &17 (0.6) &0.91 (0.01) &12.42 \\
               
              (0.4,5) 
             & 0.064 (0.001) &5 (0.0) & 0.88 (0.00) &10.84
               &0.012 (0.003)&5 (0.0) &0.93 (0.00)&   11.68\\
             
             (0.4,10) 
             & 0.119 (0.001)& 10 (0.0) & 0.87 (0.00) &19.08
              & 0.025 (0.002)& 10 (0.0) &0.93 (0.00) &  21.56\\ 
\hline

\hline
\multicolumn{9}{c}{Uniform sampling without noise} \\
\hline
& \multicolumn{4}{c}{\textbf{L1}} &  \multicolumn{4}{c}{\textbf{TL1}} \\
\hline
(SR,$r$) & $\RE(\hat L,L_0)$ & rank($\hat L$)& $\DC(\hat S, S_0)$ & runtime (\textit{sec.}) &  $\RE(\hat L,L_0)$ &rank($\hat L$)& $\DC(\hat S, S_0)$  &  runtime (\textit{sec.}) \\
\hline

           (0.2,5) 
            & 0.017 (0.001)&5 (0.0) &0.85 (0.01) & 13.41
              &0.004 (0.001)& 5 (0.0) & 0.94 (0.00)&   11.64  \\

             (0.2,10)  
             & 0.032 (0.002) &17 (1.4) & 0.84 (0.01) &16.56
               &0.009 (0.001) &10 (0.0) &0.93 (0.01)&  11.73  \\
             
              (0.4,5) 
             &0.002 (0.000) & 5 (0.0) & 0.88 (0.01) &10.87
               &0.001 (0.000)&5 (0.0) &0.93 (0.00) &  9.37\\
             
             (0.4,10) 
             & 0.006 (0.000)& 10 (0.0) & 0.87 (0.01) &10.94
              & 0.002 (0.000)& 10 (0.0) &0.93 (0.01)& 9.06 \\
\hline
\multicolumn{9}{c}{Non-uniform sampling without noise} \\
\hline
& \multicolumn{4}{c}{\textbf{L1}} &  \multicolumn{4}{c}{\textbf{TL1}} \\
\hline
(SR,$r$) & $\RE(\hat L,L_0)$ & rank($\hat L$)& $\DC(\hat S, S_0)$ & runtime (\textit{sec.}) &  $\RE(\hat L,L_0)$ &rank($\hat L$)& $\DC(\hat S, S_0)$  &  runtime (\textit{sec.}) \\
\hline
           (0.2,5) 
            & 0.603 (0.008)&10 (0.0) &0.82 (0.01) &9.02
              &0.599 (0.001)& 10 (0.6) & 0.94 (0.00)&   10.30 \\ 
            
             (0.2,10) & 
             0.595 (0.006) & 20 (0.0) & 0.81 (0.02)  &8.92
              &0.568 (0.011) &18 (2.1) &0.94 (0.01)  & 10.31
             \\
              (0.4,5) 
             & 0.056 (0.001) &5 (0.0) & 0.89 (0.01) &10.74
               &0.001 (0.000)&5 (0.0) &0.93 (0.00)&   11.41
             \\
             
             (0.4,10) 
             & 0.108 (0.001)& 10 (0.0) & 0.88 (0.01) &10.83
               & 0.002 (0.000)& 10 (0.0) &0.93 (0.00)&  11.45
             \\ 
\hline
\end{tabular}
\end{table}
}

\textbf{Discussion 1.} When the parameters $a_1$ and 
$a_2$ are set too large, the TL1 penalty closely approximates the L1 norm, thereby diminishing the advantages of using TL1. This observation suggests us that the search range for $a_1$ and $a_2$ can be narrowed accordingly, reducing computational cost without sacrificing performance.
Specifically, for example, Table \ref{tab:tl1-params} reports the optimal parameter values under the uniform sampling setting with SNR = 20, showing that both $a_1$ and $a_2$ remain moderate in magnitude.

{\fontsize{19}{20}\selectfont 
\setlength{\tabcolsep}{9pt}
\begin{table}[h!]
\centering
\small
\caption{Optimal parameter settings under uniform sampling with SNR = 20.}
\label{tab:tl1-params}
\begin{tabular}{c c c c c c}
\toprule
$m_1=m_2$ & (rank, SR) & $\lambda_1$ & $\lambda_2$ & $a_1$ & $a_2$ \\
\midrule
300  & (5, 0.2)  & $10^{-4}$ & $10^{-6}$ & 10 & 0.1 \\
300  & (5, 0.4)  & $10^{-4}$ & $10^{-6}$ & 10 & 0.1 \\
300  & (10, 0.2) & $10^{-4}$ & $10^{-6}$ & 10 & 0.1 \\
300  & (10, 0.4) & $10^{-4}$ & $10^{-6}$ & 10 & 0.1 \\
\hline
1000 & (5, 0.2)  & $10^{-4}$ & $10^{-7}$ & 1  & 1   \\
1000 & (5, 0.4)  & $10^{-4}$ & $10^{-7}$ & 10 & 1   \\
1000 & (10, 0.2) & $10^{-4}$ & $10^{-7}$ & 1  & 1   \\
1000 & (10, 0.4) & $10^{-4}$ & $10^{-7}$ & 10 & 1   \\
\bottomrule
\end{tabular}
\end{table}
}

\textbf{Discussion 2} (Sensitivity analysis)\textbf{.} To examine the sensitivity of the proposed TL1 model to its parameters, we conduct a series of controlled experiments. In the first setting, we fix $a_1, a_2$ at their empirically optimal values and vary $\lambda_1, \lambda_2$ to assess the impact on performance in terms of relative error, Dice’s coefficient, and rank. Conversly, in the second setting, we then fix $\lambda_1$ and $\lambda_2$ at their optimal values, while varying $a_1$ and $a_2$ to evaluate their influence.
Table \ref{tab:lambda-grid} and Table \ref{tab:a-grid} report the results for both cases, respectively, under the configuration $m_1 = m_2 = 300, \, r = 5, \, \text{SR}=0.2, \, \text{SNR}=20, \, \text{uniform sampling}$. Each entry in the tables represents a tuple of values: relative error, Dice’s coefficient, and the rank 
of the recovered low-rank matrix.

Comparing Tables \ref{tab:lambda-grid} and \ref{tab:a-grid}, it is evident that the model is more sensitive to the values of $\lambda_1$ and $\lambda_2$, compared to $a_1$ and $a_2$. This is reasonable, as the $\lambda$ parameters (whether $\lambda_1$ or $\lambda_2$) correspond to the noise level: the smaller the amount of noise, the smaller their values. When $\lambda$ deviates from the optimal range, the recovered low-rank or sparse matrix may degenerate; in some cases, either the low-rank component or the sparse component becomes entirely zero, indicating a failure of decomposition. In contrast, varying $a_1$ and $a_2$ around their optimal values produces relatively minor changes, and the recovery remains stable in terms of both accuracy and sparse structure. This behavior suggests that the TL1 penalty is insensitive to $a_1$ and $a_2$, provided they lie within reasonable ranges.

{\fontsize{19}{20}\selectfont 
\setlength{\tabcolsep}{8pt}
\begin{table}[h!]
\centering
\small
\caption{Results with tuples (RE, Dice, rank) across a grid of $(\lambda_1,\lambda_2)$.}
\label{tab:lambda-grid}
\begin{tabular}{lccccc}
\toprule
$\lambda_1 \,\backslash\, \lambda_2$ & $10^{-4}$ & $10^{-5}$ & $10^{-6}$ & $10^{-7}$ & $10^{-8}$ \\
\midrule
$10^{-2}$ & (0.530, 0.00, 4) & (0.346, 0.54, 4) & (0.156, 0.35, 5) & (0.783, 0.09, 2) & (0.796, 0.08, 2) \\
$10^{-3}$ & (0.080, 0.00, 6) & (0.073, 0.53, 5) & (0.073, 0.78, 5) & (0.088, 0.52, 5) & (0.091, 0.49, 5) \\
$10^{-4}$ & (0.076, 0.00, 9) & (0.072, 0.14, 9) & \textbf{(0.056, 0.87, 6)} & (0.078, 0.64, 6) & (0.082, 0.60, 5) \\
$10^{-5}$ & (0.109, 0.00, 52) & (0.109, 0.00, 52) & (0.097, 0.70, 42) & (0.095, 0.80, 37) & (0.094, 0.82, 32) \\
$10^{-6}$ & (0.110, 0.00, 57) & (0.110, 0.00, 57) & (0.106, 0.44, 52) & (0.105, 0.69, 51) & (0.104, 0.71, 48) \\
\bottomrule
\end{tabular}
\end{table}
}

{\fontsize{19}{20}\selectfont 
\setlength{\tabcolsep}{8pt}
\begin{table}[h!]
\centering
\small
\caption{Results with tuples (RE, Dice, rank) across a grid of $(a_1,a_2)$.}
\label{tab:a-grid}
\begin{tabular}{lccccc}
\toprule
$a_1 \,\backslash\, a_2$ & 0.01 & 0.05 & 0.1 & 1 & 10 \\
\midrule
0.1   & (0.102, 0.76, 41) & (0.102, 0.77, 41) & (0.102, 0.76, 41) & (0.101, 0.80, 36) & (0.099, 0.81, 30) \\
1     & (0.062, 0.78, 9)  & (0.059, 0.83, 9)  & (0.059, 0.86, 9)  & (0.068, 0.86, 7)  & (0.071, 0.84, 7) \\
10    & (0.057, 0.84, 6)  & (0.056, 0.86, 6)  & \textbf{(0.056, 0.87, 6)} & (0.065, 0.77, 6)  & (0.070, 0.71, 5) \\
100   & (0.064, 0.84, 5)  & (0.064, 0.86, 5)  & (0.067, 0.84, 5)  & (0.092, 0.67, 5)  & (0.097, 0.60, 5) \\
1000  & (0.093, 0.84, 5)  & (0.094, 0.85, 5)  & (0.101, 0.77, 5)  & (0.148, 0.51, 5)  & (0.158, 0.43, 5) \\
\bottomrule
\end{tabular}
\end{table}
}

\subsection{Video background separation}\label{sec:realdata}

We demonstrate a real-world application of RPCA using three video datasets: one synthetic and two real-world. Each video frame is treated as a matrix in $\mathbb{R}^{w \times h}$, which is then reshaped into a column vector in $\mathbb{R}^{wh}$. By stacking these vectors from $t$ frames, we construct the data matrix $X \in \mathbb{R}^{(wh)\times t}$ and aim for its decomposition into a low-rank matrix, corresponding to a static background (BG), and a sparse matrix, corresponding to moving objects, referred to as foreground (FG). This application does not involve any sampling scheme, i.e., $T$ in \eqref{eq:full_optimization} is the all-one matrix. After RPCA by either L1 or TL1, we reshape a particular column in the recovered low-rank and sparse components, $\hat{L}$ and $\hat{S}$, back into 2D frames for visualization.


We generate a synthetic video sequence composed of a fixed random noise as background and a moving foreground object, that is a white square traveling diagonally from the top-left to the bottom-right corner over time. Three particular frames are presented in Figure \ref{fig:syn-video-appendix}. Each frame is represented as a 2D image, which we reshape into a vector. By stacking these vectors column-wise across time, we form two matrices: a rank-one matrix representing the static background and a sparse matrix encoding the moving foreground object. The sum of these two matrices yields the final data matrix, which serves as the input for RPCA. 

The visual comparison results in Figure  \ref{fig:syn-video-appendix} demonstrate that the proposed TL1 model preserve both the low-rank structure of the static background and the sparse, dynamic background, more effectively than L1. For example, the moving squares recovered by TL1 across time exhibit fewer artifacts and sharper boundaries, indicating more accurate separation. In contrast, the background estimated by the standard L1 model retains noticeable motion trails, suggesting incomplete separation and contamination by foreground elements. Moreover, we evaluate the quantitative performance in Table~\ref{tab:synthetic video result}, which shows that TL1 regularization achieves superior recovery quality compared to L1. Specifically, TL1 yields a lower relative error, recovers a background matrix with exactly rank one, and achieves higher DC values, indicating better separation of low-rank and sparse components.


\begin{figure}[htbp]
    \centering
    \begin{tabular}{cccc}
    Video frame & Ground-truth & Recovered by L1 & Recovered by TL1\\
           \includegraphics[width=0.225\textwidth]{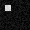} 
       & \includegraphics[width=0.225\textwidth]{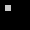}&  
         \includegraphics[width=0.225\textwidth]{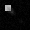} &
            \includegraphics[width=0.225\textwidth]{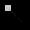}\\
            &  
       \includegraphics[width=0.225\textwidth]{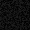} &
         \includegraphics[width=0.225\textwidth]{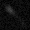}&
        \includegraphics[width=0.225\textwidth]{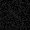}\\

        \includegraphics[width=0.225\textwidth]{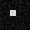} 
       & \includegraphics[width=0.225\textwidth]{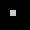}&  
         \includegraphics[width=0.225\textwidth]{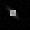} &
            \includegraphics[width=0.225\textwidth]{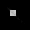}\\
            &  
       \includegraphics[width=0.225\textwidth]{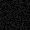} &
         \includegraphics[width=0.225\textwidth]{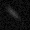}&
        \includegraphics[width=0.225\textwidth]{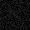}\\  
        \includegraphics[width=0.225\textwidth]{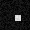} 
       & \includegraphics[width=0.225\textwidth]{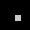}&  
         \includegraphics[width=0.225\textwidth]{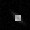} &
            \includegraphics[width=0.225\textwidth]{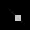}\\
            &  
       \includegraphics[width=0.225\textwidth]{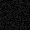} &
         \includegraphics[width=0.225\textwidth]{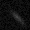}&
        \includegraphics[width=0.225\textwidth]{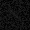}\\  

    \end{tabular}
    \caption{synthetic-video background separation: three particular frames are presented with the foreground (sparse component) shown in the odd rows and the background (low-rank component) in the even rows.}
    \label{fig:syn-video-appendix}
\end{figure}

\begin{figure}[htbp]
    \centering
  \begin{tabular}{ccccc}
  \small{Frame} & \small{FG by L1} & \small{FG by TL1} & \small{BG by L1} &\small{BG by TL1}\\
      \includegraphics[width=0.18\textwidth]{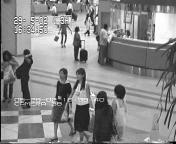} &  
              \includegraphics[width=0.18\textwidth]{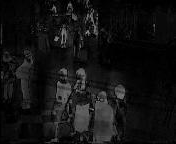}&
          \includegraphics[width=0.18\textwidth]{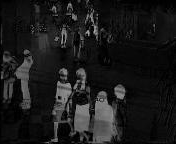}&
      \includegraphics[width=0.18\textwidth]{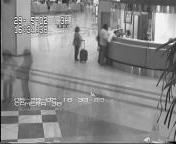}&
      \includegraphics[width=0.18\textwidth]{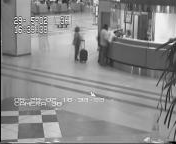}
      \\

      \includegraphics[width=0.18\textwidth]{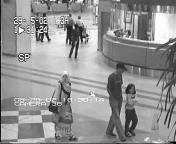} &  
              \includegraphics[width=0.18\textwidth]{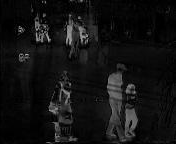}&
          \includegraphics[width=0.18\textwidth]{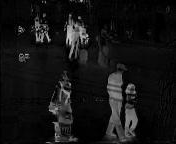}&
      \includegraphics[width=0.18\textwidth]{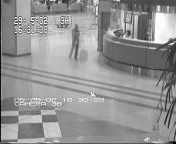}&
      \includegraphics[width=0.18\textwidth]{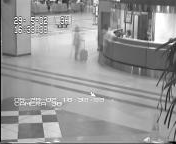}
      \\

      \includegraphics[width=0.18\textwidth]{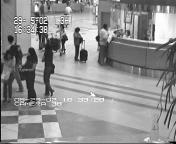} &  
              \includegraphics[width=0.18\textwidth]{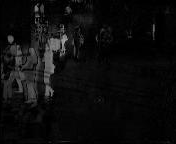}&
          \includegraphics[width=0.18\textwidth]{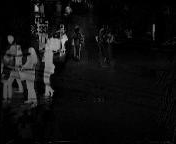}&
      \includegraphics[width=0.18\textwidth]{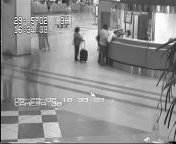}&
      \includegraphics[width=0.18\textwidth]{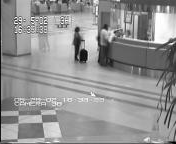}
      \\
  \end{tabular}
    \caption{Airport video background separation: three particular frames are shown.}
    \label{fig:real-video-appendix-1}
\end{figure}

\begin{figure}[htbp]
    \centering
  \begin{tabular}{ccccc}
  \small{Frame} & \small{FG by L1} & \small{FG by TL1} & \small{BG by L1} &\small{BG by TL1}\\

      \includegraphics[width=0.18\textwidth]{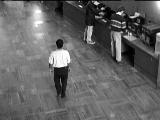} &  
              \includegraphics[width=0.18\textwidth]{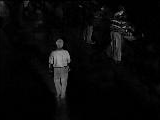}&
          \includegraphics[width=0.18\textwidth]{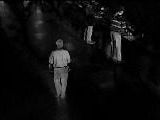}&
      \includegraphics[width=0.18\textwidth]{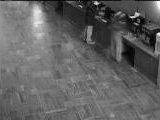}&
      \includegraphics[width=0.18\textwidth]{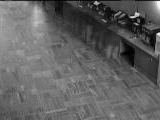}
      \\

      \includegraphics[width=0.18\textwidth]{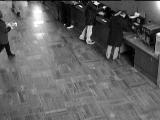} &  
              \includegraphics[width=0.18\textwidth]{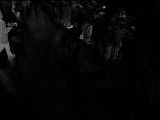}&
          \includegraphics[width=0.18\textwidth]{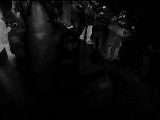}&
      \includegraphics[width=0.18\textwidth]{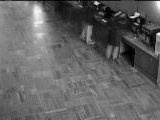}&
      \includegraphics[width=0.18\textwidth]{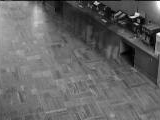}
      \\

      \includegraphics[width=0.18\textwidth]{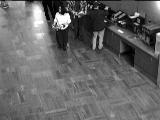} &  
              \includegraphics[width=0.18\textwidth]{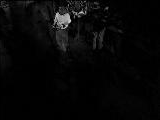}&
          \includegraphics[width=0.18\textwidth]{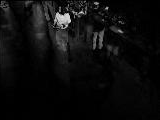}&
      \includegraphics[width=0.18\textwidth]{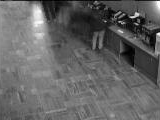}&
      \includegraphics[width=0.18\textwidth]{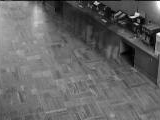}
      \\
  \end{tabular}
    \caption{Buffet restaurant video background separation: three particular frames are shown.}
    \label{fig:real-video-appendix-2}
\end{figure}

We then examine the video separation using two real video sequences, referred to as Airport and Buffet restaurant, both of which are publicly available\footnote{\url{https://sites.google.com/site/backgroundsubtraction/test-sequences/human-activities}}. Figure \ref{fig:real-video-appendix-1} shows three representative frames of the raw Airport data along with the corresponding foreground and background reconstructions using L1 or TL1. The foreground results obtained with TL1 effectively reduce false detections caused by ground shadows, clearly distinguishing true moving objects from shadow interference. In contrast, the L1-based results fail to fully suppress penumbra regions, leaving residual shadows in the background reference images. The video separation of the Buffet restaurant is presented in Figure \ref{fig:real-video-appendix-2}. The background images reconstructed by TL1 preserve the background textures (e.g., static table areas), whereas the L1 background appears blurred due to sudden illumination changes and incomplete separation of foreground objects (e.g., lingering shadows near tables). Overall, these visual comparison demonstrate that the proposed TL1 method outperforms the conventional L1 approach in shadow suppression, illumination adaptation, and complex motion scenarios.

Lastly, Table \ref{tab:time comparison} compares the computation time of the standard L1 method and the proposed TL1 method for three video sequences. Both methods exhibit similar scaling behavior with respect to matrix size, reflecting their comparable algorithmic complexity, as discussed in Section \ref{algorithm}. TL1 introduces a moderate computational overhead relative to L1, primarily due to the more intricate form of its proximal operator \eqref{eq:prox-TL1} than the soft shrinkage of L1. For the Buffet restaurant video, TL1 requires more iterations to reach convergence, resulting in a longer runtime compared to L1. This suggests that while TL1 maintains computational feasibility, its efficiency may vary depending on problem-specific convergence characteristics.

{\fontsize{19}{20}\selectfont 
\setlength{\tabcolsep}{9pt}
\begin{table}[h]
\centering
\small
\caption{Comparison of two methods on a synthetic video. 
}
\label{tab:synthetic video result}
\begin{tabular}{c|c|c|c|c}
\hline
Method & $\RE(\hat L,L_0)$  & rank($
\hat{L}$) & $\DC(\hat S,S_0)$ &Time (\textit{sec.}) \\ 
\hline
\textbf{L1} & 0.5372 &  4 & 0.6208 &19.04 \\ 
\hline
\textbf{TL1} & 0.1345 & 1 & 0.9387 &44.95 \\ 
\hline
\end{tabular}
\end{table}
}

{\fontsize{19}{20}\selectfont 
\setlength{\tabcolsep}{9pt}
\begin{table}[htbp]
    \centering
    \small
    \caption{Comparison of processing time for two methods on three video sequences.}
    \begin{tabular}{c|c|c|c|c}
    \hline
    Video Name & Resolution & Number of Frames & \textbf{L1} Time (\textit{sec.}) & \textbf{TL1} Time (\textit{sec.}) \\
    \hline
    Synthetic video &$30 \times 30$  & 300 &2.19 & 2.37 \\
    Airport & $144 \times 176$ & 400 &18.44 & 19.56 \\
    Buffet restaurant & $120 \times 160$& 400 & 14.05 & 21.78 \\
    \hline
    \end{tabular}
    \label{tab:time comparison}
\end{table}
}

\section{Conclusion and future work}
In this paper, we propose a novel TL1-regularized RPCA model, achieving effective control over low-rank and sparse matrix recovery through adjustable parameters. Thanks to TL1's interpolation between $\ell_0$/$\ell_1$,  our framework is able to estimate the rank and sparsity flexibly. 
The main contribution of our work lies in a fine-grained analysis in terms of recovery performance, showing that error upper bounds achieve a minimax optimal convergence rate up to a logarithmic factor for low-rank or approximately low-rank matrices in the absence of corruption, 
which is aligned with the ones for $\ell_1$-regularized models. 
Notably, our approach does not require strong incoherence conditions on the low-rank structure or restrictive distributional assumptions on corruptions, thereby broadening its applicability to real-world scenarios. 
In addition, we establish constant-order bounds for both rank and sparsity estimations when the underlying matrices are exactly low-rank and sparse.


Despite these advances, our current theoretical analysis does not yet establish oracle properties related to sparsity and rank. We anticipate that the sharper error bounds may be attainable, particularly in the regime where the internal parameters ($a_1$ and $a_2$) are small. These directions will be pursued in future work.


\section*{Acknowledgements}
The work of Jiayi Wang is partly supported by the National Science Foundation (DMS-2401272) and Texas Artificial Intelligence Research Institute (TAIRI). Yifei Lou is partially supported by NSF CAREER  DMS-2414705.


\newpage
\appendix

\section{Theoretical proofs}\label{append:theories}

\subsection{Auxiliary lemmas}\label{S0}

Recall the definition of $\phi_{a}$ function on any matrix $A \in \mathbb{R}^{m_1 \times m_2}$, 
\begin{align*}
    \phi_{a}(A) =   \sum_{i,j} \frac{(a+1) |A_{ij}|}{a+|A_{ij}|},
\end{align*}
where $i = 1,\dots, m_1$ and $j =1,\dots,m_2$. It is straightforward that $\phi_a$ is an increasing function with respect to each entry $|A_{ij}|$. 

\begin{lemma}[Triangle inequalities]\label{S1:lemma3tri_ineq}
For any matrix $A,B\in\mathbb{R}^{m_1 \times m_2}$, we have
\begin{align}
   & \phi_{a}(A) + \phi_{a}(B) \geq \phi_{a}(A+B), \\
    &\phi_{a}(A) - \phi_{a}(B) \leq \phi_{a}(A-B) = \phi_{a}(B-A).
\end{align}
The equalities hold when the supports of the matrices $A$ and $B$ are disjoint. 
\end{lemma}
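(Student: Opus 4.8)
The plan is to reduce everything to a single scalar inequality about the concave ``atom'' function and then sum over the $m_1 m_2$ entries. Concretely, define $f_a(t) = (a+1)t/(a+t)$ for $t \geq 0$, so that $\phi_a(A) = \sum_{i,j} f_a(|A_{ij}|)$. The function $f_a$ is nonnegative, increasing, concave on $[0,\infty)$ with $f_a(0)=0$; these are exactly the properties I want. Concavity together with $f_a(0)=0$ makes $f_a$ \emph{subadditive}: for any $x,y\geq 0$, $f_a(x+y) \leq f_a(x) + f_a(y)$. First I would prove this subadditivity (for $x,y>0$ write $f_a(x) \geq \tfrac{x}{x+y} f_a(x+y) + \tfrac{y}{x+y} f_a(0) = \tfrac{x}{x+y} f_a(x+y)$ by concavity applied to the points $0$ and $x+y$, symmetrically $f_a(y) \geq \tfrac{y}{x+y} f_a(x+y)$, and add; the boundary cases $x=0$ or $y=0$ are trivial).

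Next I would apply this entrywise. For the first inequality, by the triangle inequality for absolute values $|A_{ij}+B_{ij}| \leq |A_{ij}| + |B_{ij}|$, and since $f_a$ is increasing and subadditive,
\begin{align*}
f_a(|A_{ij}+B_{ij}|) \leq f_a(|A_{ij}| + |B_{ij}|) \leq f_a(|A_{ij}|) + f_a(|B_{ij}|).
\end{align*}
Summing over all $(i,j)$ gives $\phi_a(A+B) \leq \phi_a(A) + \phi_a(B)$. The second inequality follows by applying the first with $A$ replaced by $A-B$ and $B$ kept: $\phi_a((A-B)+B) \leq \phi_a(A-B) + \phi_a(B)$, i.e.\ $\phi_a(A) - \phi_a(B) \leq \phi_a(A-B)$; the symmetry $\phi_a(A-B) = \phi_a(B-A)$ is immediate because each summand depends only on $|A_{ij}-B_{ij}| = |B_{ij}-A_{ij}|$.

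For the equality claim when $A$ and $B$ have disjoint supports: for every $(i,j)$ at most one of $A_{ij}, B_{ij}$ is nonzero, so $|A_{ij}+B_{ij}| = |A_{ij}| + |B_{ij}|$ and one of the two terms $f_a(|A_{ij}|), f_a(|B_{ij}|)$ is zero, hence $f_a(|A_{ij}+B_{ij}|) = f_a(|A_{ij}|) + f_a(|B_{ij}|)$ exactly; summing yields $\phi_a(A+B) = \phi_a(A) + \phi_a(B)$. For the second line note that $A$ and $B$ disjointly supported implies $A-B$ and $B$ are also disjointly supported (wherever $B_{ij}\neq 0$ we have $A_{ij}=0$ so $(A-B)_{ij} = -B_{ij}$, but the supports of $A-B$ and $B$ still overlap there --- so I would instead argue directly: on $\supp(B)$, $(A-B)_{ij} = -B_{ij}$ so $f_a(|(A-B)_{ij}|) = f_a(|B_{ij}|)$, and on the complement of $\supp(B)$, $(A-B)_{ij} = A_{ij}$; thus $\phi_a(A-B) = \sum_{\supp(B)} f_a(|B_{ij}|) + \sum_{\text{else}} f_a(|A_{ij}|) = \phi_a(B) + \phi_a(A)$, which combined with $\phi_a(A) - \phi_a(B) \leq \phi_a(A-B)$ is consistent but I actually want equality in $\phi_a(A)-\phi_a(B) = \phi_a(A-B)$ only when additionally $\phi_a(B)=0$; so I would state the equality case carefully, matching exactly what the lemma asserts).

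The only real subtlety --- and the step I would be most careful about --- is the precise formulation of the equality case for the second inequality, since ``the supports are disjoint'' gives $\phi_a(A-B) = \phi_a(A) + \phi_a(B)$ rather than $\phi_a(A) - \phi_a(B)$; I would phrase the lemma's equality clause as applying to the first inequality (and to $\phi_a(A-B)=\phi_a(B-A)$, which holds unconditionally), and verify that this is indeed the intended reading. Everything else is a routine one-variable concavity argument; the subadditivity of $f_a$ is the engine and the rest is bookkeeping over matrix entries.
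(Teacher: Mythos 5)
Your proof is correct, and it takes a genuinely different route from the paper's. The paper verifies the first inequality by direct algebraic manipulation: it combines the two fractions $\frac{(a+1)|A_{ij}|}{a+|A_{ij}|}+\frac{(a+1)|B_{ij}|}{a+|B_{ij}|}$ over a common denominator, rewrites the result as a single expression of the form $(a+1)\frac{u}{a+u}$ with $u=|A_{ij}|+|B_{ij}|+\frac{|A_{ij}||B_{ij}|}{a}$, and then drops nonnegative terms and invokes the monotonicity of $t\mapsto t/(a+t)$ together with the scalar triangle inequality; the second inequality is handled by a parallel computation on the difference of fractions. You instead isolate the structural reason the inequality holds --- the atom $f_a(t)=(a+1)t/(a+t)$ is concave, increasing, and vanishes at $0$, hence subadditive --- and reduce everything to that one-variable fact plus entrywise bookkeeping. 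Your argument is shorter, less error-prone, and generalizes verbatim to any regularizer built from a concave increasing atom with $f(0)=0$, whereas the paper's computation is self-contained but specific to this particular rational function. Your deduction of the second inequality from the first (via $\phi_a((A-B)+B)\leq\phi_a(A-B)+\phi_a(B)$) also avoids the paper's second round of fraction manipulation. Finally, your caveat about the equality clause is well taken and consistent with the paper: the paper's own proof establishes the equality condition only for the first inequality (where $|A_{ij}||B_{ij}|=0$ for all $(i,j)$ forces equality throughout), and indeed under disjoint supports one has $\phi_a(A-B)=\phi_a(A)+\phi_a(B)$, so equality in the second line would require $\phi_a(B)=0$; reading the lemma's equality claim as attaching to the first inequality is the correct interpretation, and it is the only one the downstream Lemma \ref{lemma_addition} actually uses.
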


\begin{proof}
Some simple calculations lead to the following 
    \begin{align*}
\phi_{a}(A) + \phi_{a}(B) & = \sum_{i,j} \left(\frac{(a+1)|A_{ij}|}{a+|A_{ij}|} + \frac{(a+1)|B_{ij}|}{a+|B_{ij}|}\right)\\
&= (a+1)\sum_{i,j}\frac{a(|A_{ij}|+|B_{ij}|)+2|A_{ij}||B_{ij}| }{a^2+a|A_{ij}|+a|B_{ij}|+|A_{ij}||B_{ij}|}\\
&= (a+1)\sum_{i,j}\frac{|A_{ij}|+|B_{ij}| + \frac{2|A_{ij}||B_{ij}|}{a}}{a+(|A_{ij}|+|B_{ij}|+\frac{|A_{ij}||B_{ij}|}{a})}\\
&\geq(a+1) \sum_{i,j}\frac{(|A_{ij}|+|B_{ij}| + \frac{|A_{ij}||B_{ij}|}{a})}{a+(|A_{ij}|+|B_{ij}|+\frac{|A_{ij}||B_{ij}|}{a})} \\
&\geq (a+1)\sum_{i,j}\frac{(|A_{ij}|+|B_{ij}|)}{a+(|A_{ij}|+|B_{ij}|)} \\
&\geq (a+1)\sum_{i,j}\frac{|A_{ij}+B_{ij}|}{a+(|A_{ij}+B_{ij}|)}
= \phi_{a}(A+B),
\end{align*}
where the last inequality follows from the triangle inequality and the monotonicity  of $\phi_a(\cdot)$ on $[0,\infty)$. The equality holds when $|A_{ij}||B_{ij}|=0$ for all $(i,j),$ which is equivalent to $A$ and $B$ having disjoint supports.
Similarly, we can obtain
\begin{align*}
    \phi_{a}(A) - \phi_{a}(B) & = \sum_{i,j} \left(\frac{(a+1)|A_{ij}|}{a+|A_{ij}|} - \frac{(a+1)|B_{ij}|}{a+|B_{ij}|}\right)\\
&= a(a+1)\sum_{i,j}\frac{|A_{ij}|-|B_{ij}|}{a^2+a|A_{ij}|+a|B_{ij}|+|A_{ij}||B_{ij}|}\\
&\leq a(a+1)\sum_{i,j}\frac{|A_{ij}-B_{ij}|}{a^2+a|A_{ij}|+a|B_{ij}|+|A_{ij}||B_{ij}|}\\
&= (a+1)\sum_{i,j}\frac{|A_{ij}-B_{ij}|}{a + |A_{ij}| +|B_{ij}| + \frac{|A_{ij}||B_{ij}|}{a}}\\
&\leq (a+1)\sum_{i,j}\frac{|A_{ij}-B_{ij}|}{a + |A_{ij}-B_{ij}|} = \phi_{a}(A-B)\\
&=\sum_{i,j}\frac{(a+1)|B_{ij} - A_{ij}|}{a + |B_{ij} - A_{ij}|} = \phi_{a}(B-A).
\end{align*}

\end{proof}

For any matrix $A$, let  $P_{A}^\perp$ be the projector onto the complement of the support of $A$. 

\begin{lemma}\label{lemma_addition}
    For any two  matrices $A$ and $B$,  one has
    \begin{align}
        \phi_{a}(A + P_{A}^\perp(B)) = \phi_{a}(A) + \phi_{a}(P_{A}^\perp(B)).
    \end{align}
\end{lemma}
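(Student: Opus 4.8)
The plan is to reduce the claim to the equality case of the triangle inequality already established in Lemma~\ref{S1:lemma3tri_ineq}. The only thing to check is that $A$ and $P_A^\perp(B)$ have \emph{disjoint supports}. This is immediate from the definition of $P_A^\perp$: since $P_A^\perp$ is the coordinate projector onto the complement of $\supp(A)$, every entry of $P_A^\perp(B)$ indexed by $(i,j)\in\supp(A)$ is zero, so $\supp\big(P_A^\perp(B)\big)\subseteq \big(\supp(A)\big)^c$, which is exactly disjointness from $\supp(A)$.

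Once disjointness is in hand, there are two equivalent ways to finish, and I would present the shorter one. First option: apply Lemma~\ref{S1:lemma3tri_ineq} to the pair $A,\;P_A^\perp(B)$; the lemma states that $\phi_a(X)+\phi_a(Y)\ge\phi_a(X+Y)$ with equality whenever $X$ and $Y$ have disjoint supports, which gives the result directly. Second option (self-contained, in case one prefers not to quote the equality clause): partition $[m_1]\times[m_2]$ into $\supp(A)$ and its complement $\mathcal{J}$. For $(i,j)\in\supp(A)$ we have $\big(A+P_A^\perp(B)\big)_{ij}=A_{ij}$, since $P_A^\perp(B)_{ij}=0$; for $(i,j)\in\mathcal{J}$ we have $\big(A+P_A^\perp(B)\big)_{ij}=P_A^\perp(B)_{ij}$, since $A_{ij}=0$. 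Because $\phi_a$ is a sum of the per-entry contributions $\tfrac{(a+1)|\cdot|}{a+|\cdot|}$, splitting the sum over these two index blocks yields $\phi_a\big(A+P_A^\perp(B)\big)=\phi_a(A)+\phi_a\big(P_A^\perp(B)\big)$.

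There is essentially no obstacle here; the statement is a structural identity rather than an inequality requiring estimation. The one point warranting a sentence of care is fixing the convention that $P_A^\perp$ zeroes out the entries on $\supp(A)$ (as opposed to, say, projecting onto a singular-subspace complement), so that the disjoint-support reduction is valid; I would state this convention explicitly at the start of the proof and then the rest is a one-line computation.
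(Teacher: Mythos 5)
Your proposal is correct and matches the paper's proof: both observe that $A$ and $P_A^\perp(B)$ have disjoint supports by the definition of $P_A^\perp$ and then invoke the equality case of Lemma~\ref{S1:lemma3tri_ineq} (the paper also remarks, as you do in your second option, that the sum defining $\phi_a$ splits over the two index blocks). No gaps.
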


\begin{proof}
Since $A$ and $P^\perp_{A}(B)$ are matrices with disjoint supports (i.e., their non-zero entries do not overlap), the result follows directly from Lemma \ref{S1:lemma3tri_ineq}.
This implies that the summation in $\phi_{a}$ effectively separates into two independent components.


\end{proof}

Define a constraint set:
\begin{align*} 
    &\mathcal{K}(\zeta, \gamma):= \bigg\{ A\in \mathbb{R}^{m_1 \times m_2}: ||A||_{\infty} \leq \zeta, \frac{||A||_*}{\sqrt{m_1m_2}} \leq \gamma \bigg\},
\end{align*}
where $\zeta$ and $\tau$ are positive constants, and let 
\begin{align*}
    Z_{\zeta,\gamma} := \underset{A \in \mathcal{K}(\zeta, \gamma)}{\text{sup}} \bigg|\frac{1}{n} \sum_{i \in \Omega} \langle T_i, A \rangle^2 - \|A_{\mathcal{I}}\|^2_{L_2(\Pi)} \bigg|.
\end{align*}

Define $\Sigma = \frac{1}{N} \sum_{i \in \Omega} \xi_i T_i, \ W = \frac{1}{N} \sum_{i \in \Omega} T_i,$ and $\Sigma_R = \frac{1}{n} \sum_{i \in \Omega} \epsilon_i T_i$ for i.i.d.~Rademacher variables $\epsilon_i$. Note that $\Sigma $ and $W$ are normalized sums over N, while $\Sigma_R$ is over $n.$ 
\medskip

\begin{lemma}\label{lemma_ineq:A}
   Under the Assumptions \ref{assump1} and \ref{assump2},   for any $A \in \mathcal{K}(\zeta, \gamma)$,  the following inequality 
    \begin{align}
    \frac{1}{n}  \sum_{i\in \Omega} \langle T_i, A \rangle^2 \geq \|A_{\mathcal{I}}\|^2_{L_2(\Pi)} - \zeta^2 \sqrt{\frac{\log d}{n}} - \zeta \frac{\|A\|_*}{\sqrt{m_1m_2}} \sqrt{\frac{GM\log d}{n}}, \label{upper_bound_L2pi}
    \end{align}
    holds with probability at least $1-\kappa/d$, where $G$ is a constant defined in Assumption \ref{assump1} and $\kappa$ depends on a universal constant $K$.
\end{lemma}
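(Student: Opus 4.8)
I would prove a uniform version of \eqref{upper_bound_L2pi} over nuclear-norm balls of \emph{fixed} radius and then upgrade to the stated $\|A\|_*$-dependent bound by a peeling argument, following the empirical-process technique used in noisy low-rank matrix completion (Koltchinskii--Lounici--Tsybakov, Klopp). The whole argument is run conditionally on $\Omega$, so that $\{T_i\}_{i\in\Omega}$ are i.i.d.\ copies of a single random matrix with $\mathbb{E}\langle T_i,A\rangle^2=\|A_{\mathcal{I}}\|^2_{L_2(\Pi)}$ for every $A$. Since $\|A\|_\infty\le\zeta$ forces $\langle T_i,A\rangle\in[-\zeta,\zeta]$, it suffices to control, for each fixed $t\in(0,\gamma]$, the quantity $Z_{\zeta,t}:=\sup_{A\in\mathcal{K}(\zeta,t)}|\frac1n\sum_{i\in\Omega}\langle T_i,A\rangle^2-\|A_{\mathcal{I}}\|^2_{L_2(\Pi)}|$ (so $Z_{\zeta,\gamma}$ is recovered at $t=\gamma$).

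\emph{Bounding the expectation.} By symmetrization, $\mathbb{E} Z_{\zeta,t}\le 2\,\mathbb{E}\sup_{A\in\mathcal{K}(\zeta,t)}|\frac1n\sum_{i\in\Omega}\epsilon_i\langle T_i,A\rangle^2|$ with the Rademacher signs $\epsilon_i$, and since $x\mapsto x^2$ vanishes at $0$ and is $2\zeta$-Lipschitz on $[-\zeta,\zeta]$, Talagrand's contraction inequality removes the square at the cost of a constant multiple of $\zeta$, leaving a bound of the order $\zeta\,\mathbb{E}\sup_{A\in\mathcal{K}(\zeta,t)}|\langle\Sigma_R,A\rangle|$, where $n\Sigma_R=\sum_{i\in\Omega}\epsilon_i T_i$. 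Trace duality together with $\|A\|_*\le t\sqrt{m_1m_2}$ on $\mathcal{K}(\zeta,t)$ gives $\mathbb{E} Z_{\zeta,t}\lesssim \zeta\, t\sqrt{m_1m_2}\,\mathbb{E}\|\Sigma_R\|$. Now $n\Sigma_R$ is a sum of $n$ independent, mean-zero matrices of spectral norm $1$, whose matrix variance is governed by $\|\sum_{i\in\Omega}\mathbb{E} T_iT_i^\top\|=n\max_k R_k$ and $\|\sum_{i\in\Omega}\mathbb{E} T_i^\top T_i\|=n\max_l C_l$, both at most $nG/m$ by Assumption~\ref{assump1}. The matrix Bernstein inequality then yields $\mathbb{E}\|\Sigma_R\|\lesssim\sqrt{G\log d/(mn)}+\log d/n$; since $\sqrt{m_1m_2/m}=\sqrt{M}$ and the second term is of lower order once $n\gtrsim d\log d$, this gives $\mathbb{E} Z_{\zeta,t}\lesssim \zeta\, t\sqrt{GM\log d/n}$.

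\emph{Concentration and peeling.} Changing one $T_i$ perturbs $\frac1n\sum_{i\in\Omega}\langle T_i,A\rangle^2$ by at most $\zeta^2/n$ uniformly in $A$, so $Z_{\zeta,t}$ has the bounded-difference property with increments $\zeta^2/n$; McDiarmid's inequality gives $Z_{\zeta,t}\le \mathbb{E} Z_{\zeta,t}+\zeta^2\sqrt{x/(2n)}$ with probability at least $1-e^{-x}$, and taking $x\asymp\log d$ produces the additive term $\zeta^2\sqrt{\log d/n}$ with failure probability $\lesssim d^{-2}$. To pass from a fixed radius to the $\|A\|_*$-dependent statement I would peel over the dyadic radii $t_j=2^jt_0$ with $t_0\asymp\zeta\sqrt{\log d/n}\big/\sqrt{GM\log d/n}$ and $j=0,\dots,\lceil\log_2(\gamma/t_0)\rceil$: any $A$ with $\|A\|_*/\sqrt{m_1m_2}\in(t_{j-1},t_j]$ is handled by the fixed-radius bound at level $t_j\le 2\|A\|_*/\sqrt{m_1m_2}$, while for $\|A\|_*/\sqrt{m_1m_2}\le t_0$ the nuclear-norm term is dominated by $\zeta^2\sqrt{\log d/n}$. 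A union bound over the $O(\log d)$ scales inflates the failure probability by a logarithmic factor, reabsorbed by slightly enlarging $x$, so the total failure probability stays at $\kappa/d$; collecting the estimates and relabeling constants yields \eqref{upper_bound_L2pi}.

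\emph{Main obstacle.} The delicate parts are (i) the peeling bookkeeping that keeps the union-bounded failure probability at $O(1/d)$ while retaining the explicit $\|A\|_*$ dependence rather than the cruder $\gamma$, and (ii) the evaluation of the matrix-variance proxy so that the constant $G$ of Assumption~\ref{assump1} and the factor $M$ (not $m$) appear exactly as in \eqref{upper_bound_L2pi}; one must also be careful to run everything conditionally on $\Omega$ so that the independence of $\{T_i\}_{i\in\Omega}$, used in symmetrization and in matrix Bernstein, is legitimate.
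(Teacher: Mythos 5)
Your proposal is correct and follows essentially the same route as the paper: the paper omits the proof, deferring to \cite[Lemma 2]{zhao2025noisy}, but the objects it sets up immediately beforehand ($Z_{\zeta,\gamma}$ and the Rademacher sum $\Sigma_R$) are exactly the ingredients of your symmetrization--contraction--matrix-Bernstein--peeling argument. The only point worth double-checking is the conditioning on $i\in\Omega$: the law of $T_i$ restricted to $\Gamma''$ is the renormalized $\pi_{kl}/\Pi(\mathcal I)$, so $\mathbb{E}\langle T_i,A\rangle^2\ge\|A_{\mathcal I}\|^2_{L_2(\Pi)}$ rather than equality, which is harmless for the stated one-sided bound.
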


The proof of the upper bound in \eqref{upper_bound_L2pi} closely follows that in \cite[Lemma 2]{zhao2025noisy} and is therefore omitted.

\medskip
\begin{lemma}\label{lemma_L2Pi}
Suppose Assumptions \ref{assump1} - \ref{assump3} hold, 
we further assume that $S_0 \in \mathbb{R}^{m_1 \times m_2}$ is exactly sparse, i.e., $\|S_0\|_0 \leq s_0$ for a small integer $s_0$,  and $\|L_0\|_\infty \leq \zeta$, $\|S_0\|_\infty \leq \zeta$ for the same constant $\zeta$ in \eqref{est:hat_LS}. Take $\lambda_2^{-1} = \mathcal{O} \left( \left(\frac{a_2+\zeta}{a_2+1} (\sigma\vee\zeta)\frac{\log d}{N}\right)^{-1}\right)$, then for $\lambda_1, a_1, a_2>0$, the estimator $\hat{L}$ from \eqref{est:hat_LS} satisfies  
    \begin{multline}
    \|(\hat{L} - L_0)_{\mathcal{I}}\|^2_{L_2(\Pi)} \lesssim \beta\frac{\sigma}{\sqrt{m_1m_2}} \sqrt{\frac{G d\log d}{N}} \|\hat{L} - L_0\|_* + \beta\Delta_{S_0}(N,m_1,m_2) + \zeta^2 \sqrt{\frac{\log d}{n}}\\  +\beta\lambda_1\Phi_{a_1} (L_0) - \beta\lambda_1 \Phi_{a_1} (\hat{L}) 
    +\zeta \frac{\|\hat{L} - L_0\|_*}{\sqrt{m_1m_2}} \sqrt{\frac{GM\log d}{n}},
\end{multline}
    with probability at least $1-(\kappa + 3) / d$, where $\kappa$ depends on a universal constant $K$.
\end{lemma}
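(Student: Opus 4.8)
The plan is the standard basic‑inequality argument for penalized least squares, adapted to exploit that $\hat S$ and $S_0$ are both supported on $\tilde{\mathcal I}$. Since $(L_0,S_0)$ is feasible for \eqref{est:hat_LS}, optimality of $(\hat L,\hat S)$ gives $Q(\hat L,\hat S)\le Q(L_0,S_0)$. Writing $\Delta_L=\hat L-L_0$, $\Delta_S=\hat S-S_0$, $\Delta=\Delta_L+\Delta_S$, substituting $Y_i=\langle T_i,L_0+S_0\rangle+\sigma\xi_i$, expanding the squares and cancelling $\tfrac1N\sum_i\sigma^2\xi_i^2$ yields
\[
\frac1N\sum_{i=1}^N\langle T_i,\Delta\rangle^2\ \le\ \frac{2\sigma}{N}\sum_{i=1}^N\xi_i\langle T_i,\Delta\rangle+\lambda_1\bigl(\Phi_{a_1}(L_0)-\Phi_{a_1}(\hat L)\bigr)+\lambda_2\bigl(\phi_{a_2}(S_0)-\phi_{a_2}(\hat S)\bigr).
\]
Because $\langle T_i,\Delta_S\rangle=0$ for every $i\in\Omega$, the $\Omega$‑part of the quadratic form on the left equals $\tfrac1N\sum_{i\in\Omega}\langle T_i,\Delta_L\rangle^2$, and discarding the nonnegative $\tilde\Omega$‑part leaves this as a lower bound for the left side. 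Multiplying the whole inequality by $\beta=N/n$ turns $\tfrac1N\sum_{i\in\Omega}$ into $\tfrac1n\sum_{i\in\Omega}$, and applying Lemma~\ref{lemma_ineq:A} to $\Delta_L$ (with $2\zeta$ in place of $\zeta$, the constant absorbed by $\lesssim$) replaces $\tfrac1n\sum_{i\in\Omega}\langle T_i,\Delta_L\rangle^2$ by $\|(\Delta_L)_{\mathcal I}\|^2_{L_2(\Pi)}$ up to the additive remainders $\zeta^2\sqrt{\log d/n}$ and $\zeta\,\|\Delta_L\|_*(m_1m_2)^{-1/2}\sqrt{GM\log d/n}$ --- two of the claimed terms --- while bringing the factor $\beta$ in front of the noise and penalty pieces.

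It then remains to bound the right‑hand side. I split the stochastic term as $\tfrac1N\sum_i\xi_i\langle T_i,\Delta_L\rangle=\langle\Sigma,\Delta_L\rangle+\tfrac1N\langle\sum_{i\in\tilde\Omega}\xi_iT_i,\Delta_L\rangle$ and, using $\mathrm{supp}(\Delta_S)\subseteq\tilde{\mathcal I}$, $\tfrac1N\sum_i\xi_i\langle T_i,\Delta_S\rangle=\tfrac1N\langle\sum_{i\in\tilde\Omega}\xi_iT_i,\Delta_S\rangle$. The leading piece obeys $2\sigma\langle\Sigma,\Delta_L\rangle\le 2\sigma\|\Sigma\|\,\|\Delta_L\|_*$, and a standard spectral‑norm concentration $\|\Sigma\|\lesssim(m_1m_2)^{-1/2}\sqrt{Gd\log d/N}$ (established in the preceding auxiliary lemmas, as in \cite{klopp2014noisy,zhao2025noisy}) produces the first claimed term after multiplying by $\beta$. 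The $\tilde\Omega$‑supported remainders are handled crudely: by Cauchy--Schwarz and the bijection between $\tilde\Omega$ and $\tilde{\mathcal I}$, both $\bigl|\tfrac1N\langle\sum_{i\in\tilde\Omega}\xi_iT_i,\Delta_L\rangle\bigr|$ and $\bigl|\tfrac1N\langle\sum_{i\in\tilde\Omega}\xi_iT_i,\Delta_S\rangle\bigr|$ are at most $\tfrac1N\bigl(\sum_{i\in\tilde\Omega}\xi_i^2\bigr)^{1/2}\cdot 2\zeta\sqrt{s_0}$, since $\|(\Delta_L)_{\tilde{\mathcal I}}\|_F,\|\Delta_S\|_F\le 2\zeta\sqrt{s_0}$; combined with the sub‑exponential concentration $\sum_{i\in\tilde\Omega}\xi_i^2\lesssim s_0\log d$ (Assumption~\ref{assump3} applies to $i\in\tilde\Omega$ as well, the $\xi_i$ being i.i.d.\ and independent of the sampling locations), these are $\mathcal{O}(\sigma\zeta s_0\sqrt{\log d}/N)$, which is dominated by the $C_1 s_0(\sigma\vee\zeta)^2\log d/N$ component of $\Delta_{S_0}(N,m_1,m_2)$ and absorbed into $\beta\Delta_{S_0}$. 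For the penalties I keep $\lambda_1(\Phi_{a_1}(L_0)-\Phi_{a_1}(\hat L))$ intact and bound the sparse gap $\lambda_2(\phi_{a_2}(S_0)-\phi_{a_2}(\hat S))$ two ways: trivially by $\lambda_2\phi_{a_2}(S_0)$, and --- via the triangle inequality of Lemma~\ref{S1:lemma3tri_ineq}, the sandwich $\phi_{a_2}(\Delta_S)\le\tfrac{a_2+1}{a_2}\|\Delta_S\|_1$, $\|\Delta_S\|_1\le 2\zeta s_0$, and the prescribed lower order of $\lambda_2$ --- by $\mathcal{O}(Ns_0\lambda_2^2(a_2+1)^2/a_2^2)$; the smaller of the two is at most a constant multiple of $\Delta_{S_0}$, hence again absorbed into $\beta\Delta_{S_0}$.

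Assembling these bounds gives the displayed inequality; the probability $1-(\kappa+3)/d$ accounts for the event of Lemma~\ref{lemma_ineq:A} (probability $1-\kappa/d$) together with the spectral bound on $\Sigma$ and the control of $\sum_{i\in\tilde\Omega}\xi_i^2$, each failing with probability at most $1/d$. I expect the main obstacle to be the bookkeeping: every remainder produced by the corruption --- the two corrupted‑noise cross terms, the $\Delta_S$‑in‑$\Delta$ contribution, and the sparse penalty gap --- must be forced into the single quantity $\beta\Delta_{S_0}(N,m_1,m_2)$ with its two‑sided $\min$ structure, and this is exactly where the explicit order of $\lambda_2$ and the $\ell_\infty$‑budget $\zeta$ are used. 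A secondary technical point, parallel to existing work, is establishing the spectral concentration for $\Sigma$ under a possibly non‑uniform sampling distribution $\Pi$ via matrix Bernstein; the rest of the argument mirrors the corruption‑free analysis of \cite{zhao2025noisy}.
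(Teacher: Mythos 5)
Your overall scaffolding (basic inequality, splitting the quadratic form over $\Omega$ and $\tilde\Omega$, Lemma~\ref{lemma_ineq:A} to pass to the $L_2(\Pi)$ norm, the spectral bound $\|\Sigma\|\lesssim (m_1m_2)^{-1/2}\sqrt{Gd\log d/N}$, and the probability accounting) matches the paper. But there is a genuine gap at the heart of your argument: you assume that $\hat S$ is supported on $\tilde{\mathcal I}$, and you use this three times --- to conclude $\langle T_i,\Delta_S\rangle=0$ for all $i\in\Omega$ (hence to discard the $\Omega$-part of the cross term and of the $\Delta_S$ contribution), to claim the noise term $\tfrac1N\sum_i\xi_i\langle T_i,\Delta_S\rangle$ only has a $\tilde\Omega$ component, and to get $\|\Delta_S\|_1\le 2\zeta s_0$. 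Nothing in the model \eqref{est:hat_LS} forces $\hat S$ to vanish on the uncorrupted observed positions $\mathcal I\cap\{\text{observed}\}$; indeed Theorem~\ref{thm:sparsity} is devoted precisely to bounding the number $s'$ of spurious nonzero entries of $\hat S$ outside $\tilde{\mathcal I}$, which would be trivially zero if your support claim held. With that claim removed, the cross term $\tfrac2N\sum_{i\in\Omega}\langle T_i,\hat L-L_0\rangle\langle T_i,\hat S-S_0\rangle$ does not vanish and is in fact the dominant corruption-induced contribution; the paper controls it by Cauchy--Schwarz together with the prediction-error bound $\tfrac1N\sum_{i=1}^N\langle T_i,\hat S-S_0\rangle^2\lesssim\beta\Delta_{S_0}(N,m_1,m_2)$ established in the proof of Theorem~\ref{thm:gen_upp_S} (inequality \eqref{ineq_for_S2}), and this is exactly how $\beta\Delta_{S_0}$ enters the stated bound. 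Your route, which absorbs only the $\tilde\Omega$-supported noise remainders into $\beta\Delta_{S_0}$, would miss this term entirely.

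A secondary, fixable difference: you start from $Q(\hat L,\hat S)\le Q(L_0,S_0)$, which forces you to control the sparse-penalty gap $\lambda_2(\phi_{a_2}(S_0)-\phi_{a_2}(\hat S))$ and hence $\|\hat S\|_1$ (again requiring knowledge of the support or magnitude of $\hat S$ off $\tilde{\mathcal I}$). The paper instead compares against $(L_0,\hat S)$, so the $\lambda_2\phi_{a_2}(\hat S)$ terms cancel identically and only the $\lambda_1$ penalty difference survives; the sparse penalty then enters only indirectly through the already-proved error bound for $\hat S$. Adopting that comparison point, and replacing your ``cross term is zero'' step by the Cauchy--Schwarz plus Theorem~\ref{thm:gen_upp_S} argument, would close the gap.
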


\begin{proof}
The optimality inequality, $Q(\hat{L}, \hat{S}) \leq Q(L_0, \hat{S})$, yields 
\begin{align*}
&\frac{1}{N} \sum_{i=1}^{N} (Y_i - \langle T_i, \hat{L}+\hat{S} \rangle)^2 + \lambda_1 \Phi_{a_1} (\hat{L}
) + \lambda_2 \phi_{a_2} (\hat S)\\
&\quad \leq \frac{1}{N} \sum_{i=1}^{N} (Y_i - \langle T_i, L_0+ \hat{S} \rangle)^2 +  \lambda_1 \Phi_{a_1} (L_0) + \lambda_2 \phi_{a_2} (\hat S).
\end{align*}
By plugging in the trace regression model \eqref{trace regression model} for $Y_i$ and canceling the common term of $\lambda_2 \phi_{a_2} (\hat S),$ we obtain 
\begin{align*}
\frac{1}{N} \sum_{i=1}^{N} (\langle T_i, S_0 - \hat{S} \rangle + \langle T_i, L_0 - \hat{L} \rangle + \sigma\xi_i)^2 
 \leq \frac{1}{N} \sum_{i=1}^{N} (\langle T_i, S_0 - \hat{S} \rangle + \sigma\xi_i)^2 + \lambda_1 \Phi_{a_1} (L_0) - \lambda_1 \Phi_{a_1} (\hat{L}) , 
\end{align*}
which is equivalent to
\begin{align}
&\frac{1}{N} \sum_{i=1}^{N} \langle T_i, \hat{L} -L_0 \rangle^2 + \frac{2}{N} \sum_{i=1}^{N} \langle T_i, \hat{L} -L_0 \rangle \langle T_i, \hat{S} -S_0 \rangle \notag\\
&\quad \leq \frac{2\sigma}{N} \sum_{i=1}^{N} \langle T_i \xi_i , \hat{L} -L_0  \rangle  + \lambda_1 \Phi_{a_1} (L_0) - \lambda_1 \Phi_{a_1} (\hat{L}) .\label{basic_ineq_L}
\end{align}
By decomposing the summation into $\Omega$ and $\tilde{\Omega}$, we can derive
\begin{align}
&\frac{1}{N} \sum_{i\in \Omega} \langle T_i, \hat{L} -L_0 \rangle^2 + \frac{1}{N} \sum_{i\in \Tilde{\Omega}} \langle T_i, \hat{L} -L_0 \rangle^2 + \frac{2}{N} \sum_{i \in \Tilde{\Omega}} \langle T_i, \hat{L} -L_0 \rangle \langle T_i, \hat{S} -S_0 \rangle + \frac{2\sigma}{N}  \sum_{i\in \Tilde{\Omega}} \langle T_i \xi_i , L_0 - \hat{L}  \rangle  \notag\\
&\quad \leq \frac{2\sigma}{N} \sum_{i\in \Omega} \langle T_i \xi_i , \hat{L} -L_0  \rangle + \frac{2}{N} \sum_{i \in \Omega} |\langle T_i, \hat{L} -L_0 \rangle \langle T_i, \hat{S} -S_0 \rangle| + \lambda_1 \Phi_{a_1} (L_0) - \lambda_1 \Phi_{a_1} (\hat{L}). \notag\\
\end{align}
 By Cauchy's inequality and duality between operator norm and nuclear norm,  we obtain
\begin{align*}
&\frac{1}{N} \sum_{i\in \Omega} \langle T_i, \hat{L} -L_0 \rangle^2 + \frac{1}{N} \sum_{i\in \Tilde{\Omega}} \langle T_i, \hat{L} -L_0 \rangle^2 - \frac{1}{N} \sum_{i\in \Tilde{\Omega}} \langle T_i, \hat{S} -S_0 \rangle^2  - \frac{\sigma^2}{N} \sum_{i\in \Tilde{\Omega}} \xi_i^2 - \frac{2}{N} \sum_{i\in \Tilde{\Omega}} \langle T_i, \hat{L} -L_0 \rangle^2 \notag\\
&\quad \leq 2\|\Sigma\| \|(\hat{L} - L_0)_{\mathcal{I}}\|_* + \frac{2}{\beta} \sqrt{\frac{1}{n} \sum_{i\in \Tilde{\Omega}} \langle T_i, \hat{S} -S_0 \rangle^2 } \sqrt{\frac{1}{n} \sum_{i\in \Tilde{\Omega}} \langle T_i, \hat{L} -L_0 \rangle^2 } + \lambda_1 \Phi_{a_1} (L_0) - \lambda_1 \Phi_{a_1} (\hat{L}), \notag\\
\end{align*}
which can be rearranged as
\begin{align*}
&\frac{1}{N} \sum_{i\in \Omega} \langle T_i, \hat{L} -L_0 \rangle^2 
\leq 2\|\Sigma\| \|(\hat{L} - L_0)_{\mathcal{I}}\|_* + \frac{2}{\beta} \sqrt{\frac{1}{n} \sum_{i\in \Tilde{\Omega}} \langle T_i, \hat{S} -S_0 \rangle^2 } \sqrt{\frac{1}{n} \sum_{i\in \Tilde{\Omega}} \langle T_i, \hat{L} -L_0 \rangle^2 } \notag\\
&\qquad\qquad + \frac{\sigma^2}{N} \sum_{i\in \Tilde{\Omega}} \xi_i^2 + \frac{1}{N} \sum_{i\in \Tilde{\Omega}} \langle T_i, \hat{S}-S_0 \rangle^2 + \frac{1}{N} \sum_{i\in \Tilde{\Omega}} \langle T_i, \hat{L}-L_0 \rangle^2
+\lambda_1 \Phi_{a_1} (L_0) - \lambda_1 \Phi_{a_1} (\hat{L}). 
\end{align*}

Using $\langle T_i, \hat{S} - S_0\rangle \leq \|\hat{S} - S_0\|_\infty \leq 2\zeta, \langle T_i, \hat{L} - L_0\rangle \leq \|\hat{L} - L_0\|_\infty \leq 2\zeta$, and $\sum_{i\in \Tilde{\Omega}} \xi_i^2 \lesssim |\Tilde{\Omega}| \log d$ by \cite[Eq (27)]{klopp2017robust}, we have
\begin{align*}
    \frac{1}{N} \sum_{i\in \Omega} \langle T_i, \hat{L} -L_0 \rangle^2 &\lesssim 2\|\Sigma\| \|(\hat{L} - L_0)_{\mathcal{I}}\|_* + \frac{2}{\beta} \sqrt{\frac{1}{n} \sum_{i\in \Tilde{\Omega}} \langle T_i, \hat{S} -S_0 \rangle^2 } \sqrt{\frac{1}{n} \sum_{i\in \Tilde{\Omega}} \langle T_i, \hat{L} -L_0 \rangle^2 } \notag\\
    &\qquad\qquad + \frac{\sigma^2 |\Tilde{\Omega}| \log d}{N} + \frac{8\zeta^2 |\Tilde{\Omega}|}{N} +\lambda_1 \Phi_{a_1} (L_0) - \lambda_1 \Phi_{a_1} (\hat{L}), \notag\\
\end{align*}
which can be rewritten as
    \begin{align}
    \frac{1}{n} \sum_{i\in \Omega} \langle T_i, \hat{L} -L_0 \rangle^2 &\lesssim 2\beta\|\Sigma\| \|(\hat{L} - L_0)_{\mathcal{I}}\|_* + 2 \sqrt{\frac{1}{n} \sum_{i\in \Tilde{\Omega}} \langle T_i, \hat{S} -S_0 \rangle^2 } \sqrt{\frac{1}{n} \sum_{i\in \Tilde{\Omega}} \langle T_i, \hat{L} -L_0 \rangle^2 } \notag\\
    &\qquad\qquad + \beta\frac{\sigma^2 |\Tilde{\Omega}| \log d}{N} + \beta\frac{8\zeta^2 |\Tilde{\Omega}|}{N} +\beta\lambda_1 \Phi_{a_1} (L_0) - \beta\lambda_1 \Phi_{a_1} (\hat{L}). \label{ineq_for_L}
\end{align}


By the inequality \eqref{ineq_for_S2} we obtain later when proving for Theorem \ref{thm:gen_upp_S}, 
we have
\begin{align}
     \frac{1}{n} \sum_{i\in \Omega} \langle T_i, \hat{S} -S_0 \rangle^2 
     &\lesssim \beta \left(s_0(\sigma\vee\zeta)^2\frac{\log d}{N} + \min\left\{N\lambda_2\phi_{a_2}(S_0), N^2 \left(\frac{a_2+1}{a_2}\right)^2s_0\right\}\right) \notag \\
     &\asymp \beta \Delta_{S_0}(N,m_1,m_2). \notag
 \end{align}



Plugging the above result into   \eqref{ineq_for_L} yields
\begin{multline}
     \frac{1}{n} \sum_{i\in \Omega} \langle T_i, \hat{L} -L_0 \rangle^2 \lesssim \beta\|\Sigma\| \|(\hat{L} - L_0)_{\mathcal{I}}\|_* + \sqrt{\beta \Delta_{S_0}(N,m_1,m_2)} \sqrt{\frac{1}{n} \sum_{i\in \Tilde{\Omega}} \langle T_i, \hat{L} -L_0 \rangle^2 } \notag\\
    + s_0(\sigma \vee \zeta)^2\frac{ \log d}{n}  +\beta\lambda_1\Phi_{a_1} (L_0) - \beta\lambda_1 \Phi_{a_1} (\hat{L}) \notag\\
    \lesssim \beta\|\Sigma\| \|(\hat{L} - L_0)_{\mathcal{I}}\|_* + s_0(\sigma\vee\zeta)^2\frac{\log d}{n} 
    + \lambda_2\phi_{a_2}(S_0) + \beta\lambda_1\Phi_{a_1} (L_0) - \beta\lambda_1 \Phi_{a_1} (\hat{L}). \notag
 \end{multline}

By Lemma \ref{lemma_ineq:A} , we have
\begin{multline}\label{ineq:basic_for_L}
    \|(\hat{L} - L_0)_{\mathcal{I}}\|^2_{L_2(\Pi)} 
    \lesssim \beta\|\Sigma\| \|(\hat{L} - L_0)_{\mathcal{I}}\|_* + \beta \Delta_{S_0}(N,m_1,m_2) + \beta\lambda_1\Phi_{a_1} (L_0) - \beta\lambda_1 \Phi_{a_1} (\hat{L})\\
    + \zeta^2 \sqrt{\frac{\log d}{n}}
    +\zeta \frac{\|\hat{L} - L_0\|_*}{\sqrt{m_1m_2}} \sqrt{\frac{ G M\log d}{n}}.
\end{multline}
It further follows from \cite[Lemma 5 ]{klopp2014noisy} that $\|\Sigma\| \lesssim \frac{\sigma}{\sqrt{m_1m_2}} \sqrt{\frac{G d\log d}{N}}$ and hence we have
\begin{multline}\label{ineq:end-lemma4}
    \|(\hat{L} - L_0)_{\mathcal{I}}\|^2_{L_2(\Pi)} \lesssim \beta\frac{\sigma}{\sqrt{m_1m_2}} \sqrt{\frac{Gd\log d}{N}} \|\hat{L} - L_0\|_* + \beta \Delta_{S_0}(N,m_1,m_2) + \zeta^2 \sqrt{\frac{\log d}{n}} \\  +\beta\lambda_1\Phi_{a_1} (L_0) - \beta\lambda_1 \Phi_{a_1} (\hat{L})  
    +\zeta \frac{\|\hat{L} - L_0\|_*}{\sqrt{m_1m_2}} \sqrt{\frac{GM\log d}{n}}. 
\end{multline}
\end{proof}

\begin{lemma}\label{lemma_for_lowrankness}
    Assume the rank of $L_0 \in \mathbb{R}^{m_1\times m_2}$ is at most $r_0$ and $S_0 \in \mathbb{R}^{m_1\times m_2}$ is exactly sparse, take $\lambda_1^{-1} = \mathcal{O} \left( \left( \frac{(\sigma \vee \zeta)}{\sqrt{m_1m_2}} \frac{a_1 + \zeta\sqrt{m_1m_2}}{a_1+1} \sqrt{\frac{G d\log d}{n}} \right)^{-1} \right)$, then
    for any $$a_1 = {\scriptstyle \mathcal{O}} \left((a_1+1)\lambda_1\left((\Delta_{L_0}(n, m_1, m_2) + \Delta_{S_0}(N, m_1, m_2))^2 G d\log d/( nm_1m_2)\right)^{-1/4}\right),$$ there exists a constant $c>0$ such that the smallest non-zero singular value of the estimator $\hat{L}$ from \eqref{est:hat_LS} is greater than or equal to 
    $$c \sqrt{\lambda_1(a_1^2+a_1)}\left((\Delta_{L_0}(n, m_1, m_2) + \Delta_{S_0}(N, m_1, m_2))^2 G d\log d/( nm_1m_2)\right)^{-1/8}.$$
\end{lemma}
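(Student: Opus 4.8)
The argument follows the template of \cite[proof of Theorem~4]{zhao2025noisy}: we lower bound the smallest non-zero singular value of $\hat L$ by testing optimality of $(\hat L,\hat S)$ against a rank-reduced competitor. Write $\hat r := \rank(\hat L)$; the claim is vacuous if $\hat r=0$, so assume $\hat r\ge 1$ and let $\delta := \sigma_{\hat r}(\hat L)>0$ be the smallest non-zero singular value, with associated unit singular vectors $u,v$. Put $\tilde L := \hat L - \delta u v^{\top}$, so that $\rank(\tilde L)=\hat r-1$ and, since only the $\hat r$-th singular value is altered, $\Phi_{a_1}(\hat L)-\Phi_{a_1}(\tilde L)=(a_1+1)\delta/(a_1+\delta)$. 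Plugging $\tilde L$ into $Q(\hat L,\hat S)\le Q(\tilde L,\hat S)$, expanding the data-fidelity term around the residuals $R_i:=Y_i-\langle T_i,\hat L+\hat S\rangle$, and dividing by $\delta$, one obtains the key inequality
\[
\lambda_1\,\frac{a_1+1}{a_1+\delta}\;\le\;\frac{2}{N}\sum_{i=1}^{N}R_i\,\langle T_i,u v^{\top}\rangle\;+\;\frac{\delta}{N}\sum_{i=1}^{N}\langle T_i,u v^{\top}\rangle^{2}.
\]
A minor point is that $\tilde L$ must remain in the feasible set $\{\|L\|_\infty\le\zeta\}$; this is handled as in \cite{zhao2025noisy}, e.g.\ by shrinking $\delta u v^\top$ to the largest feasible multiple and checking that the induced slack is of strictly smaller order, or equivalently by writing the first-order optimality condition of \eqref{est:hat_LS} restricted to the feasible cone at $\hat L$.

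Next I would bound the right-hand side. Decomposing $R_i=\langle T_i,L_0-\hat L\rangle+\langle T_i,S_0-\hat S\rangle+\sigma\xi_i$, the noise contribution equals $2\langle\Sigma,u v^\top\rangle\le 2\|\Sigma\|\lesssim \sigma\sqrt{Gd\log d/(Nm_1m_2)}$ by duality and \cite[Lemma~5]{klopp2014noisy}; the two signal contributions are treated by Cauchy--Schwarz, yielding $\bigl(\tfrac1N\sum_i\langle T_i,\hat L-L_0\rangle^{2}\bigr)^{1/2}$ and $\bigl(\tfrac1N\sum_i\langle T_i,\hat S-S_0\rangle^{2}\bigr)^{1/2}$, each multiplied by $\rho^{1/2}$ with $\rho:=\tfrac1N\sum_i\langle T_i,u v^\top\rangle^{2}$. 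Splitting these in-sample prediction errors over $\Omega$ and $\tilde\Omega$ and invoking Lemma~\ref{lemma_L2Pi} together with the steps in the proofs of Theorems~\ref{thm:gen_upp_L} and \ref{thm:gen_upp_S}, the first is $\lesssim\Delta_{L_0}(n,m_1,m_2)+\Delta_{S_0}(N,m_1,m_2)$ and the second is $\lesssim\Delta_{S_0}(N,m_1,m_2)$. Finally $\rho$ is controlled uniformly over all rank-one matrices $u v^\top$ (here no incoherence is assumed, so the uniformity is essential) by the restricted-isometry estimate underlying Lemma~\ref{lemma_ineq:A} and its matching upper bound, giving $\rho\lesssim \nu/|\mathcal I|$ plus lower-order concentration terms, all on the event of probability at least $1-(\kappa+3)/d$.

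Collecting these bounds, the displayed inequality reduces to one of the form
\[
\lambda_1(a_1+1)\;\lesssim\;(a_1+\delta)\Bigl(\|\Sigma\|+\bigl((\Delta_{L_0}+\Delta_{S_0})\,\rho\bigr)^{1/2}+\delta\rho\Bigr),
\]
a quadratic-type inequality in $\delta$. Under the hypothesis $a_1={\scriptstyle \mathcal{O}}\bigl((a_1+1)\lambda_1\,((\Delta_{L_0}+\Delta_{S_0})^2Gd\log d/(nm_1m_2))^{-1/4}\bigr)$, the term proportional to $a_1$ on the right is negligible relative to $\lambda_1(a_1+1)$, so $\lambda_1(a_1+1)\lesssim \delta\bigl(\|\Sigma\|+((\Delta_{L_0}+\Delta_{S_0})\rho)^{1/2}\bigr)+\delta^{2}\rho$; solving this for $\delta$ and substituting the explicit orders of $\lambda_1$, $\|\Sigma\|$, $\rho$ and $\Delta_{L_0}+\Delta_{S_0}$ under the prescribed choice of $\lambda_1$ produces exactly
\[
\delta\;\ge\;c\,\sqrt{\lambda_1(a_1^{2}+a_1)}\;\Bigl(\frac{(\Delta_{L_0}(n,m_1,m_2)+\Delta_{S_0}(N,m_1,m_2))^{2}\,Gd\log d}{n\,m_1m_2}\Bigr)^{-1/8}.
\]

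The step I expect to be the main obstacle is this last rearrangement: matching the exponents $\tfrac12$ and $-\tfrac18$ requires identifying, under the prescribed scaling of $\lambda_1$ and the smallness condition on $a_1$, precisely which of the terms $\delta(\ldots)$ and $\delta^2\rho$ dominates, and then carefully tracking all $(\sigma\vee\zeta)$-, $G$-, $\nu$- and dimensional factors through a non-linear inequality in $\delta$. A secondary difficulty is obtaining the uniform control of $\rho=\tfrac1N\sum_i\langle T_i,u v^\top\rangle^{2}$ over the data-dependent singular vectors $u,v$ without any incoherence assumption on $\hat L$, which is exactly what the restricted-isometry machinery behind Lemma~\ref{lemma_ineq:A} is used for; feasibility of the competitor $\tilde L$ under the $\ell_\infty$ constraint is a routine technicality.
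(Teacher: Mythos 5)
Your proposal follows essentially the same route as the paper's proof: both arguments test local optimality of $\hat L$ along the direction $u v^{\top}$ of the smallest non-zero singular value, bound the data-dependent part via $\|\Sigma\|$, the in-sample prediction errors $\Delta_{L_0}+\Delta_{S_0}$, and a uniform bound on $\rho=\tfrac1N\sum_i\langle T_i,uv^{\top}\rangle^{2}$, and then play this off against the TL1 penalty. The only structural difference is that the paper works infinitesimally --- it computes $\partial Q/\partial\sigma_s$, shows the penalty derivative $\lambda_1 a_1(a_1+1)/(a_1+\sigma_s)^{2}$ dominates the data term $(*)\lesssim\sqrt{\Delta_{L_0}+\Delta_{S_0}}\,\big(Gd\log d/(nm_1m_2)\big)^{1/4}$ whenever $\sigma_s$ is below the threshold, and then invokes Lemma 1 of Fan and Li (2001) --- whereas you use the finite competitor $\tilde L=\hat L-\delta uv^{\top}$. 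Two small points to flag. First, your difference quotient yields the penalty gain $\lambda_1(a_1+1)/(a_1+\delta)$ rather than the derivative $\lambda_1 a_1(a_1+1)/(a_1+\delta)^{2}$; these differ by the factor $a_1/(a_1+\delta)$, so under the stated smallness condition on $a_1$ your inequality actually produces a generically \emph{larger} threshold --- still sufficient for the lemma (it is a lower bound), but the algebra will not ``produce exactly'' the stated expression, and your extra quadratic term $\delta^{2}\rho$ forces a short case analysis the paper avoids. Second, in your control of $\rho$ the concentration term of order $\sqrt{Gd\log d/(nm_1m_2)}$ is the \emph{dominant} contribution, not a lower-order correction to $\nu/|\mathcal I|$; it is precisely this term that generates the $(\cdot)^{1/4}$ factor and hence the $(\cdot)^{-1/8}$ exponent in the final bound, so the hierarchy you state should be reversed. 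Your concern about feasibility of $\tilde L$ under the constraint $\|\cdot\|_{\infty}\le\zeta$ is legitimate, but the paper's derivative argument silently carries the same technicality, so it does not distinguish the two proofs.
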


\begin{proof}
    Assume the true rank of $L_0$ is $r_0$ and the rank of the estimator $\hat{L}$ is $k\leq m$. We only discuss the case where $k\geq r_0;$ as $k\leq r_0$ is oracle. Let $\{u_j\}$ and $\{v_j\}$ denote the left and right orthonormal singular vectors of $\hat L$, respectively, and let $D = \text{diag}(\sigma_1,\dots,\sigma_m)$ be the diagonal matrix of its the singular values arranged in a decreasing order. Then by SVD, we have $\hat{L} = \sum_{j=1}^m \sigma_ju_jv_j^\intercal$. Similarly, we denote the SVD of $L_0 = \sum_{j=1}^m \sigma_j^* u_j^* v_j^{*^\intercal}$.

    Next, we derive the partial derivative of $Q(\hat{L}, \hat{S})$ with respect to any singular values $\sigma_s$ where $s>k$:
    \begin{align}
        &\frac{\partial Q(\hat{L}, \hat{S})}{\partial \sigma_s} = \frac{2}{N} \sum_{i=1}^N (\langle T_i, \hat{L} + \hat{S} \rangle - Y_i) \langle T_i, u_s v_s^\intercal \rangle + \lambda_1 \frac{a_1(a_1+1)}{(a_1+ \sigma_s)^2}\notag\\
        =& \frac{2}{N} \sum_{i=1}^N \bigg((\langle T_i, L_0 + S_0 \rangle - Y_i) \langle T_i, u_sv_s^\intercal \rangle +  (\langle T_i, \hat{L} - L_0 + \hat{S} - S_0 \rangle) \langle T_i, u_sv_s^\intercal \rangle\bigg) + \lambda_1 \frac{a_1(a_1+1)}{(a_1+ \sigma_s)^2}\notag\\
        =& \frac{2}{N}\sum_{i\in \Omega} \sigma\xi_i \langle T_i, u_sv_s^\intercal \rangle +  \frac{2}{N}\sum_{i\in \tilde{\Omega}} \sigma\xi_i \langle T_i, u_sv_s^\intercal \rangle + \frac{2}{N} \sum_{i=1}^N \langle T_i, \hat{L}-L_0\rangle \langle T_i, u_sv_s^\intercal \rangle \notag\\
        &\qquad\qquad\qquad\qquad+ \frac{2}{N} \sum_{i=1}^N \langle T_i, \hat{S}-S_0\rangle \langle T_i, u_sv_s^\intercal \rangle + \lambda_1 \frac{a_1(a_1+1)}{(a_1+ \sigma_s)^2}\notag\\
        \lesssim &2\langle \Sigma, u_s v_s^\intercal \rangle + \frac{\sigma^2}{N} \sum_{i \in \tilde{\Omega}}\xi_i^2 + \frac{1}{N} \sum_{i \in \tilde{\Omega}} \langle T_i, u_s v_s^\intercal \rangle^2\notag \\
        &\qquad+ 2\sqrt{\frac{1}{N}\sum_{i=1}^N \langle T_i, \hat{L}-L_0 \rangle^2 +  \frac{1}{N}\sum_{i=1}^N \langle T_i, \hat{S}-S_0 \rangle^2}\sqrt{\frac{1}{N}\sum_{i=1}^N \langle T_i, u_sv_s^\intercal \rangle^2} + \lambda_1 \frac{a_1(a_1+1)}{(a_1+ \sigma_s)^2}\notag\\
        =& (*) + \lambda_1 \frac{a_1(a_1+1)}{(a_1+ \sigma_s)^2}. \label{ineq:der-L-last}
    \end{align}
    
    By  \cite[Lemma 10]{klopp2017robust}, the duality between the operator norm and nuclear norm, and the fact that $u_sv_s^\intercal$ is a rank-1 matrix, we have 
    \begin{align*}
        \langle \Sigma, u_sv_s^\intercal \rangle \leq \|\Sigma\|\|u_sv_s^\intercal\|_* = \|\Sigma\|\|u_s\|\|v_s^\intercal\| \lesssim \sqrt{\frac{G d\log d}{Nm_1m_2}},
    \end{align*}
    holds with probability at least $1/d$.

    Additionally, using the result of \cite[Eq (27)]{klopp2017robust} together with $|\langle T_i, u_s v_s^\intercal \rangle|\leq 1,$ we get
    \begin{align*}
        \frac{\sigma^2}{N} \sum_{i \in \tilde{\Omega}}\xi_i^2 + \frac{1}{N} \sum_{i \in \tilde{\Omega}} \langle T_i, u_s v_s^\intercal \rangle^2 \lesssim \frac{\sigma^2\log d}{N}s_0+ \frac{s_0}{N} \lesssim \frac{\log d}{N},
    \end{align*}
    and it is straightforward to have
    \begin{align*}
        \frac{1}{N}\sum_{i=1}^N \langle T_i, \hat{L}-L_0 \rangle^2 +  \frac{1}{N}\sum_{i=1}^N \langle T_i, \hat{S}-S_0 \rangle^2 \lesssim \Delta_{L_0}(n, m_1, m_2) + \Delta_{S_0}(N, m_1, m_2),
    \end{align*}
   by the definitions of $\Delta_{L_0}(n, m_1, m_2) $ and $\Delta_{S_0}(N, m_1, m_2).$

    Similar to the discussion in \cite[Lemma 5]{zhao2025noisy}  regarding $\frac{1}{N}\sum_{i=1}^N \langle T_i, u_s v_s^\intercal \rangle^2$, we have
    \begin{align*}
        \frac{1}{N}\sum_{i=1}^N \langle T_i, u_s v_s^\intercal \rangle^2 = \frac{1}{N}\sum_{i\in\Omega} \langle T_i, u_s v_s^\intercal \rangle^2 + \frac{1}{N}\sum_{i\in \tilde{\Omega}} \langle T_i, u_s v_s^\intercal \rangle^2 \lesssim \sqrt{\frac{Gd\log d}{nm_1m_2}} + \frac{\log d}{N} \lesssim \sqrt{\frac{Gd\log d}{nm_1m_2}},
    \end{align*}
which leads to
    \begin{align*}
        (*)&\lesssim \sqrt{\frac{Gd\log d}{Nm_1m_2}} + \frac{\log d}{N} + \sqrt{\Delta_{L_0}(n, m_1, m_2) + \Delta_{S_0}(N, m_1, m_2)}\left(\frac{Gd\log d}{nm_1m_2}\right)^{1/4}\\ 
        &\lesssim \sqrt{\Delta_{L_0}(n, m_1, m_2) + \Delta_{S_0}(N, m_1, m_2)}\left(\frac{Gd\log d}{nm_1m_2}\right)^{1/4}.
    \end{align*}

    Comparing it with the last term in the derivative \eqref{ineq:der-L-last}, we obtain
    \begin{align}
        \frac{\lambda_1 \frac{a_1(a_1+1)}{(a_1+ \sigma_s)^2}}{(*)} \gtrsim \lambda_1 \frac{a_1(a_1+1)}{(a_1+ \sigma_s)^2}(\Delta_{L_0}(n, m_1, m_2) + \Delta_{S_0}(N, m_1, m_2))^{-1/2}\left(\frac{Gd\log d}{nm_1m_2}\right)^{-1/4}.
    \end{align}
    When $\sigma_s \lesssim \sqrt{\lambda_1(a_1^2+a_1)}\left((\Delta_{L_0}(n, m_1, m_2) + \Delta_{S_0}(N, m_1, m_2))^2 G d\log d/( nm_1m_2)\right)^{-1/8}$, we have    
    for any $a_1 = {\scriptstyle \mathcal{O}} \left((a_1+1)\lambda_1\left((\Delta_{L_0}(n, m_1, m_2) + \Delta_{S_0}(N, m_1, m_2))^2 G d\log d/( nm_1m_2)\right)^{-1/4}\right)$,  
    \begin{align*}
        \frac{\lambda_1 \frac{a_1(a_1+1)}{(a_1+ \sigma_s)^2}}{(*)} &\gtrsim \lambda_1 \frac{a_1(a_1+1)}{(a_1+ \sigma_s)^2}(\Delta_{L_0}(n, m_1, m_2) + \Delta_{S_0}(N, m_1, m_2))^{-1/2}\left(\frac{Gd\log d}{nm_1m_2}\right)^{-1/4}  \\
        &\gtrsim \lambda_1 \frac{a_1(a_1+1)}{ \sigma_s^2}(\Delta_{L_0}(n, m_1, m_2) + \Delta_{S_0}(N, m_1, m_2))^{-1/2}\left(\frac{Gd\log d}{nm_1m_2}\right)^{-1/4} \gtrsim 1,
    \end{align*}
    which implies $\frac{\partial Q(\hat{L}, \hat{S})}{\partial \sigma_s} > 0$. Then it follows from \cite[Lemma 1]{fan2001variable} that there exists a constant $c>0$ such that $$\sigma_s \geq c \sqrt{\lambda_1(a_1^2+a_1)}\left((\Delta_{L_0}(n, m_1, m_2) + \Delta_{S_0}(N, m_1, m_2))^2 G d\log d/( nm_1m_2)\right)^{-1/8},$$ for any $s>k$.
\end{proof}

\begin{lemma}\label{lemma_for_sparsity}
    Assume $L_0$ is approximately or exactly low-rank and $S_0$ is a sparse matrix, when $\lambda_2^{-1} = \mathcal{O}\bigg( \bigg(\frac{a_2+\zeta}{a_2+1} (\sigma \vee \zeta) \frac{\log d}{N} \bigg)^{-1} \bigg)$, for any $a_2 = {\scriptstyle \mathcal{O}}\left(\lambda_2(\Delta_{L_0}(n, m_1, m_2) + \Delta_{S_0}(N, m_1, m_2))^{-1/2}\right)$, there exits a positive constant $c'$ such that for any $(k, l)$-th non-zero entry of the estimator $\hat{S} \in \mathbb{R}^{m_1 \times m_2}$ should satisfy 
    $$|\hat{S}_{kl}| \geq c'\sqrt{\lambda_2(a_2^2+a_2)}(\Delta_{L_0}(n, m_1, m_2) + \Delta_{S_0}(N, m_1, m_2))^{-1/4},$$ 
    for any $(k, l) \notin \tilde{\mathcal{I}}$. 
\end{lemma}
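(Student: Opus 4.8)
The plan is to mirror the proof of Lemma \ref{lemma_for_lowrankness}, replacing the spurious singular value $\sigma_s$ of $\hat{L}$ by a spurious entry $\hat{S}_{kl}$ with $(k,l)\notin\tilde{\mathcal I}$ and carrying out the optimality argument one coordinate at a time. First I would fix such a $(k,l)$ with $\hat{S}_{kl}\neq0$; since $\hat{S}$ is the global minimizer of $Q(\hat{L},\cdot)$ over $\{\|S\|_\infty\le\zeta\}$ and the penalty $\phi_{a_2}$ is smooth away from $0$, the first-order optimality condition in the $S_{kl}$ coordinate reads $\partial Q(\hat{L},\hat{S})/\partial S_{kl}=0$ whenever $|\hat{S}_{kl}|<\zeta$ (the boundary case $|\hat{S}_{kl}|=\zeta$ is handled at the end, since the claimed bound will turn out to be $o(1)$). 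Substituting $Y_i=\langle T_i,L_0+S_0\rangle+\sigma\xi_i$ and $(S_0)_{kl}=0$, this condition becomes
\[
(*)+\lambda_2\,\frac{a_2(a_2+1)}{(a_2+|\hat{S}_{kl}|)^2}\,\mathrm{sign}(\hat{S}_{kl})=0,\qquad (*):=\tfrac{2}{N}\sum_i\langle T_i,\hat{L}-L_0\rangle(T_i)_{kl}+\tfrac{2}{N}\sum_i\langle T_i,\hat{S}-S_0\rangle(T_i)_{kl}-\tfrac{2\sigma}{N}\sum_i\xi_i(T_i)_{kl},
\]
which is the entrywise analogue of the derivative computation leading to \eqref{ineq:der-L-last}.

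The second step is to bound $|(*)|$. For the two bilinear terms I would use Cauchy--Schwarz, $\bigl|\tfrac1N\sum_i\langle T_i,\hat{L}-L_0\rangle(T_i)_{kl}\bigr|\le\sqrt{\tfrac1N\sum_i\langle T_i,\hat{L}-L_0\rangle^2}\,\sqrt{\tfrac1N\sum_i(T_i)_{kl}^2}$ and similarly for the $\hat{S}$ term, then exploit that $\tfrac1N\sum_i(T_i)_{kl}^2$ is the fraction of observations landing at $(k,l)$ and hence $\le1$ deterministically --- this crude bound is exactly what yields the claimed threshold without the extra $(Gd\log d/nm_1m_2)^{1/8}$ factor present in the low-rank case. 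The quadratic forms $\tfrac1N\sum_i\langle T_i,\hat{L}-L_0\rangle^2$ and $\tfrac1N\sum_i\langle T_i,\hat{S}-S_0\rangle^2$ are both $\lesssim\Delta_{L_0}(n,m_1,m_2)+\Delta_{S_0}(N,m_1,m_2)$ on the event of Theorems \ref{thm:gen_upp_S}--\ref{thm:gen_upp_L}, as already invoked inside the proof of Lemma \ref{lemma_for_lowrankness}. For the noise term, since $(k,l)\notin\tilde{\mathcal I}$ all observations at $(k,l)$ lie in $\Omega$, so $\tfrac1N\sum_i\xi_i(T_i)_{kl}=\Sigma_{kl}$, and a Bernstein bound for sub-exponential noise (Assumption \ref{assump3}) combined with $\pi_{kl}\le\nu/|\mathcal I|$ (Assumption \ref{assump2}) and a union bound over $\le d^2$ entries gives $\sigma|\Sigma_{kl}|\lesssim(\sigma\vee\zeta)\log d/N$, which is $\lesssim\sqrt{\Delta_{S_0}}$ since $\Delta_{S_0}\gtrsim s_0(\sigma\vee\zeta)^2\log d/N$ with $s_0\ge1$ and $N\gtrsim\log d$. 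Altogether $|(*)|\le C\sqrt{\Delta_{L_0}(n,m_1,m_2)+\Delta_{S_0}(N,m_1,m_2)}$ with probability at least $1-(\kappa+3)/d$.

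The third step turns this into the entrywise lower bound. Plugging the bound on $|(*)|$ into the optimality identity (and appealing to \cite[Lemma 1]{fan2001variable} exactly as in Lemma \ref{lemma_for_lowrankness} to rule out small nonzero stationary values) gives $a_2+|\hat{S}_{kl}|\ge C^{-1/2}\sqrt{\lambda_2(a_2^2+a_2)}\,(\Delta_{L_0}+\Delta_{S_0})^{-1/4}$. It remains to absorb the $a_2$ on the left-hand side: since the right-hand side is $\ge\sqrt{\lambda_2 a_2}\,(\Delta_{L_0}+\Delta_{S_0})^{-1/4}$ and the hypothesis $a_2={\scriptstyle \mathcal{O}}\bigl(\lambda_2(\Delta_{L_0}+\Delta_{S_0})^{-1/2}\bigr)$ is precisely $a_2=o\bigl(\sqrt{\lambda_2 a_2}\,(\Delta_{L_0}+\Delta_{S_0})^{-1/4}\bigr)$, the term $a_2$ is negligible and one concludes $|\hat{S}_{kl}|\ge c'\sqrt{\lambda_2(a_2^2+a_2)}\,(\Delta_{L_0}+\Delta_{S_0})^{-1/4}$ for a suitable $c'>0$; the same quantity being $o(1)$ also disposes of the boundary case $|\hat{S}_{kl}|=\zeta$.

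I expect the main obstacle to be the bookkeeping around the noise term: one must verify that the Bernstein deviation $\sqrt{\nu\log d/(N|\mathcal I|)}$ is dominated by $\log d/N$ in the operating regime (sampling ratio at most one, $n\gtrsim d\log d$), and that the floor $s_0(\sigma\vee\zeta)^2\log d/N$ inside $\Delta_{S_0}$ genuinely controls $(\sigma\vee\zeta)\log d/N$ --- which requires $s_0\ge1$ and $N\gtrsim\log d$. Everything else is a routine transcription of the low-rankness argument.
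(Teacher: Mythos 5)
Your proposal follows essentially the same route as the paper's proof: differentiate $Q(\hat L,\hat S)$ in the $(k,l)$ coordinate, bound the data-fidelity part of the gradient by $\sqrt{\Delta_{L_0}+\Delta_{S_0}}$ via Cauchy--Schwarz (using $\tfrac1N\sum_i\langle T_i,\partial\hat S/\partial|\hat S_{kl}|\rangle^2\le 1$) together with $\|\Sigma\|_\infty\lesssim\sigma\log d/N$, and then compare against the TL1 penalty derivative $\lambda_2 a_2(a_2+1)/(a_2+|\hat S_{kl}|)^2$, invoking \cite[Lemma 1]{fan2001variable} to rule out small non-zero stationary values; your absorption of the additive $a_2$ via the hypothesis $a_2={\scriptstyle\mathcal{O}}(\lambda_2(\Delta_{L_0}+\Delta_{S_0})^{-1/2})$ is exactly the role that condition plays in the paper's argument. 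The only cosmetic differences are that you rederive the $\|\Sigma\|_\infty$ bound by Bernstein rather than citing it and you explicitly treat the boundary case $|\hat S_{kl}|=\zeta$, which the paper leaves implicit.
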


\begin{proof}

For the estimated pair $(\hat{L}, \hat{S})$, the partial derivative of $Q(\hat{L}, \hat{S})$ with respect to non-zero $|\hat{S}_{kl}|$, where $(k, l) \notin \tilde{\mathcal{I}}$, which implies that $i \notin \tilde{\Omega}$, would be 
\begin{align}
   & \frac{\partial Q(\hat{L}, \hat{S})}{\partial |\hat{S}_{kl}|} = \frac{2}{N} \sum_{i=1}^N (\langle T_i, \hat{L} + \hat{S} \rangle - Y_i) \langle T_i, \frac{\partial \hat{S}}{\partial |\hat{S}_{kl}|} \rangle + \lambda_2 \frac{a_2(a_2+1)}{(a_2+ |\hat{S}_{kl}|)^2} \notag\\
    = &\frac{2}{N} \sum_{i=1}^N (\langle T_i, L_0 + S_0 \rangle - Y_i) \langle T_i, \frac{\partial \hat{S}}{\partial |\hat{S}_{kl}|} \rangle + \frac{2}{N} \sum_{i=1}^N (\langle T_i, \hat{L} - L_0 + \hat{S} - S_0 \rangle) \langle T_i, \frac{\partial \hat{S}}{\partial |\hat{S}_{kl}|} \rangle + \lambda_2 \frac{a_2(a_2+1)}{(a_2+ |\hat{S}_{kl}|)^2}\notag\\
    =&\frac{2}{N} \sum_{i \in \Omega} (\langle T_i, L_0 + S_0 \rangle - Y_i) \langle T_i, \frac{\partial \hat{S}}{\partial |\hat{S}_{kl}|} \rangle + \frac{2}{N} \sum_{i=1}^N (\langle T_i, \hat{L} - L_0 + \hat{S} - S_0 \rangle) \langle T_i, \frac{\partial \hat{S}}{\partial |\hat{S}_{kl}|} \rangle + \lambda_2 \frac{a_2(a_2+1)}{(a_2+ |\hat{S}_{kl}|)^2}\notag\\
    = &2 \langle \Sigma, \frac{\partial \hat{S}}{\partial |\hat{S}_{kl}|} \rangle + \frac{2}{N} \sum_{i=1}^N (\langle T_i, \hat{L} - L_0 \rangle) \langle T_i, \frac{\partial \hat{S}}{\partial |\hat{S}_{kl}|} \rangle + \frac{2}{N} \sum_{i=1}^N (\langle T_i, \hat{S} - S_0 \rangle) \langle T_i, \frac{\partial \hat{S}}{\partial |\hat{S}_{kl}|} \rangle + \lambda_2 \frac{a_2(a_2+1)}{(a_2+ |\hat{S}_{kl}|)^2}\notag\\
    =&2 \langle \Sigma, \frac{\partial \hat{S}}{\partial |\hat{S}_{kl}|} \rangle +  \frac{2}{N} \sum_{i=1}^N (\langle T_i, \hat{L} - L_0 \rangle) \langle T_i, \frac{\partial \hat{S}}{\partial |\hat{S}_{kl}|} \rangle + \frac{2}{N} \sum_{i=1}^N (\langle T_i, \hat{S} - S_0 \rangle) \langle T_i, \frac{\partial \hat{S}}{\partial |\hat{S}_{kl}|} \rangle + \lambda_2 \frac{a_2(a_2+1)}{(a_2+ |\hat{S}_{kl}|)^2}\notag\\
    \geq& -2\|\Sigma\|_\infty \left\|\frac{\partial \hat{S}}{\partial |\hat{S}_{kl}|}\right\|_1 - 2\sqrt{\frac{1}{N} \sum_{i=1}^N \langle T_i, \hat{L} - L_0 \rangle^2}\sqrt{\frac{1}{N}\sum_{i=1}^N \langle T_i, \frac{\partial \hat{S}}{\partial |\hat{S}_{kl}|} \rangle^2} \notag\\
    &\qquad- 2\sqrt{\frac{1}{N} \sum_{i=1}^N \langle T_i, \hat{S} - S_0 \rangle^2}\sqrt{\frac{1}{N}\sum_{i=1}^N \langle T_i, \frac{\partial \hat{S}}{\partial |\hat{S}_{kl}|} \rangle^2}
    + \lambda_2 \frac{a_2(a_2+1)}{(a_2+ |\hat{S}_{kl}|)^2}\notag\\
    \geq& -2\|\Sigma\|_\infty - 2\sqrt{\frac{1}{N} \sum_{i=1}^N \langle T_i, \hat{L} - L_0 \rangle^2} - 2\sqrt{\frac{1}{N} \sum_{i=1}^N \langle T_i, \hat{S} - S_0 \rangle^2} + \lambda_2 \frac{a_2(a_2+1)}{(a_2+ |\hat{S}_{kl}|)^2}. \label{ineq:der-S-last}
\end{align}

By  \cite[Lemma 10]{klopp2017robust}, we have $\|\Sigma\|_\infty \lesssim (\sigma \log d)/N$, which means $\|\Sigma\|_\infty = \mathcal{O}_p(\log d/N)$. Then,
\begin{align*}
    &2\|\Sigma\|_\infty + 2 \sqrt{\frac{1}{N} \sum_{i=1}^N \langle T_i, \hat{L} - L_0 \rangle^2} + 2\sqrt{\frac{1}{N} \sum_{i=1}^N \langle T_i, \hat{S} - S_0 \rangle^2} \\
    &\quad = \mathcal{O}_p \left(\sqrt{\Delta_{L_0}(n, m_1, m_2) + \Delta_{S_0}(N, m_1, m_2)}\right).
\end{align*}
Comparing it to the last term in \eqref{ineq:der-S-last}, we have
\begin{align*}
    &\frac{\lambda_2 \frac{a_2(a_2+1)}{(a_2+ |\hat{S}_{kl}|)^2}}{2\|\Sigma\|_\infty + 2 \sqrt{\frac{1}{N} \sum_{i=1}^N \langle T_i, \hat{L} - L_0 \rangle^2} + 2\sqrt{\frac{1}{N} \sum_{i=1}^N \langle T_i, \hat{S} - S_0 \rangle^2}} \\
   \gtrsim  & \frac{\lambda_2a_2(a_2+1)}{(a_2+ |\hat{S}_{kl}|)^2 \sqrt{\Delta_{L_0}(n, m_1, m_2) + \Delta_{S_0}(N, m_1, m_2)}}.
\end{align*}



 For any $a_2 = {\scriptstyle \mathcal{O}} \left(\lambda_2 (\Delta_{L_0}(n, m_1, m_2) + \Delta_{S_0}(N, m_1, m_2))^{-1/2}\right)$, when 
 $$|\hat{S}_{kl}| \leq c'\sqrt{\lambda_2(a_2^2+a_2)}(\Delta_{L_0}(n, m_1, m_2) + \Delta_{S_0}(N, m_1, m_2))^{-1/4},$$ 
 for $(k, l) \notin \tilde{\mathcal{I}}$ we have
\begin{align*}
    &\frac{\lambda_2 \frac{a_2(a_2+1)}{(a_2+ |\hat{S}_{kl}|)^2}}{2\|\Sigma\|_\infty + 2 \sqrt{\frac{1}{N} \sum_{i=1}^N \langle T_i, \hat{L} - L_0 \rangle^2} + 2\sqrt{\frac{1}{N} \sum_{i=1}^N \langle T_i, \hat{S} - S_0 \rangle^2}} \\
    \gtrsim&  \frac{\lambda_2a_2(a_2+1)}{|\hat{S}_{kl}|^2 \sqrt{\Delta_{L_0}(n, m_1, m_2) + \Delta_{S_0}(N, m_1, m_2)}} \gtrsim 1.
\end{align*}

Therefore,  $\frac{\partial Q(\hat{L}, \hat{S})}{\partial |\hat{S}_{kl}|} > 0$ holds and it further follows from   \cite[Lemma 1]{fan2001variable} that there exists a constant $c'>0$ such that 
$$|\hat{S}_{kl}| \geq c'\sqrt{\lambda_2(a_2^2+a_2)}(\Delta_{L_0}(n, m_1, m_2) + \Delta_{S_0}(N, m_1, m_2))^{-1/4},$$ 
for $(k, l) \notin \tilde{\mathcal{I}}$.
\end{proof}

\subsection{Proof of Theorem \ref{thm:gen_upp_S}} \label{S1}

\begin{proof}[\textbf{Proof of Theorem \ref{thm:gen_upp_S}}]


Similar to the proof of Lemma \ref{lemma_L2Pi}, we proceed with the optimality inequality: $Q(\hat{L}, \hat{S}) \leq Q(\hat{L}, S_0)$, i.e.,
\begin{align}
    &\frac{1}{N} \sum_{i=1}^{N} \left(Y_i - \langle T_i, \hat{L}+\hat{S} \rangle\right)^2 
    + \lambda_1 \Phi_{a_1} (\hat{L}) + \lambda_2 \phi_{a_2} (\hat{S}) \notag\\
    & \leq \frac{1}{N} \sum_{i=1}^{N} \left(Y_i - \langle T_i, \hat{L}+ S_0 \rangle\right)^2 
    + \lambda_1 \Phi_{a_1} (\hat{L}) + \lambda_2 \phi_{a_2} (S_0),
    \end{align}
which is equivalent to 
   \begin{align}
    &\frac{1}{N} \sum_{i=1}^{N} \left(\langle T_i, S_0 - \hat{S} \rangle + \langle T_i, L_0 - \hat{L} \rangle + \sigma\xi_i\right)^2 \notag\\
    & \leq \frac{1}{N} \sum_{i=1}^{N} \left(\langle T_i, L_0 - \hat{L} \rangle + \sigma\xi_i\right)^2 
    + \lambda_2\left(\phi_{a_2} (S_0) - \phi_{a_2} (\hat{S})\right), 
    \end{align}
    after substituting $Y_i$ with the trace regression model \eqref{trace regression model}. We decompose the summation and simplify to get

\begin{align}\label{basic_ineq_for_S}
    &\frac{1}{N} \sum_{i=1}^N \langle T_i, \hat{S} -S_0 \rangle^2 + \frac{2}{N} \sum_{i \in \Tilde{\Omega}} \langle T_i, \hat{L} -L_0 \rangle \langle T_i, \hat{S} -S_0 \rangle + \frac{2\sigma}{N}  \sum_{i\in \Tilde{\Omega}} \langle T_i \xi_i , S_0 - \hat{S}  \rangle \notag\\
    & \leq \frac{2\sigma}{N} \sum_{i\in \Omega} \langle T_i \xi_i , \hat{S} -S_0  \rangle + \frac{2}{N} \sum_{i \in \Omega} |\langle T_i, \hat{L} -L_0 \rangle \langle T_i, \hat{S} -S_0 \rangle| + \lambda_2\left(\phi_{a_2} (S_0) - \phi_{a_2} (\hat{S})\right).
\end{align}

By the duality between the infinity norm and $\ell_1$ norm, together with \cite[Lemma 18]{klopp2017robust}, we have
\begin{align}
    &\frac{1}{N} \sum_{i=1}^N \langle T_i, \hat{S} -S_0 \rangle^2 - \frac{2}{N} \sum_{i\in \Tilde{\Omega}} \langle T_i, \hat{S} -S_0 \rangle^2 - \frac{1}{N} \sum_{i\in \Tilde{\Omega}} \langle T_i, \hat{L} -L_0 \rangle^2 - \frac{\sigma^2}{N}\sum_{i\in \Tilde{\Omega}}\xi_i^2 \notag\\
    &\leq 2\|\Sigma\|_\infty \|(\hat{S} - S_0)_{\mathcal{I}}\|_1 + 4\zeta\|W\|_\infty\|(\hat{S} - S_0)_{\mathcal{I}}\|_1 + \lambda_2\left(\phi_{a_2} (S_0) - \phi_{a_2} (\hat{S})\right).
\end{align}
It further follows from $\langle T_i, \hat{S} - S_0\rangle \leq \|\hat{S} - S_0\|_\infty \leq 2\zeta$, $\langle T_i, \hat{L} - L_0\rangle \leq \|\hat{L} - L_0\|_\infty \leq 2\zeta,$ and the result of \cite[Eq (27)]{klopp2017robust} that
\begin{align}
    \frac{1}{N} \sum_{i=1}^N \langle T_i, \hat{S} -S_0 \rangle^2 &\lesssim \left(2\|\Sigma\|_\infty  + 4\zeta\|W\|_\infty\right)\|(\hat{S} - S_0)_{\mathcal{I}}\|_1  \notag\\
    &\quad +\frac{8\zeta^2 |\Tilde{\Omega}|}{N}+ \frac{C\sigma^2 |\Tilde{\Omega}| \log d}{N} + \lambda_2\left(\phi_{a_2} (S_0) - \phi_{a_2} (\hat{S})\right).
\end{align}

By \cite[Lemma 10]{klopp2017robust}, there exits a positive constant $C'$, such that 
\begin{align}
    \|\Sigma\|_\infty \leq C' \sigma \frac{\log d}{N}, \quad \|W\|_\infty \leq C'\frac{\log d}{N}, \notag
\end{align}
which implies
\begin{align*}
    2\|\Sigma\|_\infty  + 4\zeta\|W\|_\infty \lesssim (\sigma \vee \zeta)\frac{\log d}{N}.
\end{align*}
We derive two upper bounds in Part 1 and Part 2 using different proof techniques, and then take their minimum to obtain the desired inequality \eqref{general_bound_S} in Theorem \ref{thm:gen_upp_S}. 

\textbf{Part 1:} we obtain the first upper bound as follows, 
\begin{align*}
   & \frac{1}{N} \sum_{i=1}^N \langle T_i, \hat{S} -S_0 \rangle^2 \\
    \lesssim& (\sigma \vee \zeta)\frac{\log d}{N} \|(\hat{S} - S_0)_{\mathcal{I}}\|_1 
    + s_0(\sigma\vee\zeta)^2\frac{\log d}{N} 
    + \lambda_2\phi_{a_2}(S_0) - \lambda_2\phi_{a_2}(\hat{S}) \\
    \lesssim & (\sigma \vee \zeta)\frac{\log d}{N} \left(\|S_0\|_1 + \|\hat{S}\|_1\right) 
    + s_0(\sigma\vee\zeta)^2\frac{\log d}{N} 
    + \lambda_2\phi_{a_2}(S_0) - \lambda_2 \frac{a_2+1}{a_2+\zeta}\|\hat{S}\|_1\\
    \lesssim& \zeta(\sigma \vee \zeta)\frac{\log d}{N}s_0  + s_0(\sigma\vee\zeta)^2\frac{\log d}{N} + \lambda_2\phi_{a_2}(S_0) +\left((\sigma \vee \zeta)\frac{\log d}{N} - \lambda_2 \frac{a_2+1}{a_2+\zeta}\right)\|\hat{S}\|_1\\
    \lesssim& (\sigma\vee\zeta)^2\frac{\log d}{N}s_0 + \lambda_2\phi_{a_2}(S_0) + \left((\sigma \vee \zeta)\frac{\log d}{N} - \lambda_2 \frac{a_2+1}{a_2+\zeta}\right)\|\hat{S}\|_1.
\end{align*}

Take $\lambda_2 \gtrsim \frac{a_2+\zeta}{a_2+1}(\sigma\vee\zeta)\frac{\log d}{N}$, then
\begin{align}
    \frac{1}{N} \sum_{i=1}^N \langle T_i, \hat{S} -S_0 \rangle^2 \lesssim s_0(\sigma\vee\zeta)^2\frac{\log d}{N} 
    + \lambda_2\phi_{a_2}(S_0). 
\end{align}



The mean squared error,  $\frac{1}{N} \sum_i \langle T_i, \hat{S} -S_0 \rangle^2 $, only involves observed entries and the regularizer $\phi_{a_2}(\cdot)$ penalizes all non-zero entries, the optimal solution $\hat{S}$ tends to have zeros at those unobserved entries to minimize the objective function. As a result, the support of $\hat{S}$ is effectively restricted to the observed indices. 
Under this  consideration, we have $\|\hat{S} - S_0\|_F^2 = \sum_{i=1}^N \langle T_i, \hat{S} -S_0 \rangle^2$, which implies
\begin{align}
    \frac{\|\hat{S} - S_0\|_F^2}{m_1m_2} \lesssim s_0(\sigma\vee\zeta)^2 \frac{\log d}{m_1m_2} + \frac{N}{m_1m_2}\lambda_2\phi_{a_2}(S_0) \label{eqn:bound11}
\end{align}

Particularly, when $\lambda_2 \asymp \frac{a_2+\zeta}{a_2+1} (\sigma \vee \zeta) \frac{\log d}{N}$,  we use $\phi_{a_2}(S_0) \leq \frac{(a_2+1)\zeta}{a_2+\zeta}s_0$ to get
\begin{align}
    \frac{\|\hat{S} - S_0\|_F^2}{m_1m_2} \lesssim (\sigma\vee\zeta)^2 s_0 \frac{\log d}{m_1m_2}.
\end{align}


\textbf{Part 2:} we derive an upper bound from an alternative approach. 
By Lemmas \ref{S1:lemma3tri_ineq} and \ref{lemma_addition}, we have
\begin{multline*}
    \phi_{a_2}(\hat{S})= \phi_{a_2}(S_0 + \hat{S}- S_0) = \phi_{a_2}(S_0 + P_{S_0}(\hat{S}- S_0) + P^\perp_{S_0}(\hat{S}- S_0)) \\
    \ge \phi_{a_2}(S_0 + P^\perp_{S_0}(\hat{S}- S_0)) - \phi_{a_2}( P_{S_0}(\hat{S}- S_0) ) = \phi_{a_2}(S_0) +  \phi_{a_2}(P^\perp_{S_0}(\hat{S}- S_0)) - \phi_{a_2}( P_{S_0}(\hat{S}- S_0) ),
\end{multline*}
thus leading to
\begin{align*}
    \frac{1}{N} \sum_{i=1}^N \langle T_i, \hat{S} -S_0 \rangle^2 
    \lesssim& (\sigma \vee \zeta)\frac{\log d}{N} \|(\hat{S} - S_0)_{\mathcal{I}}\|_1 
    + s_0(\sigma\vee\zeta)^2\frac{\log d}{N} 
    + \lambda_2\phi_{a_2}(S_0) - \lambda_2\phi_{a_2}(\hat{S}) \\
   \lesssim &(\sigma \vee \zeta)\frac{\log d}{N} \left[\|P_{S_0}(\hat{S} - S_0)_{\mathcal{I}}\|_1  +\|P^\perp_{S_0}(\hat{S} - S_0)_{\mathcal{I}}\|_1  \right]\\
   & \qquad +  s_0(\sigma\vee\zeta)^2\frac{\log d}{N}  + \lambda_2 (\phi_{a_2}( P_{S_0}(\hat{S}- S_0)) - \phi_{a_2}(P^\perp_{S_0}(\hat{S}- S_0)) ).
\end{align*}
Take $\lambda_2 \gtrsim \frac{a_2+\zeta}{a_2+1}(\sigma\vee\zeta)\frac{\log d}{N}$, then
\begin{multline}
    \frac{1}{N} \|\hat{S} - S_0\|_F^2 \lesssim (\sigma \vee \zeta)\frac{\log d}{N} \sqrt{s_0} \|\hat{S} - S_0\|_F + s_0(\sigma\vee\zeta)^2\frac{\log d}{N} + \lambda_2 \frac{a_2+1}{a_2}\sqrt{s_0}\|\hat{S} - S_0\|_F\\
    \frac{1}{N} \|\hat{S} - S_0\|_F^2 \lesssim s_0(\sigma\vee\zeta)^2\frac{\log d}{N}  + \lambda_2^2 \left( \frac{a_2+1}{a_2}\right)^2 s_0 N \\
       \frac{\|\hat{S} - S_0\|_F^2}{m_1m_2} \lesssim s_0(\sigma\vee\zeta)^2 \frac{\log d}{m_1m_2} + \frac{N^2}{m_1m_2} \lambda_2^2 \left( \frac{a_2+1}{a_2}\right)^2 s_0.\label{eqn:bound2}
\end{multline}
Combining \eqref{eqn:bound11} and \eqref{eqn:bound2}, we have
\begin{align}
      \frac{\|\hat{S} - S_0\|_F^2}{m_1m_2} \lesssim s_0(\sigma\vee\zeta)^2 \frac{\log d}{m_1m_2} + \min\left\{ \frac{N}{m_1m_2}\lambda_2\phi_{a_2}(S_0),  \frac{N^2}{m_1m_2} \lambda_2^2 \left( \frac{a_2+1}{a_2}\right)^2 s_0 \right\}. \label{ineq_for_S2}
\end{align}

\end{proof}

\subsection{Proof of Theorem \ref{thm:gen_upp_L}}\label{S2}

Here, we focus on deriving an upper bound for $\|\hat{L} - L_0\|_{F}^2 / (m_1m_2)$ as follows,
\begin{align*}
    \|\hat{L} - L_0\|_F^2 
    &\leq \|(\hat{L} - L_0)_\mathcal{I}\|_F^2 + \|(\hat{L} - L_0)_{\Tilde{\mathcal{I}}}\|_F^2 \notag \\
    &\leq \|(\hat{L} - L_0)_\mathcal{I}\|_F^2 + \sum_{(i,j) \in \Tilde{\mathcal{I}}} (\hat{L}_{ij} - L_{0,ij})^2 \notag \\
    &\leq \|(\hat{L} - L_0)_\mathcal{I}\|_F^2 + \sum_{(i,j) \in \Tilde{\mathcal{I}}} \left( \hat{L}_{ij}^2 + L_{0,ij}^2 + 2 \hat{L}_{ij} L_{0,ij} \right) \notag \\
    &\leq \|(\hat{L} - L_0)_\mathcal{I}\|_F^2 + 4\zeta^2 |\Tilde{\mathcal{I}}| \notag \\
    &\leq \nu |\mathcal{I}| \|(\hat{L} - L_0)_\mathcal{I}\|_{L_2(\Pi)}^2 + 4\zeta^2 |\Tilde{\mathcal{I}}|. \notag \\
\end{align*}
In short, we get
\begin{equation}
    \frac{\|\hat{L} - L_0\|_F^2}{m_1 m_2} 
    \leq \frac{\nu |\mathcal{I}| \|(\hat{L} - L_0)_\mathcal{I}\|_{L_2(\Pi)}^2}{m_1 m_2} + \frac{4\zeta^2 |\Tilde{\mathcal{I}}|}{m_1 m_2}. \label{ineq_L_fro}       
\end{equation}

Using the result of Lemma \ref{lemma_L2Pi}, we obtain the following analysis. 

\begin{proof}[\textbf{Proof of Theorem \ref{thm:gen_upp_L}}]
Applying the triangle inequality for the nuclear norm in \eqref{ineq:end-lemma4} yields
\begin{align*}
    \|(\hat{L} - L_0)_{\mathcal{I}}\|^2_{L_2(\Pi)} \lesssim \beta\frac{\sigma}{\sqrt{m_1m_2}} \sqrt{\frac{G d\log d}{N}} \left(\|\hat{L}\|_* + \|L_0\|_*\right) + \beta \Delta_{S_0}(N,m_1,m_2) + \zeta^2\sqrt{\frac{\log d}{d}}   \notag\\
    +\beta\lambda_1\Phi_{a_1} (L_0) - \beta\lambda_1 \Phi_{a_1} (\hat{L})  
    +\zeta \frac{\|\hat{L}\|_* + \|L_0\|_*}{\sqrt{m_1m_2}} \sqrt{\frac{G M\log d}{n}}, \notag\\
    \lesssim \left(\beta\frac{\sigma}{\sqrt{m_1m_2}} \sqrt{\frac{G d\log d}{N}} + \frac{\zeta}{\sqrt{m_1m_2}} \sqrt{\frac{G M\log d}{n}}\right)\|L_0\|_* +\beta\lambda_1\Phi_{a_1} (L_0) \notag\\
    +\left(\beta\frac{\sigma}{\sqrt{m_1m_2}} \sqrt{\frac{G d\log d}{N}} + \frac{\zeta}{\sqrt{m_1m_2}} \sqrt{\frac{G M\log d}{n}}\right)\|\hat{L}\|_* - \beta\lambda_1 \Phi_{a_1} (\hat{L})  \notag\\
    + \beta \Delta_{S_0}(N,m_1,m_2) + \zeta^2\sqrt{\frac{\log d}{d}}.
\end{align*}

Similar to the proof of Theorem \ref{thm:gen_upp_S}, we divide the discussion into two parts.

\textbf{Part 1:}
Using the inequality of TL1 function: $\Phi_{a_1} (\hat{L}) \geq \frac{a_1+1}{a_1+\sigma_1(\hat{L})} \|\hat{L}\|_*$ 
where $\sigma_1(\hat{L})$ denotes the largest singular value of $\hat{L}$, we have
\begin{align}
    \|(\hat{L} - L_0)_{\mathcal{I}}\|^2_{L_2(\Pi)} &\lesssim \beta \left\{(\sigma \vee \zeta)\sqrt{\frac{G d\log d}{n}} \frac{\|L_0\|_*}{\sqrt{m_1m_2}} + \lambda_1 \Phi_{a_1}(L_0)\right\} \notag\\
    &\qquad + \beta \Delta_{S_0}(N,m_1,m_2) +\zeta^2\sqrt{\frac{\log d}{d}}
    \notag\\
    &\qquad+\beta\left(\frac{(\sigma \vee \zeta)}{\sqrt{m_1m_2}} \sqrt{\frac{G d\log d}{n}} - \lambda_1\frac{a_1+1}{a_1+\sigma_1(\hat{L})}\right)\|\hat{L}\|_* .
    \label{ineq:L2pi_sim}
\end{align}
Since $\sigma_1(\hat{L}) \leq \zeta \sqrt{m_1m_2}$, we take $\lambda_1 \gtrsim \frac{a_1+\zeta\sqrt{m_1m_2}}{a_1+1}\frac{(\sigma \vee \zeta)}{\sqrt{m_1m_2}} \sqrt{\frac{G d\log d}{n}}$, 
thus leading to
\begin{align*}
    \|(\hat{L} - L_0)_{\mathcal{I}}\|^2_{L_2(\Pi)} &\lesssim \beta \left\{(\sigma \vee \zeta)\sqrt{\frac{G d\log d}{n}} \frac{\|L_0\|_*}{\sqrt{m_1m_2}} + \lambda_1 \Phi_{a_1}(L_0)\right\} + \beta \Delta_{S_0}(N,m_1,m_2) + \zeta^2\sqrt{\frac{\log d}{d}},
\end{align*}
 which implies 
\begin{multline}\label{the2:ineq}
     \frac{\|\hat{L} - L_0\|_F^2}{m_1m_2} \lesssim 
     \nu \beta \left\{(\sigma \vee \zeta)\sqrt{\frac{G d\log d}{n}} \frac{\|L_0\|_*}{\sqrt{m_1m_2}} + \lambda_1 \Phi_{a_1}(L_0)\right\} +\nu\beta \Delta_{S_0}(N,m_1,m_2) \\ + \zeta^2\sqrt{\frac{\log d}{d}} + \frac{4\zeta^2 s_0}{m_1m_2}. 
\end{multline}





\textbf{Part 2:} 
We adopt the same projection definitions as those introduced in  \cite[Appendix C]{zhao2025noisy} to facilitate the derivation of the TL1 function, i.e., $\Phi_{a_1}(\cdot)$,  on low-rank matrices.
For any matrix $A \in  \mathbb{R}^{m_1 \times m_2}$, let $U_A$ and $V_A$ be the left and right singular matrices of $A$, and $D_A$ is the diagonal matrix with the singular values of $A$, i.e., the SVD of $A$ is expressed by  $ A= U_A D_A V_A^\intercal $. 
We define $S_U(A)$ and $S_V(A)$ to be the linear subspaces spanned by column vectors of $U_A$ and $V_A,$ respectively, and denote their corresponding orthogonal components, denoted by $S^\perp_{U}$ and $S^\perp_{V}$. 

For any matrix $B \in  \mathbb{R}^{m_1 \times m_2}$,
we set
\begin{align}
\mathcal{P}^\perp_{A}(B) = \mathbf P_{S^\perp_{U}(A)}B\mathbf P_{S^\perp_{V(A)}} \quad \mbox{and} \quad  \mathcal{P}_A(B) = B-\mathcal{P}^\perp_{A}(B), 
\end{align}
where $\mathbf P_S$ denotes the projection onto the linear subspace $S$. 
Then, by the Mean Value Theorem and  \cite[Lemma 4]{zhao2025noisy}, there exists a matrix $\tilde{L}$ componentwise between $\hat{L}$ and $L_0 + P_{L_0}^\perp (\hat{L} - L_0)$ such that
\begin{align}
    \Phi_{a_1}(\hat{L}) &= \Phi_{a_1}(L_0 + \hat{L} - L_0)  \notag\\
    &= \Phi_{a_1}(L_0 + \mathcal{P}_{L_0}^\perp (\hat{L} - L_0)) + \langle \nabla \Phi_{a_1}(\tilde{L}), \mathcal{P}_{L_0} (\hat{L} - L_0) \rangle \notag\\
    &\geq \Phi_{a_1}(L_0) + \Phi_{a_1}(\mathcal{P}_{L_0}^\perp (\hat{L} - L_0)) - \left\| \nabla \Phi_{a_1}(\tilde{L}) \right\| \left\| \mathcal{P}_{L_0} (\hat{L} - L_0) \right\|_* \notag\\
    &\geq \Phi_{a_1}(L_0) + \Phi_{a_1}(\mathcal{P}_{L_0}^\perp (\hat{L} - L_0)) - \frac{a_1(1+a_1)}{a_1^2} \left\| \mathcal{P}_{L_0} (\hat{L} - L_0) \right\|_* \notag\\
    &= \Phi_{a_1}(L_0) + \Phi_{a_1}(\mathcal{P}_{L_0}^\perp (\hat{L} - L_0)) - \frac{a_1+1}{a_1} \left\| \mathcal{P}_{L_0} (\hat{L} - L_0) \right\|_* .
\end{align}

Following the inequality \eqref{ineq:end-lemma4} in Lemma \ref{lemma_L2Pi}, we have
\begin{align}
    \|(\hat{L} - L_0)_{\mathcal{I}}\|^2_{L_2(\Pi)}\lesssim \beta\frac{\sigma}{\sqrt{m_1m_2}} \sqrt{\frac{G d\log d}{N}} \left(\left\| \mathcal{P}_{L_0} (\hat{L} - L_0) \right\|_* + \left\| \mathcal{P}_{L_0}^\perp (\hat{L} - L_0) \right\|_*\right) \notag\\
    + \beta \Delta_{S_0}(N,m_1,m_2) + \zeta^2 \sqrt{\frac{\log d}{n}}   
     +\beta\lambda_1 \Phi_{a_1} (L_0) \notag  \\
    - \beta\lambda_1 \left(\Phi_{a_1}(L_0) + \Phi_{a_1}(\mathcal{P}_{L_0}^\perp (\hat{L} - L_0)\right) 
    - \frac{a_1+1}{a_1} \left\| \mathcal{P}_{L_0} (\hat{L} - L_0) \right\|_*) \notag \\
    + \zeta \frac{\left\| \mathcal{P}_{L_0} (\hat{L} - L_0) \right\|_* + \left\| \mathcal{P}_{L_0}^\perp (\hat{L} - L_0) \right\|_*}{\sqrt{m_1m_2}} \sqrt{\frac{G M\log d}{n}}, \notag 
\end{align}
which can be written as
\begin{align}
    \|(\hat{L} - L_0)_{\mathcal{I}}\|^2_{L_2(\Pi)}\lesssim \beta \left(\frac{\sigma \vee \zeta}{\sqrt{m_1m_2}} \sqrt{\frac{G d\log d}{n}} + \lambda_1 \frac{a_1+1}{a_1} \right) \left\| \mathcal{P}_{L_0} (\hat{L} - L_0) \right\|_* \notag\\
    + \beta \Delta_{S_0}(N,m_1,m_2) + \zeta^2 \sqrt{\frac{\log d}{n}} \notag\\ 
    + \beta \left(\frac{\sigma \vee \zeta}{\sqrt{m_1m_2}} \sqrt{\frac{G d\log d}{n}} - \lambda_1\frac{a_1+1}{a_1+\zeta\sqrt{m_1m_2}}\right)\left\| \mathcal{P}_{L_0}^\perp (\hat{L} - L_0) \right\|_* .  
\end{align}

Since $L_0$ is exactly low-rank with rank $r_0$, then
\begin{align*}
    \|\mathcal{P}_{L_0} (\hat{L} -L_0)\|_* &\leq \sqrt{\rank (\mathcal{P}_{L_0} (\hat{L} -L_0))} \|\hat{L} - L_0\|_\infty \\
    &\leq \sqrt{2\;\rank (\hat{L} - L_0)} \|\hat{L} - L_0\|_F \leq  \sqrt{2r_0} \|\hat{L} - L_0\|_F.
\end{align*}

Taking $\lambda_1 \gtrsim \frac{a_1+\zeta\sqrt{m_1m_2}}{a_1+1}\frac{(\sigma \vee \zeta)}{\sqrt{m_1m_2}} \sqrt{\frac{G d\log d}{n}}$, 
we have 
\begin{align}
    \|(\hat{L} - L_0)_{\mathcal{I}}\|^2_{L_2(\Pi)}
    \lesssim \beta \left(\frac{\sigma \vee \zeta}{\sqrt{m_1m_2}} \sqrt{\frac{G d\log d}{n}} + \lambda_1 \frac{a_1+1}{a_1} \right) \sqrt{2r_0} \|\hat{L} - L_0\|_F \notag\\
    + \beta \Delta_{S_0}(N,m_1,m_2) + \zeta^2 \sqrt{\frac{\log d}{n}} \notag\\
    \lesssim \beta \lambda_1 \frac{a_1+1}{a_1}  \sqrt{2r_0} \|\hat{L} - L_0\|_F 
    + \beta \Delta_{S_0}(N,m_1,m_2) + \zeta^2 \sqrt{\frac{\log d}{n}}.
\end{align}

It follows from \eqref{ineq_L_fro} that we get
\begin{align}
    \frac{\|\hat{L} - L_0\|_F^2}{m_1m_2} &\lesssim \nu\beta^2 \lambda_1^2 \frac{(a_1+1)^2m_1m_2}{a_1^2}r_0
    + \nu \beta \Delta_{S_0}(N,m_1,m_2) + \nu\zeta^2 \sqrt{\frac{\log d}{n}} + \frac{4\zeta^2 s_0}{m_1m_2}. \label{ineq:exact_large_a}
\end{align}

Combining \textbf{Part 1} and \textbf{Part 2} yields
\begin{multline} \label{general_L_bound}
    \frac{\|\hat{L} - L_0\|_F^2}{m_1m_2} \lesssim \nu \beta\min\left\{(\sigma \vee \zeta)\sqrt{\frac{G d\log d}{n}} \frac{\|L_0\|_*}{\sqrt{m_1m_2}} + \lambda_1 \Phi_{a_1}(L_0), \beta \lambda_1^2 \frac{(a_1+1)^2m_1m_2}{a_1^2}r_0\right\}\\
    + \nu \beta \Delta_{S_0}(N,m_1,m_2) + \nu\zeta^2 \sqrt{\frac{\log d}{n}} + \frac{4\zeta^2 s_0}{m_1m_2}. 
\end{multline}




\end{proof}

\subsection{Proof of Corollary \ref{coro:approx_large_a} and Corollary \ref{coro:exact_lowrank}}\label{S3}

\begin{proof}[\textbf{Proof of Corollary \ref{coro:approx_large_a}}]
Take $\lambda_1 \asymp \frac{a_1+\zeta\sqrt{m_1m_2}}{a_1+1}\frac{(\sigma \vee \zeta)}{\sqrt{m_1m_2}} \sqrt{\frac{G d\log d}{n}}$,
 for any $a_1^{-1} = \mathcal{O}\left(\left(\zeta\sqrt{m_1m_2}\right)^{-1}\right)$, by using
 $\|L_0\|_*/\sqrt{m_1m_2} \leq \gamma$ and $\Phi_{a_1}(L_0) \leq \frac{a_1+1}{a_1}\|L_0\|_*$, the inequality \eqref{the2:ineq} becomes
    \begin{align*}
        (\sigma\vee\zeta)\sqrt{\frac{G d\log d}{n}} \frac{\|L_0\|_*}{\sqrt{m_1m_2}} + \frac{a_1+\zeta\sqrt{m_1m_2}}{a_1+1}\frac{(\sigma \vee \zeta)}{\sqrt{m_1m_2}} \sqrt{\frac{G d\log d}{n}} \frac{a_1+1}{a_1}\|L_0\|_* \\
        \lesssim (\sigma\vee\zeta)\gamma\sqrt{\frac{G d\log d}{n}}.
    \end{align*}
    Thus, we conclude that
    \begin{align*}
        \frac{\|\hat{L} - L_0\|_F^2}{m_1m_2} \lesssim \nu\beta(\sigma\vee \zeta) \gamma\sqrt{\frac{Gd\log d}{n}} + \nu \beta \Delta_{S_0}(N,m_1,m_2) + \nu\zeta^2 \sqrt{\frac{\log d}{n}} + \frac{4\zeta^2 s_0}{m_1m_2} ,
    \end{align*}
    which is the desired result. 
\end{proof}

\begin{proof}[\textbf{Proof of Corollary \ref{coro:exact_lowrank}}]
We discuss three scenarios as listed in Corollary \ref{coro:exact_lowrank} individually.

Scenario \textit{(i)}:
when $a_1^{-1} = \mathcal{O}\left(\left(\zeta\sqrt{m_1m_2}\right)^{-1}\right)$, the second term in \eqref{general_L_bound}, namely $\lambda_1^2 \frac{(a_1+1)^2m_1m_2}{a_1^2}r_0$, becomes smaller than the first term.  
Then, we have 
\begin{align*}
   & \frac{\|\hat{L} - L_0\|_F^2}{m_1m_2} \\
    \lesssim &\nu\beta^2 \left(\frac{a_1+\zeta\sqrt{m_1m_2}}{a_1+1}\frac{(\sigma \vee \zeta)}{\sqrt{m_1m_2}} \sqrt{\frac{Gd\log d}{n}}\right)^2 \frac{(a_1+1)^2m_1m_2}{a_1^2}r_0 \\
    &\qquad + \nu \beta \Delta_{S_0}(N,m_1,m_2) + \nu\zeta^2 \sqrt{\frac{\log d}{n}} + \frac{4\zeta^2 s_0}{m_1m_2}\\
    \lesssim& \nu\beta^2(\sigma \vee \zeta)^2 \frac{(a_1+\zeta\sqrt{m_1m_2})^2}{a_1^2}r_0 \frac{Gd\log d}{n}+ \nu \beta \Delta_{S_0}(N,m_1,m_2) + \nu\zeta^2 \sqrt{\frac{\log d}{n}} + \frac{4\zeta^2 s_0}{m_1m_2}\\
    \lesssim &\nu\beta^2(\sigma \vee \zeta)^2 r_0 \frac{Gd\log d}{n}+ \nu \beta \Delta_{S_0}(N,m_1,m_2) + \nu\zeta^2 \sqrt{\frac{\log d}{n}} + \frac{4\zeta^2 s_0}{m_1m_2}.
\end{align*}

By comparing the order of the components in the first term of \eqref{general_L_bound} which is $\sqrt{\frac{d\log d}{n}}$ and $\frac{(a_1+\zeta\sqrt{m_1m_2})^2}{a_1^2} \frac{d\log d}{n}$, we can further refine the admissible range for $a_1$, leading to the results below.

Scenario \textit{(ii)}: when $a_1^{-1} = \mathcal{O}\left(\left(\sqrt{m_1m_2}\left(d\log d / n\right)^{1/4}\right)^{-1}\right)$, we can verify that 
\begin{align}
    \left(\frac{a_1+\zeta\sqrt{m_1m_2}}{a_1}\right)^2 \frac{d\log d}{n} \lesssim \sqrt{\frac{d\log d}{n}},
\end{align}
thus leading to
\begin{multline}
    \frac{\|\hat{L} - L_0\|_F^2}{m_1m_2} 
    \lesssim \nu\beta^2(\sigma \vee \zeta)^2 \left(\frac{a_1+\zeta\sqrt{m_1m_2}}{a_1}\right)^2r_0 \frac{Gd\log d}{n}+ \nu\zeta^2 \sqrt{\frac{\log d}{n}} \\
    + \nu \beta \Delta_{S_0}(N,m_1,m_2)+ \frac{4\zeta^2 s_0}{m_1m_2}.
\end{multline}

Scenario \textit{(iii)}: when $a_1 = \mathcal{O}\left(\sqrt{m_1m_2}\left(d\log d / n\right)^{1/4}\right)$, we have
\begin{align}
    \left(\frac{a_1+\zeta\sqrt{m_1m_2}}{a_1}\right)^2 \frac{d\log d}{n} \gtrsim \sqrt{\frac{d\log d}{n}},
\end{align}
and hence we get
\begin{align*}
    \frac{\|\hat{L} - L_0\|_F^2}{m_1m_2} \leq \nu\beta(\sigma\vee \zeta) r_0\sqrt{\frac{Gd\log d}{n}} +  \nu \beta \Delta_{S_0}(N,m_1,m_2) + \nu\zeta^2 \sqrt{\frac{\log d}{n}} + \frac{4\zeta^2 s_0}{m_1m_2}.
\end{align*}
\end{proof}

\subsection{Proof of Theorem \ref{thm:sparsity} and Corollary \ref{cor:sparsity}}\label{S4}

\begin{proof}[\textbf{Proof of Theorem \ref{thm:sparsity}}]
    Assume the support of $S_0$ has cardinality $s_0$ and the support of $\hat{S}$ has cardinality $\hat{s}$. Let $s'$ denote the number of indices $(k, l) \notin \tilde{\mathcal{I}}$ for which $\hat{S}_{kl} \neq 0$. Then, the total support size satisfies $\hat{s} = s_0 + s'$.
    
   It follows from $Q(\hat{L}, \hat{S}) \leq Q(\hat{L}, S_0)$ and the inequality \eqref{basic_ineq_for_S} that
    \begin{align*}
    \frac{2}{N} \sum_{i=1}^{N} \langle T_i, \hat{L} - L_0 \rangle \langle T_i, \hat{S} - S_0 \rangle \leq \frac{2\sigma}{N} \sum_{i=1}^{N} \langle T_i \xi_i , \hat{S} - S_0  \rangle  + \lambda_2\left(\phi_{a_2} (S_0) - \phi_{a_2} (\hat{S})\right),
    \end{align*}
    which implies
    \begin{align*}
        \lambda_2\phi_{a_2}(\hat{S}) &\leq \frac{2}{N} \sum_{i=1}^{N} |\langle T_i, \hat{L} -L_0 \rangle| |\langle T_i, \hat{S} -S_0 \rangle| + \frac{2\sigma}{N} \sum_{i=1}^{N} \langle T_i \xi_i , \hat{S} -S_0  \rangle + \lambda_2\phi_{a_2}(S_0) \notag\\
        &\leq \frac{1}{N} \sum_{i=1}^N \langle T_i, \hat{L} - L_0 \rangle^2 + \frac{1}{N} \sum_{i=1}^N \langle T_i, \hat{S} - S_0 \rangle^2 + \frac{2\sigma}{N} \sum_{i \in \Omega} \langle T_i \xi_i , \hat{S} -S_0  \rangle \notag\\
        &\qquad+ \frac{2\sigma}{N} \sum_{i \in \tilde{\Omega}} \langle T_i \xi_i , \hat{S} -S_0  \rangle + \lambda_2\phi_{a_2}(S_0) \notag\\
        &\lesssim \frac{1}{N} \sum_{i=1}^N \langle T_i, \hat{L} - L_0 \rangle^2 + \frac{1}{N} \sum_{i=1}^N \langle T_i, \hat{S} - S_0 \rangle^2 \notag\\
        &\quad+ \|\Sigma\|_\infty\sqrt{2s_0 + s'}\|\hat{S} - S_0\|_F + (\sigma\vee\zeta)^2\frac{\log d}{N} + \lambda_2\phi_{a_2}(S_0)\notag \\
        &\lesssim \Delta_{L_0}(n, m_1, m_2) + \Delta_{S_0}(N, m_1, m_2) + \frac{\log d}{N}(\sqrt{s_0}+\sqrt{s'})\Delta_S + \lambda_2\phi_{a_2}(S_0) . \notag\\
    \end{align*}
    Therefore, we obtain
    \begin{align}
        \phi_{a_2}(\hat{S}) &\lesssim \lambda_2^{-1} \Delta_{L_0}(n, m_1, m_2) + \Delta_{S_0}(N, m_1, m_2) + \phi_{a_2}(S_0) \notag\\
        &\qquad + \lambda_2^{-1}(\sqrt{s_0}+\sqrt{s'})\sqrt{N\Delta_{S_0}(N, m_1, m_2)} \frac{\log d}{N}.
    \end{align}
    
    Since $\phi_{a_2}$ is an increasing function,  when $a_2 = {\scriptstyle \mathcal{O}}\left(\lambda_2(\Delta_{L_0}(n, m_1, m_2) + \Delta_{S_0}(N, m_1, m_2))^{-1/2}\right)$, we obtain $|\hat{S}_{kl}| \geq c'\sqrt{\lambda_2(a_2^2+a_2)}(\Delta_{L_0}(n, m_1, m_2) + \Delta_{S_0}(N, m_1, m_2))^{-1/4}$ for $(k, l) \notin \tilde{\mathcal{I}}$, by Lemma \ref{lemma_for_sparsity}. We can further get
    \begin{align*}
        \phi_{a_2}(\hat{S}) &= \sum_{i,j} \frac{(a_2+1) |\hat{S}_{ij}|}{a_2+|\hat{S}_{ij}|} \geq \sum_{(k,l) \notin \mathcal{\tilde{I}}} \frac{(a_2+1) |\hat{S}_{kl}|}{a_2+|\hat{S}_{kl}|}\\
        &\gtrsim s' \frac{(a_2 + 1) \sqrt{\lambda_2(a_2^2+a_2)}(\Delta_{L_0}(n, m_1, m_2) + \Delta_{S_0}(N, m_1, m_2))^{-1/4}}{a_2 +\sqrt{\lambda_2(a_2^2+a_2)}(\Delta_{L_0}(n, m_1, m_2) + \Delta_{S_0}(N, m_1, m_2))^{-1/4}},
    \end{align*}
    then,
    \begin{align}
        s' &\lesssim \frac{a_2+\sqrt{\lambda_2(a_2^2+a_2)}(\Delta_{L_0}(n, m_1, m_2) + \Delta_{S_0}(N, m_1, m_2))^{-1/4}}{\sqrt{\lambda_2(a_2^2+a_2)}(\Delta_{L_0}(n, m_1, m_2) + \Delta_{S_0}(N, m_1, m_2))^{-1/4}} \frac{1}{a_2+1} \times\notag\\
        &\qquad\qquad \bigg\{ \lambda_2^{-1} (\Delta_{L_0}(n, m_1, m_2) + \Delta_{S_0}(N, m_1, m_2)) + \lambda_2^{-1}(\sqrt{s_0}+\sqrt{s'})\Delta_S \frac{\log d}{N} + \phi_{a_2}(S_0) \bigg\} \notag\\
        &\lesssim \left(\frac{a_2}{\sqrt{\lambda_2(a_2^2+a_2)}(\Delta_{L_0}(n, m_1, m_2) + \Delta_{S_0}(N, m_1, m_2))^{-1/4}} + 1\right) \frac{1}{a_2+1} \times\notag \\
        &\qquad\qquad \Bigg\{ \lambda_2^{-1} (\Delta_{L_0}(n, m_1, m_2) + \Delta_{S_0}(N, m_1, m_2)) + \lambda_2^{-1} \sqrt{s_0}\sqrt{N \Delta_{S_0}(N, m_1, m_2)} \frac{\log d}{N} \notag\\
        &\qquad\qquad\qquad + \phi_{a_2}(S_0)  + \sqrt{s'}\lambda_2^{-1} \sqrt{N\Delta_{S_0}(N, m_1, m_2)}\frac{\log d}{N} \Bigg\}. \label{ineq:s'-last}
    \end{align}

    For convenience, we denote the first term in \eqref{ineq:s'-last} by
    \begin{align*}
        \Theta_2 := \frac{a_2}{\sqrt{\lambda_2(a_2^2 + a_2)}\,(\Delta_{L_0}(n, m_1, m_2) + \Delta_{S_0}(N, m_1, m_2))^{-1/4}} + 1,
    \end{align*}
     which is less than 2, since $a_2 / \sqrt{\lambda_2(a_2^2+a_2)}(\Delta_{L_0}(n, m_1, m_2) + \Delta_{S_0}(N, m_1, m_2))^{-1/4} < 1$ when $a_2 = {\scriptstyle \mathcal{O}}\left(\lambda_2(\Delta_{L_0}(n, m_1, m_2) + \Delta_{S_0}(N, m_1, m_2))^{-1/2}\right)$.
Furthermore, we decompose the inequality \eqref{ineq:s'-last} into three groups:

    \begin{align}
     s' &\lesssim \frac{\Theta_2}{a_2+1}(\Delta_{L_0}(n, m_1, m_2) + \Delta_{S_0}(N, m_1, m_2))\lambda_2^{-1}, \notag\\
        s'&\lesssim \frac{\Theta_2}{a_2+1} \left(\lambda_2^{-1} \sqrt{s_0}\sqrt{N\Delta_{S_0}(N, m_1, m_2)} \frac{\log d}{N} + \phi_{a_2}(S_0)\right), \notag\\
        s' &\lesssim \left(\frac{\Theta_2}{a_2+1}\right)^2 \frac{\log^2 d}{N} \Delta_{S_0}(N, m_1, m_2) \lambda_2^{-2}. \notag
    \end{align}
    


Therefore, we obtain that
    \begin{multline}
        \hat{s} = s_0 + s' \lesssim s_0 +\frac{\Theta_2}{a_2+1} \max\Bigg\{ \lambda_2^{-1} \sqrt{s_0}\sqrt{N \Delta_{S_0}(N, m_1, m_2)} \frac{\log d}{N} + \phi_{a_2}(S_0), \\ \frac{\Theta_2}{a_2+1} \frac{\log^2 d}{N} \Delta_{S_0}(N, m_1, m_2) \lambda_2^{-2} + (\Delta_{L_0}(n, m_1, m_2) + \Delta_{S_0}(N, m_1, m_2))\lambda_2^{-1} \Bigg\}. \label{sparsity_bound}
    \end{multline}

    
    Moreover, taking $\lambda_2 \asymp \Delta_{L_0}(n, m_1, m_2) + \Delta_{S_0}(N, m_1, m_2) $ implies that $ \lambda^{-1} \log d / N = {\scriptstyle \mathcal{O}}(1) $ and $\sqrt{s_0}\sqrt{N \Delta_{S_0}(N, m_1, m_2)} = {\scriptstyle \mathcal{O}}(s_0)$. Then, the first term of $s'$ in \eqref{sparsity_bound} becomes ${\scriptstyle \mathcal{O}}(s_0) $ with high probability when $ a_2 $ is sufficiently small, and the second term is of a constant order as well. Combining both, we obtain $ s' = {\scriptstyle \mathcal{O}}(s_0) $ with high probability and hence we conclude that $ \hat{s} = s' + s_0 = \mathcal{O}_p(s_0) $.

    
    
\end{proof}

\begin{proof}[\textbf{Proof of Corollary \ref{cor:sparsity}}]
    Combining the results from Scenario (i) in Corollary \ref{coro:exact_lowrank}, Theorem \ref{thm:gen_upp_S} and Theorem \ref{thm:sparsity}, by taking $a_1^{-1} = \mathcal{O}((\sqrt{m_1m_2})^{-1})$, $\lambda_1 \asymp \frac{(\sigma \vee \zeta)}{\sqrt{m_1m_2}}  \sqrt{\frac{Gd\log d}{n}}$, $\lambda_2 \asymp  (\sigma\vee\zeta)\frac{\log d}{N}$, and $a_2 = {\scriptstyle \mathcal{O}}\left(\sqrt{\frac{d\log d}{n}}\right)$, we obtain that $\|\hat{S}\|_0 = \mathcal{O}_p(s_0)$ and
    \begin{align*}
        \frac{\|\hat{L} - L_0\|_F^2}{m_1m_2}+ \frac{\|\hat{S} - S_0\|_F^2}{m_1m_2} &\lesssim \beta^2(\sigma\vee\zeta)^2 r_0 \frac{Gd\log d}{n} + \zeta^2 \sqrt{\frac{\log d}{n}} + \frac{\zeta^2s_0}{m_1m_2} \\
        &\qquad\qquad+ \beta \Delta_{S_0}(N,m_1,m_2)  + \frac{N}{m_1m_2} \Delta_{S_0}(N,m_1,m_2)\\
        &= \mathcal{O}_p\left(r_0\frac{d\log d}{n} + \frac{s_0\log d}{m_1m_2}\right).
    \end{align*}
\end{proof}

\subsection{Proof of Theorem \ref{thm:lowrankness} and Corollary \ref{cor:LR}}\label{S5}

\begin{proof}[\textbf{Proof of Theorem \ref{thm:lowrankness}}]
    It follows from the  inequality: $Q(\hat{L}, \hat{S}) \leq Q(L_0, \hat{S})$ together with \eqref{basic_ineq_L} that
    \begin{align}
        \frac{2}{N} \sum_{i=1}^{N} \langle T_i, \hat{L} -L_0 \rangle \langle T_i, \hat{S} -S_0 \rangle 
        \leq \frac{2\sigma}{N} \sum_{i=1}^{N} \langle T_i \xi_i , \hat{L} -L_0  \rangle  + \lambda_1 \Phi_{a_1} (L_0) - \lambda_1 \Phi_{a_1} (\hat{L}).
    \end{align}
   Let  $\hat{r}$ be the rank of $\hat{L}$. By a series of calculations,
    \begin{align*}
        \lambda_1 \Phi_{a_1} (\hat{L}) &\leq \frac{2}{N} \sum_{i=1}^{N} |\langle T_i, \hat{L} -L_0 \rangle| |\langle T_i, \hat{S} -S_0 \rangle| + \frac{2\sigma}{N} \sum_{i=1}^{N} \langle T_i \xi_i , \hat{L} -L_0  \rangle  + \lambda_1 \Phi_{a_1} (L_0) \notag\\
        &\leq \frac{1}{N} \sum_{i=1}^N \langle T_i, \hat{L} - L_0 \rangle^2 + \frac{1}{N} \sum_{i=1}^N \langle T_i, \hat{S} - S_0 \rangle^2  \\
        &\qquad + \frac{2\sigma}{N} \sum_{i \in \Omega} \langle T_i \xi_i , \hat{L} -L_0  \rangle + \frac{2\sigma}{N} \sum_{i \in \tilde{\Omega}} \langle T_i \xi_i , \hat{L} -L_0  \rangle + \lambda_1 \Phi_{a_1} (L_0) \\
        &\lesssim \frac{1}{N} \sum_{i=1}^N \langle T_i, \hat{L} - L_0 \rangle^2 + \frac{1}{N} \sum_{i=1}^N \langle T_i, \hat{S} - S_0 \rangle^2  \\
        &\qquad + 2\|\Sigma\|\sqrt{\rank(\hat{L} - L_0)}\|\hat{L} - L_0\|_F + \lambda_1 \Phi_{a_1} (L_0) \\
        &\lesssim \Delta_{L_0}(n, m_1, m_2) + \Delta_{S_0}(N, m_1, m_2) \\
        &\qquad+ \sqrt{\frac{Gd\log d}{Nm_1m_2}} (\sqrt{r_0}+\sqrt{\hat{r}}) \sqrt{m_1m_2\Delta_{L_0}(n, m_1, m_2)} + \lambda_1\Phi_{a_1}(L_0),
    \end{align*}
     we obtain 
    \begin{align*}
     \Phi_{a_1} (\hat{L}) 
        &\lesssim \lambda_1^{-1}(\Delta_{L_0}(n, m_1, m_2) + \Delta_{S_0}(N, m_1, m_2)) \\
        &\qquad+ \lambda_1^{-1}\sqrt{\frac{Gd\log d}{N}} (\sqrt{r_0}+\sqrt{\hat{r}})\sqrt{\Delta_{L_0}(n, m_1, m_2)} + \Phi_{a_1}(L_0).
    \end{align*}

    When $a_1 = {\scriptstyle \mathcal{O}} \left((a_1+1)\lambda_1\left((\Delta_{L_0}(n, m_1, m_2) + \Delta_{S_0}(N, m_1, m_2))^2 Gd\log d/( nm_1m_2)\right)^{-1/4}\right)$,  Lemma \ref{lemma_for_lowrankness} implies that the smallest singular value of $\hat{L}$ is at least $$c \sqrt{\lambda_1(a_1^2+a_1)}\left((\Delta_{L_0}(n, m_1, m_2) + \Delta_{S_0}(N, m_1, m_2))^2 Gd\log d/( nm_1m_2)\right)^{-1/8}.$$ 
    It further follows from the increasing property of function $\Phi_{a_1}(\cdot)$ that 
    \begin{align*}
        \Phi_{a_1}(\hat{L}) &= \sum_{j=1}^m\frac{(a_1+1)\sigma_j}{a_1+\sigma_j} \\
        &\gtrsim \hat{r}\frac{(a_1+1)\sqrt{\lambda_1(a_1^2+a_1)}\left((\Delta_{L_0}(n, m_1, m_2) + \Delta_{S_0}(N, m_1, m_2))^2 Gd\log d/( nm_1m_2)\right)^{-1/8}}{a_1 + \sqrt{\lambda_1(a_1^2+a_1)}\left((\Delta_{L_0}(n, m_1, m_2) + \Delta_{S_0}(N, m_1, m_2))^2 Gd\log d/( nm_1m_2)\right)^{-1/8}}.
    \end{align*}
    Therefore, we have
    \begin{align}
        \hat{r} &\lesssim \frac{a_1 + \sqrt{\lambda_1(a_1^2+a_1)}\left((\Delta_{L_0}(n, m_1, m_2) + \Delta_{S_0}(N, m_1, m_2))^2 Gd\log d/( nm_1m_2)\right)^{-1/8}}{\sqrt{\lambda_1(a_1^2+a_1)}\left((\Delta_{L_0}(n, m_1, m_2) + \Delta_{S_0}(N, m_1, m_2))^2 Gd\log d/( nm_1m_2)\right)^{-1/8}}\frac{1}{a_1+1}  \notag\\ 
        &\qquad \times \Bigg\{ \lambda_1^{-1}(\Delta_{L_0}(n, m_1, m_2) + \Delta_{S_0}(N, m_1, m_2)) \notag\\
         &\qquad\qquad + \lambda_1^{-1}\sqrt{\frac{Gd\log d}{N}} (\sqrt{r_0}+\sqrt{\hat{r}}) \sqrt{\Delta_{L_0}(n, m_1, m_2)}+ \Phi_{a_1}(L_0)
        \Bigg\} \notag\\
        &\lesssim \left(\frac{a_1}{\sqrt{\lambda_1(a_1^2+a_1)}\left((\Delta_{L_0}(n, m_1, m_2) + \Delta_{S_0}(N, m_1, m_2))^2 d\log d/( nm_1m_2)\right)^{-1/8}} + 1\right) \frac{1}{a_2+1} \notag\\
        &\qquad \times\Bigg\{ \lambda_1^{-1}(\Delta_{L_0}(n, m_1, m_2) + \Delta_{S_0}(N, m_1, m_2)) + \lambda_1^{-1}\sqrt{\frac{Gd\log d}{N}} \sqrt{r_0} \sqrt{\Delta_{L_0}(n, m_1, m_2)} \notag\\
        &\qquad\qquad + \lambda_1^{-1}\sqrt{\frac{Gd\log d}{N}} \sqrt{\hat{r}} \sqrt{\Delta_{L_0}(n, m_1, m_2)} + \Phi_{a_1}(L_0) \Bigg\}. \label{ineq:hat_r}
    \end{align}

    For convenience, we define:
    \begin{align*}
        \Theta_1 := \frac{a_1}{\sqrt{\lambda_1(a_1^2+a_1)}\left((\Delta_{L_0}(n, m_1, m_2) + \Delta_{S_0}(N, m_1, m_2))^2 Gd\log d/( nm_1m_2)\right)^{-1/8}} + 1.
    \end{align*}
When $a_1 = {\scriptstyle \mathcal{O}} \left((a_1+1)\lambda_1\left((\Delta_{L_0}(n, m_1, m_2) + \Delta_{S_0}(N, m_1, m_2))^2 Gd\log d/( nm_1m_2)\right)^{-1/4}\right)$, we can verify that $\Theta_1 < 2$. 
Then inequality \eqref{ineq:hat_r} can be simplified as,
     \begin{align}
         \hat{r} &\lesssim \frac{\Theta_1}{a_1+1} \Bigg\{ \lambda_1^{-1}(\Delta_{L_0}(n, m_1, m_2) + \Delta_{S_0}(N, m_1, m_2))
        + \lambda_1^{-1}\sqrt{\frac{Gd\log d}{N}} \sqrt{r_0} \sqrt{\Delta_{L_0}(n, m_1, m_2)} \notag\\
        &\qquad\qquad\qquad + \lambda_1^{-1}\sqrt{\frac{Gd\log d}{N}} \sqrt{\hat{r}} \sqrt{\Delta_{L_0}(n, m_1, m_2)} + \Phi_{a_1}(L_0) \Bigg\}, \notag 
     \end{align}
     which can be further split into 
     \begin{align}
         \hat{r}& \lesssim \frac{\Theta_1}{a_1+1} \lambda_1^{-1} (\Delta_{L_0}(n, m_1, m_2) + \Delta_{S_0}(N, m_1, m_2)), \notag\\
         \hat{r} &\lesssim \frac{\Theta_1}{a_1+1} \left(\lambda_1^{-1}\sqrt{\frac{Gd\log d}{N}} \sqrt{\Delta_{L_0}(n, m_1, m_2)} \sqrt{r_0} + \Phi_{a_1}(L_0)\right), \notag\\
         \hat{r}& \lesssim \frac{\Theta_1^2}{(a_1+1)^2} \lambda_1^{-2}\Delta_{L_0}(n, m_1, m_2)\frac{Gd\log d}{N}. \notag         
     \end{align}

     Therefore, we have
     \begin{multline}
         \hat{r} \lesssim \frac{\Theta_1}{a_1+1} \max\Bigg\{ \lambda_1^{-1}\sqrt{\frac{Gd\log d}{N} \Delta_{L_0}(n, m_1, m_2)} \sqrt{r_0} + \Phi_{a_1}(L_0), \\
         \lambda_1^{-1} (\Delta_{L_0}(n, m_1, m_2) + \Delta_{S_0}(N, m_1, m_2))+ \frac{\Theta_1}{a_1+1}\lambda_1^{-2}\Delta_{L_0}(n, m_1, m_2)\frac{Gd\log d}{N}\Bigg\}. \label{gen_hat_r_bound}
     \end{multline}
\end{proof}

\begin{proof}[\textbf{Proof of Corollary \ref{cor:LR}}]
  Taking $ a_1 ={\scriptstyle {\mathcal{O}} }\left((m_1 m_2)^{1/4}\right)$, $\lambda_1 \asymp \frac{(\sigma \vee \zeta)}{\sqrt{m_1m_2}} \frac{a_1 + \zeta\sqrt{m_1m_2}}{a_1+1} \sqrt{\frac{Gd\log d}{n}}$, $\lambda_2 \asymp (\sigma\vee\zeta) \frac{d\log d}{N}$ and $a_2 = {\scriptstyle \mathcal{O}}\left(\left(\frac{d\log d}{N}\right)^{1/4}\right)$, we combine the results from Scenario \textit{(iii)} 
    in Corollary \ref{coro:exact_lowrank}, Theorem \ref{thm:gen_upp_S} and Theorem \ref{cor:LR} to obtain $\|\hat{S}\|_0 = \mathcal{O}_p(s_0)$. Furthermore, the first term in \eqref{gen_hat_r_bound} is $\mathcal{O}_p(r_0)$ and the second term is $\mathcal{O}_p(1)$, which implies that $\hat{r} = \mathcal{O}_p(r_0)$. 
    In addition, we have
    \begin{align*} 
        \frac{\|\hat{L} - L_0\|_F^2}{m_1m_2}+ \frac{\|\hat{S} - S_0\|_F^2}{m_1m_2} &\lesssim \beta(\sigma\vee \zeta) r_0\sqrt{\frac{Gd\log d}{n}} + \zeta^2 \sqrt{\frac{\log d}{n}} + \frac{\zeta^2s_0}{m_1m_2} \\
        &\qquad\qquad+ \beta \Delta_{S_0}(N,m_1,m_2)  + \frac{N}{m_1m_2} \Delta_{S_0}(N,m_1,m_2)\\
        &= \mathcal{O}_p\left(r_0\sqrt{\frac{d\log d}{n}} + \frac{s_0\log d}{m_1m_2}\right).
    \end{align*}
       
\end{proof}

\bibliographystyle{unsrt}
\bibliography{jml_references}

\end{document}